\documentclass[11pt]{article}
\usepackage[utf8]{inputenc}

\title{Generalization Guarantees for Multi-Input Neural Operator Learning in Sobolev Spaces}

\author{
Yahong Yang\thanks{School of Mathematics, Georgia Institute of Technology,
686 Cherry Street, Atlanta, GA 30332-0160, USA.
Emails: \texttt{yyang3194@gatech.edu},
\texttt{weizhu@gatech.edu},
\texttt{wliao60@gatech.edu}}
\and
Zecheng Zhang\thanks{Department of Applied and Computational Mathematics and Statistics,
University of Notre Dame, Notre Dame, IN 46556, USA.
Email: \texttt{zzhang48@nd.edu}}
\and
Wei Zhu\footnotemark[1]
\and
Wenjing Liao\footnotemark[1]
\and
Hao Liu\thanks{Department of Mathematics, Hong Kong Baptist University,
FSC1202, Fong Shu Chuen Building, Hong Kong Baptist University,
Kowloon Tong, Hong Kong.
Email: \texttt{haoliu@hkbu.edu.hk}. Corresponding author.}
}
\usepackage[numbers]{natbib}
\usepackage{amsmath}
\usepackage{epsfig,amsthm}
\usepackage{amssymb,amsfonts}
\usepackage{dsfont}
\usepackage{subfigure}
\usepackage{indentfirst}
\usepackage{mathrsfs}
\usepackage{xcolor}
\usepackage{graphicx}
\usepackage{zymacros}
\usepackage{hyperref}
\usepackage[ruled,linesnumbered]{algorithm2e} 
\usepackage{appendix}
\usepackage{ulem}

\DeclareMathOperator*{\argmin}{arg\,min}
\newtheorem{setting}{Setting}

\allowdisplaybreaks

\date{}
\begin{document}
\maketitle
\begin{abstract}
We develop approximation and generalization error estimates for multi-input
neural operators, with the output error measured in Sobolev norms. In contrast
to standard operator-learning settings with a single input function, our
framework allows multiple input functions defined on possibly different
domains, with different dimensions and Sobolev regularities. The derived rates
explicitly quantify the contribution of each input space to the final error
bound. In particular, in the balanced regime, the approximation and
generalization rates are governed by the interaction between the input
dimensions, regularities, and Sobolev orders, while the dependence on the model
complexity retains a \(\log\log/\log\)-type structure. Our analysis provides a
general theoretical framework for multi-input operator learning, including
Sobolev training, and is applicable to operator learning problems arising from
partial differential equations and scientific computing.
\end{abstract}

\section{Introduction}

Operator learning aims to approximate mappings between infinite-dimensional
function spaces and has become an important tool for solving parametric families
of partial differential equations (PDEs). Prominent examples include DeepONet
\cite{lu2019deeponet,wang2021learning,liu2022deep,liu2024neural,lanthaler2022error},
Fourier neural operators (FNOs) \cite{li2020fourier}, and other structure-preserving operator-learning
architectures
\cite{li2020fourier,li2023fourier,kovachki2021universal,he2024mgno}.
Compared with classical solvers that must be repeatedly applied for each new
instance, a trained neural operator can approximate the solution map for new
input functions, source terms, boundary conditions, or physical parameters with
a single forward evaluation. This feature makes neural operators especially
useful in scientific computing tasks involving many-query or real-time
simulation of PDE models. {A simple example is the elliptic boundary value problem
\begin{equation}
-\Delta u+a(x)u=f,\qquad u|_{\partial\Omega}=g.\label{eq: PDEs}
\end{equation}
In this case, the solution operator depends on three inputs, namely the
coefficient \(a\), the source term \(f\), and the boundary condition \(g\), and
can therefore be written as
\(\mathcal G(a,f,g)=u\). This illustrates a basic feature of many PDE problems
in scientific computing: the solution often depends simultaneously on several
varying quantities. 
For instance, the coefficient \(a\) may encode material
properties or the underlying medium, the source term \(f\) may represent
external forcing, and the boundary condition \(g\) may also vary from one
instance to another. 
Thus, it is natural to study operator learning in a
multi-input setting, where the learned operator takes all of these quantities as
inputs rather than treating only one or two of them as variable.}

Motivated by the need to apply operator learning to more complex scientific
computing problems, multi-input operator learning has recently attracted
increasing attention
\cite{back2002universal, hu2025hybrid,weihs2026generalization,weihs2025deep,jin2022mionet,weihs2026multiple}.
For example in \cite{back2002universal}, the authors firstly proposed the universal approximation to an operator indexed by a function which is a two-input operator setting.
More general, the target operator depends on multiple input functions.
Precisely, one considers an operator
\begin{equation}
\mathcal G:\prod_{i=1}^{\lambda}\mathcal X_i\to \mathcal V,
\qquad \lambda\ge 2,
\label{eq:introG}
\end{equation}
and aims to construct a neural operator \(\mathcal G_{\boldsymbol\theta}\) that
approximates \(\mathcal G\) uniformly or in an appropriate statistical sense.
Universal approximation results for multi-input operator-learning architectures
have been established in
\cite{hu2025hybrid,back2002universal,jin2022mionet}, and both theoretical and
numerical studies have demonstrated the effectiveness of such frameworks.
However, quantitative scaling laws and error rates for multi-input neural
operators remain much less understood.

{Recent works~\cite{weihs2026generalization,weihs2025deep,weihs2026multiple}
established scaling laws for operator learning indexed by functions \cite{back2002universal}.
Their framework mainly focuses on the two-input setting and assumes
Lipschitz continuity in the \(L^\infty\) sense.
In this work, we analyze the general multiple-input operator
learning, where the number of inputs can be any integer greater than or equal to
two. 
In addition, what is important is we allow the inputs to lie in Sobolev spaces, so that the regularity of each input is explicitly reflected in the rate, making it
possible to quantify how the smoothness of different inputs influences the final
generalization rate and the corresponding neural network design. 
Finally, we
measure the output error in Sobolev norms rather than only in the
\(L^\infty\) norm, which is more natural and informative in scientific
computing applications. 
This paper aims to quantify precisely how input-wise dimensions and
regularities influence the final learning rate, and to design a neural operator
architecture whose complexity reflects this structure. 

These extensions are
particularly important for scientific computing applications. First, many
partial differential equations depend on more than two inputs. For example, a
typical boundary value problem \eqref{eq: PDEs} may involve a coefficient \(a\), a source term
\(f\), and a boundary condition \(g\). Therefore, the two-input setting is not
sufficient for capturing many practically relevant PDE models. Second,
different inputs often play fundamentally different roles in the underlying
equation, and may possess very different regularities and intrinsic dimensions.
As a result, the network structures associated with different inputs should not be
treated uniformly. 
Understanding how to design neural network architectures that
reflect the dimension and regularity of each input is thus an important and
challenging problem. In practice, such design choices are often made only
through extensive trial and error, whereas our framework provides a
mathematically grounded perspective on this issue. Finally, if operator learning is to be applied to PDE problems in scientific
computing, it is essential to measure the error in Sobolev norms rather than
only in \(L^\infty\). Even for weak solutions, one often needs derivative information, and hence the extension to Sobolev norms is both
natural and necessary. 
This is also closely related to a very active direction
in scientific machine learning, usually referred to as Sobolev training
\cite{czarnecki2017sobolev,vlassis2021sobolev,vlassis2020geometric,srinivas2018knowledge,hill2026geometric,goswami2022physics}.}

We first state an informal version of our main approximation result; the full
statement is given in Corollary~\ref{cor:more-balanced-main-final-functionL}.

\begin{theorem}[Informal Sobolev approximation result]
\label{thm:inform-approx}
Let \(\ell\in\{0,1\}\). Let
\(\mathcal X_i\subset W^{n_i,\infty}([-1,1]^{d_i})\),
\(i=1,\ldots,\lambda\), be uniformly bounded input classes, and let
\(\mathcal G:\prod_{i=1}^{\lambda}\mathcal X_i
\to W^{\ell,\infty}(\Omega_{\lambda+1})\) be a \(\lambda\)-input operator.
Assume that \(\mathcal G\) is separately Lipschitz continuous with respect to
the input \(L^\infty\)-norms, and that its output
\(W^{n_{\lambda+1},\infty}(\Omega_{\lambda+1})\)-norm is uniformly controlled,
with \(n_{\lambda+1}>\ell\). Then there exists a ReLU neural operator
\(\mathcal G_{\boldsymbol\theta}\) with \(N_{\mathrm{tot}}\) trainable
parameters such that, up to lower-order logarithmic factors,
\[
\sup_{\prod_{i=1}^{\lambda}\mathcal X_i}
\left\|
\mathcal G(f_1,\ldots,f_\lambda)
-
\mathcal G_{\boldsymbol\theta}(f_1,\ldots,f_\lambda)
\right\|_{W^{\ell,\infty}(\Omega_{\lambda+1})}
\lesssim
\left(
\frac{\log N_{\mathrm{tot}}}{\log\log N_{\mathrm{tot}}}
\right)^{-\frac{1}{Q_{\max}}}
(\log\log N_{\mathrm{tot}})^{\sum_{i=1}^{\lambda}d_i},
\]
where \(Q_{\max}:=\max_{1\le i\le\lambda} d_i/n_i\).
\end{theorem}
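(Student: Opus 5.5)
We plan to prove the statement by an encoder–approximator–decoder construction applied inductively across the inputs, combined with ReLU Sobolev approximation results for finite-dimensional functions.

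\textbf{Step 1 (output discretization).} Since \(\sup\|\mathcal G(f_1,\ldots,f_\lambda)\|_{W^{n_{\lambda+1},\infty}}\) is uniformly bounded, we fix a uniform grid on \(\Omega_{\lambda+1}\) with \(M_{\lambda+1}^{d_{\lambda+1}}\) points and use a piecewise-polynomial (averaged Taylor / partition of unity) quasi-interpolant \(\Pi\). This gives
\[
\|\mathcal G-\Pi\mathcal G\|_{W^{\ell,\infty}}\lesssim M_{\lambda+1}^{-(n_{\lambda+1}-\ell)} .
\]
The operator is thereby reduced to finitely many scalar functionals \(f\mapsto \mathcal G(f)(y_j)\) (or local averages), each multiplied by a ReLU-representable basis function. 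For \(\ell=1\), the derivative of the basis must be controlled, which introduces the standard factor \(M_{\lambda+1}^{\ell}\) against the coefficient error.

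\textbf{Step 2 (input encoding).} For each \(i\), we sample \(f_i\) on a grid of \(m_i^{d_i}\) points and use the same quasi-interpolation, so that
\[
\|f_i-P_if_i\|_{L^\infty}\lesssim m_i^{-n_i}.
\]
By separate Lipschitz continuity, replacing \(f_i\) by \(P_if_i\) costs \(\sum_i m_i^{-n_i}\). Each scalar functional is then a Lipschitz function of \(D=\sum_i m_i^{d_i}\) real variables on a bounded cube.

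\textbf{Step 3 (finite-dimensional approximation).} We approximate this Lipschitz function in \(D\) variables by a ReLU network. The cost is \(\sim \varepsilon^{-D}\) parameters (e.g.\ by a piecewise-linear interpolant on a grid of width \(\varepsilon\)), or better via the Shen–Yang–Zhang style bound \(\omega(N^{-2/D})\) with constants exponential in \(D\). Let \(N_{\mathrm{tot}}\approx M_{\lambda+1}^{d_{\lambda+1}}\cdot \varepsilon^{-D}\). Balancing gives \(\log N_{\mathrm{tot}}\approx D\log(1/\varepsilon)\). Choosing \(m_i^{-n_i}\approx\varepsilon\) for every \(i\) (the balanced regime) yields \(m_i\approx\varepsilon^{-1/n_i}\) and \(D\approx\sum_i\varepsilon^{-d_i/n_i}\approx \varepsilon^{-Q_{\max}}\). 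Hence
\[
\log N_{\mathrm{tot}}\approx \varepsilon^{-Q_{\max}}\log(1/\varepsilon),
\]
which inverts to
\[
\varepsilon\approx\left(\frac{\log N_{\mathrm{tot}}}{\log\log N_{\mathrm{tot}}}\right)^{-1/Q_{\max}} .
\]
The factor \((\log\log N_{\mathrm{tot}})^{\sum d_i}\) arises from the polynomial-in-\(D\) constants and from the grid-size rounding/Lipschitz constants \(\sqrt D\)-type in passing between \(\ell^\infty\) norms of samples and the piecewise-linear network. The output resolution \(M_{\lambda+1}\) contributes only logarithmically once chosen polynomially in \(1/\varepsilon\).

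\textbf{Main obstacle.} The hard part is ensuring that the Sobolev (\(\ell=1\)) output error does not blow up. The ReLU network approximating the coefficients must be composed with a basis whose derivatives scale like \(M_{\lambda+1}\), so the coefficient accuracy must be \(\varepsilon M_{\lambda+1}^{-1}\). This is absorbed only logarithmically because \(M_{\lambda+1}\) is polynomial in \(1/\varepsilon\). Tracking the exact log-log exponent rigorously is the delicate bookkeeping step. To do it, we would first verify the \(\ell=0\) case, then handle \(\ell=1\) via the product structure \(\sum_j c_j(f)\phi_j(y)\) with ReLU-approximated multiplications, using the known \(W^{1,\infty}\) bounds for product networks.
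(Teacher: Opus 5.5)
Your route is correct in outline, but it differs genuinely from the paper's. The paper (Theorem~\ref{thm:main-final-functionL} and Corollary~\ref{cor:more-balanced-main-final-functionL} with $k_i=\alpha_i=\beta_i=0$) proceeds in three stages. It encodes the inputs by global Chebyshev interpolation. It then removes the inputs one at a time through separate partition-of-unity expansions in $\mathcal D_{m_i}f_i$ (Propositions~\ref{prop:step2-g-coefficient} and~\ref{prop:step3-f-coefficient}). Only at the end does it approximate the spatial coefficients $e_{h,s}(\vx)=\mathcal G(\mathcal P_{\bar m}\vw_h,\mathcal P_m\vz_s)(\vx)$ by a shared local ReLU trunk basis. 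You instead fix the output basis first and do one joint Lipschitz approximation in $D=\sum_i m_i^{d_i}$ variables. This is exactly the trunk-first alternative the paper's conclusion mentions.

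The rates agree because a grid in the concatenated variables is the product of the per-input grids, so $D\log(1/\varepsilon)$ matches $\sum_i\log J_i$. Your version has three advantages:
\begin{itemize}
\item It is simpler.
\item It needs output regularity only on $\prod_i\mathcal X_i$. The paper uses Assumption~\ref{assump:G-output-regularity} at reconstructed inputs, which produces $R^{(n_3)}_{\bar m,m}$.
\item With a cellwise sample-based interpolant such as local Lagrange interpolation, the Lebesgue constants stay bounded, so you get the bound without the $(\log\log N_{\mathrm{tot}})^{\sum_i d_i}$ factor. Note that averaged Taylor polynomials are not determined by point samples, so they cannot serve as the input encoder.
\end{itemize}
Your explanation of that log-log factor is off. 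In the paper it comes from the Chebyshev Lebesgue constants $(\log M_i)^{d_i}$; polynomial-in-$D$ constants only add $O(\log D)$ to $\log N_{\mathrm{tot}}$. Since the factor only weakens the bound, you need not reproduce it. In return, the paper's route yields the product-of-branches architecture with per-input ranks $\log J_i\asymp M_i\log M_i$, explicit depth and weight bounds reused in the generalization analysis, and coverage of $k_i,\alpha_i,\beta_i>0$.

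One point needs care when $\ell=1$. The factor $M_{\lambda+1}$ from $\sup_{\vx}\sum_j|\nabla\phi_j(\vx)|$ can be absorbed logarithmically only in the Step~3 network error. There the accuracy $\varepsilon'=\varepsilon/M_{\lambda+1}$ enters only through $\log(1/\varepsilon')$. It must not multiply the input-discretization error. Because $D$ enters $\log N_{\mathrm{tot}}$ linearly, demanding $m_i^{-n_i}\approx\varepsilon/M_{\lambda+1}$ would degrade the exponent $-1/Q_{\max}$ to roughly $-(n_{\lambda+1}-1)/(n_{\lambda+1}Q_{\max})$.

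To justify your claimed cost $\sum_i m_i^{-n_i}$ in $W^{1,\infty}$, write $\Pi\mathcal G(f)-\Pi\mathcal G(Pf)=\Pi[\mathcal G(f)-\mathcal G(Pf)]$. Then use the $W^{1,\infty}$-stability $\|\Pi u\|_{W^{1,\infty}}\le C\|u\|_{W^{1,\infty}}$ of the output quasi-interpolant, which holds for the partition-of-unity averaged-Taylor construction and for continuous cellwise Lagrange interpolation. Combine this with the fact that the Lipschitz hypothesis is measured in $W^{\ell,\infty}$. The paper avoids the issue by placing the trunk last: $\partial^\gamma_{\vx}$ then falls on exact evaluations of $\mathcal G$, and the branch sums are controlled by their partition-of-unity bounds.
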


The rate above shows that the final approximation complexity is
controlled by the most difficult input space. More precisely, an input space
\(\mathcal X_i\) contributes through the effective complexity \(d_i/n_i\). If
\(\mathcal X_i\) has a low-dimensional domain or high regularity, then
\(d_i/n_i\) is small. Such an input does not change the leading exponent unless
it becomes the largest among all input complexities. Conversely, an input with
large dimension or low regularity may dominate the maximum \(Q_{\max}\), thereby
determining the final approximation rate.

The proof also suggests how the neural operator should be constructed. The
architecture has a shared branch--trunk form
\begin{equation}
\begin{aligned}
\mathcal G_{\boldsymbol\theta}(f_1,\ldots,f_\lambda)(\boldsymbol{x})
:=
\sum_{s_1=1}^{J_1}\cdots
\sum_{s_\lambda=1}^{J_\lambda}
\sum_{p=1}^{P}
e_{s_1,\ldots,s_\lambda,p}
\prod_{i=1}^{\lambda}
\mathcal B_{i,s_i}(\mathcal D_{m_i}f_i)
\,\mathcal T_p(\boldsymbol{x}).
\end{aligned}
\label{eq:1lambda-shared-architecture}
\end{equation}
Here \(\mathcal D_{m_i}f_i\) denotes the finite-dimensional discretization of
the \(i\)-th input function, \(\mathcal B_{i,s_i}\) are input-wise branch
networks, and \(\mathcal T_p\) are shared trunk networks. A schematic diagram of
this neural-operator architecture is shown in Fig.~\ref{fig:NN-architecture}.

\begin{figure}[htbp]
    \centering

    \includegraphics[width=0.97\linewidth]{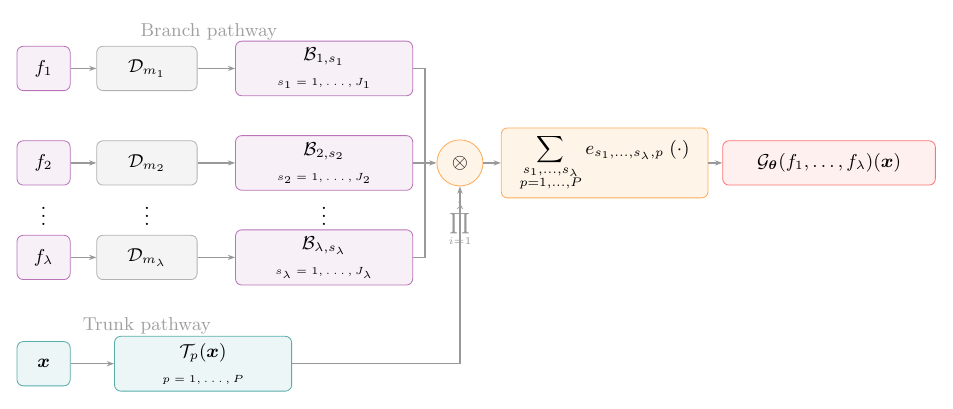}
    \caption{Schematic illustration of the shared branch--trunk architecture
    for the \(\lambda\)-input neural operator. Each input function \(f_i\) is
    first discretized by \(\mathcal D_{m_i}\), then passed through an
    input-wise branch network \(\mathcal B_{i,s_i}\). The branch outputs are
    multiplied, combined with the shared trunk network \(\mathcal T_p\), and
    summed over the indices \(s_1,\ldots,s_\lambda,p\).}
    \label{fig:NN-architecture}
\end{figure}
If the input space \(\mathcal X_i\) has high dimension or low regularity, then a
larger discretization level \(m_i\) is needed to control the discretization
error. This, in turn, requires a larger branch rank \(J_i\). In the balanced
construction, up to logarithmic factors,
\[
(m_i+1)^{d_i}
\asymp
\varepsilon^{-d_i/n_i},
\qquad
\log J_i
\asymp
(m_i+1)^{d_i}\log (m_i+1).
\]
Thus, the derived rate not only quantifies the approximation complexity, but
also provides guidance on how to allocate network capacity across different
input functions. A more detailed discussion is given in
Remark~\ref{rmk:bla-dis}.

Using the approximation error estimates established in
Sec.~\ref{sec:approx}, together with explicit bounds on the network size and
parameter magnitudes, we derive generalization error estimates for the proposed
multi-input neural operator class in Sec.~\ref{sec:gen}. The analysis is carried
out under a hierarchical sampling setting: for each outer input, one observes
multiple inner input samples and spatial evaluation points. Since samples
sharing the same outer input are not fully independent at the outer level, the
dominant statistical error is controlled by the number of outer samples. The proof follows the empirical-process strategy of \cite{liu2024neural}, but
with an additional difficulty caused by Sobolev training. In our setting, the
loss involves derivative observations, and the derivative of the ReLU activation
is not Lipschitz continuous. Therefore, the standard parameter-perturbation
covering argument is not directly applicable to the derivative classes. To
overcome this issue, we use uniform empirical covering numbers and bound them
through pseudo-dimension estimates, based on
\cite{anthony1999neural,yang2025deeponet}. This yields quantitative
generalization bounds depending explicitly on the network complexity and the
training sample size. The following informal theorem summarizes the resulting
generalization rate.
\begin{theorem}[Informal Sobolev generalization result]
\label{thm:inform-generalization}
Let \(\ell\in\{0,1\}\). Let
\(\mathcal X_i\subset W^{n_i,\infty}([-1,1]^{d_i})\),
\(i=1,\ldots,\lambda\), be uniformly bounded input classes, and let
\(\mathcal G:\prod_{i=1}^{\lambda}\mathcal X_i
\to W^{\ell,\infty}(\Omega_{\lambda+1})\) be a \(\lambda\)-input operator.
Assume that \(\mathcal G\) is separately Lipschitz continuous with respect to
the input \(L^{\infty}\)-norms, and that the output has sufficient
Sobolev regularity \(n_{\lambda+1}>\ell\). Let \(n_1^{\rm samp}\) denote the number of outer training samples in the
hierarchical sampling setting, and define
\(
Q_{\max}:=
\max_{1\le i\le\lambda}
\frac{d_i}{n_i}.
\)
Then there exists a ReLU neural operator class such that
the empirical risk minimizer \(\widehat{\mathcal G}_{\mathcal S}\), trained with
the Sobolev empirical loss, satisfies
\begin{align}
&\mathbb E_{\mathcal S}
\mathbb E_{\substack{
f_i\sim\mu_i,\ i=1,\ldots,\lambda\\
\vz\sim\mu_z
}}
\left[
\sum_{|\gamma|\le \ell}
\left|
\partial^\gamma
\mathcal G(f_1,\ldots,f_\lambda)(\vz)
-
\partial^\gamma
\widehat{\mathcal G}_{\mathcal S}
(\mathcal D_{m_1}f_1,\ldots,\mathcal D_{m_\lambda}f_\lambda)(\vz)
\right|^2
\right]
\notag\\
&\qquad\lesssim
\left(
\frac{\log n_1^{\rm samp}}
{\log\log n_1^{\rm samp}}
\right)^{-\frac{2}{Q_{\max}}}
(\log\log n_1^{\rm samp})^{2\sum_{i=1}^{\lambda}d_i}.
\end{align}
Here \(\mathbb E_{\mathcal S}\) is taken over the training set, while
\(\mathbb E_{f_i\sim\mu_i,\,\vz\sim\mu_z}\) is taken over independent test
inputs and test output locations.
\end{theorem}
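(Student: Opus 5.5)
The proof will follow the usual bias--variance decomposition for empirical risk minimization over a neural operator class \(\mathcal F\) of the shared branch--trunk form \eqref{eq:1lambda-shared-architecture}, with parameters clipped to bounded magnitude and outputs and derivatives truncated at a level \(M\). First, I write the population Sobolev risk of \(\widehat{\mathcal G}_{\mathcal S}\) as at most twice the empirical Sobolev risk plus a generalization gap. Taking expectations and using the ERM property, the expected empirical risk is bounded by the squared approximation error \(\inf_{\mathcal G_{\boldsymbol\theta}\in\mathcal F}\sup\|\mathcal G-\mathcal G_{\boldsymbol\theta}\|_{W^{\ell,\infty}}^2\), plus the discretization error from \(\mathcal D_{m_i}\). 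The approximation theorem (the formal version of Theorem~\ref{thm:inform-approx}) controls this term by
\[
\left(\tfrac{\log N_{\mathrm{tot}}}{\log\log N_{\mathrm{tot}}}\right)^{-2/Q_{\max}}(\log\log N_{\mathrm{tot}})^{2\sum_i d_i}.
\]
The gap term is handled separately, as follows.

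For the gap I follow the empirical-process strategy of \cite{liu2024neural}, adapted to the hierarchical sampling. I group the samples by outer input. Within each group, the inner inputs and spatial points are averaged, which produces i.i.d. blocks indexed by the \(n_1^{\rm samp}\) outer samples. Bernstein-type ratio concentration together with a covering argument then gives a gap of order \(M^2\log\mathcal N(\delta,\mathcal F_\ell,\|\cdot\|_{\infty,\mathcal S})/n_1^{\rm samp}+\delta\). Here \(\mathcal F_\ell\) is the class of loss functions \(\sum_{|\gamma|\le\ell}|\partial^\gamma\mathcal G-\partial^\gamma\mathcal G_{\boldsymbol\theta}|^2\).

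The main obstacle is bounding this covering number when \(\ell=1\). The first derivative of a ReLU network involves Heaviside factors, so it is not Lipschitz in the parameters, and a parameter-grid cover fails. I will therefore use uniform empirical covering numbers and bound them by the pseudo-dimension, via \(\log\mathcal N\lesssim \mathrm{Pdim}\cdot\log(Mn/\delta)\) \cite{anthony1999neural}. For \(\partial_{x_j}\mathcal T_p\), I will view the derivative as a piecewise-polynomial network with ReLU and step activations, following \cite{yang2025deeponet}. Its pseudo-dimension is \(O(L^2 W\log W)\) in depth \(L\) and width \(W\), i.e.\ polynomial in the parameter count. The product of the branch factors with the trunk derivative is again a network of comparable size, so \(\mathrm{Pdim}(\mathcal F_\ell)\lesssim N_{\mathrm{tot}}^{c}\) for some constant \(c\), up to logarithmic factors.

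Finally I balance the two terms. The approximation error depends on \(N_{\mathrm{tot}}\) only through \(\log N_{\mathrm{tot}}\), while the variance term is roughly \(N_{\mathrm{tot}}^{c}\,\mathrm{polylog}/n_1^{\rm samp}\). I choose \(N_{\mathrm{tot}}\asymp (n_1^{\rm samp})^{1/(2c)}\) and \(\delta=1/n_1^{\rm samp}\). Then the variance term is at most \((n_1^{\rm samp})^{-1/2}\), up to logarithmic factors, which is negligible. Since \(\log N_{\mathrm{tot}}\asymp\log n_1^{\rm samp}\) and \(\log\log N_{\mathrm{tot}}\asymp\log\log n_1^{\rm samp}\), the approximation term gives the stated rate
\[
\left(\tfrac{\log n_1^{\rm samp}}{\log\log n_1^{\rm samp}}\right)^{-2/Q_{\max}}(\log\log n_1^{\rm samp})^{2\sum_i d_i}.
\]
Truncation at level \(M\) does not increase the error, because the target and its derivatives are uniformly bounded by the assumed \(W^{n_{\lambda+1},\infty}\) control.
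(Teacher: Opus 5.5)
The skeleton of your argument matches the paper's: an ERM term, plus a deviation term of order \(M^2\log\mathcal N/n_1^{\rm samp}\) from the i.i.d.\ outer blocks, with uniform empirical covering numbers bounded through pseudo-dimensions of the derivative classes as in \cite{anthony1999neural,yang2025deeponet}, and a network size that is a small power of \(n_1^{\rm samp}\). However, your first step has a genuine gap: it ignores the label noise. In Setting~\ref{set.1} the labels are \(y^{(\gamma)}=\partial^\gamma\mathcal G(\cdots)+\varepsilon^{(\gamma)}\) with sub-Gaussian \(\varepsilon^{(\gamma)}\). The estimator \(\widehat{\mathcal G}_{\mathcal S}\) minimizes the loss against \(y^{(\gamma)}\), not against \(\partial^\gamma\mathcal G\). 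Comparing with a fixed competitor \(\mathcal G_{\rm NN}\) and expanding the squares only yields
\[
\mathbb E_{\mathcal S}\|\widehat{\mathcal G}_{\mathcal S}-\mathcal G\|_{n,\ell}^2
\le
\mathbb E_{\mathcal S}\|\mathcal G_{\rm NN}-\mathcal G\|_{n,\ell}^2
+
2\,\mathbb E_{\mathcal S}\Bigl[\frac{1}{N_{\rm train}}\sum_{\rm data}\sum_{|\gamma|\le\ell}\bigl(\partial^\gamma\widehat{\mathcal G}_{\mathcal S}-\partial^\gamma\mathcal G\bigr)\varepsilon^{(\gamma)}\Bigr],
\]
where \(\|\cdot\|_{n,\ell}\) is the empirical Sobolev norm on the training points. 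The last expectation does not vanish, because \(\widehat{\mathcal G}_{\mathcal S}\) depends on the noise; only the cross term with the fixed \(\mathcal G_{\rm NN}\) has mean zero. So your claim that the expected empirical risk is bounded by the squared approximation error is false in the stated setting.

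What is missing is the paper's Lemma~\ref{lem:T1-sobolev-noise}. Cover the class at scale \(\eta\) in the empirical Sobolev metric. Use a sub-Gaussian maximal inequality over the cover to bound the correlation term by \(C\sigma(\Gamma+\eta)\sqrt{(\log\mathcal N_{\rm emp}(\eta,\mathcal F_G)+1)/N_{\rm train}}+\sigma\eta\), where \(\Gamma^2:=\mathbb E_{\mathcal S}\|\widehat{\mathcal G}_{\mathcal S}-\mathcal G\|_{n,\ell}^2\). Then solve \(\Gamma^2\le a+2b\Gamma\), which gives \(\Gamma^2\le 2a+4b^2\). This uses exactly the covering numbers you already control, so the extra terms \(\sigma^2(\log\mathcal N_{\rm emp}+1)/N_{\rm train}+\sigma\eta\) are absorbed by your balancing.

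Two further points need repair.

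First, for \(\ell=1\), truncating the derivatives does not define a neural operator class. Write \(T_M(t):=\max\{-M,\min\{t,M\}\}\). The function \(T_M\circ\partial_{x_r}\mathcal G_{\boldsymbol\theta}\) is not the derivative of any network. Clipping the output is a ReLU layer, but its derivative is \(\mathbf 1_{\{|\mathcal G_{\boldsymbol\theta}|<M\}}\partial_{x_r}\mathcal G_{\boldsymbol\theta}\), which is not clipped. So you would be bounding a modified estimator rather than \(\partial^\gamma\widehat{\mathcal G}_{\mathcal S}\). Instead, drop the truncation and use an a priori bound on \(\sup|\partial^\gamma\mathcal G_{\boldsymbol\theta}|\) over the class, coming from the bounds on depth, weights and the coefficients \(e\); the paper uses \(CR_{\bar m,m}^{(n_3)}P^{\ell/d}\). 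With your sizes this bound is at most a small power of \(n_1^{\rm samp}\). It enters the variance term as \(M^2\) and only forces a smaller exponent in your choice of \(N_{\mathrm{tot}}\).

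Second, the factor \(\log(Mn/\delta)\) in your covering bound involves the total number \(n\asymp n_1^{\rm samp}n_z\prod_{r\ge2}n_r^{\rm samp}\) of sample points. It is therefore not polylogarithmic in \(n_1^{\rm samp}\) without the growth condition of Setting~\ref{set.1}, \(\log(n_z\prod_{r\ge2}n_r^{\rm samp})\le(n_1^{\rm samp})^{\delta_0}\) (the exponent is called \(\delta\) there, not your covering scale). The exponent \(a\) in \(M^2\operatorname{Pdim}\lesssim(n_1^{\rm samp})^{a}\) must satisfy \(a+\delta_0<1\), which is the paper's condition \(c+\delta<1\).
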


Theorems~\ref{thm:inform-approx} and \ref{thm:inform-generalization} not only
capture the regularity of the input functions, but also measure the output
error in the Sobolev norm \(W^{\ell,\infty}\). To the best of our knowledge,
such a Sobolev-output error estimate has not appeared in previous works on
multi-input neural operator learning. In this paper, we focus on the cases \(\ell=0\) and \(\ell=1\), since our
network architecture is based on ReLU activation functions. Extending the
results to \(\ell\ge2\) would require smoother activation functions, together
with corresponding Sobolev approximation results based on partition-of-unity
constructions; see, for example, \cite{yang2025deep}. We expect that the
overall approximation framework and proof strategy can be extended to this
higher-order Sobolev setting, although we do not pursue this direction here.

\subsection*{Related Work}

There have been many works on the error analysis of functional learning and
operator learning with inputs taken from infinite-dimensional spaces. For
functional learning, see
\cite{chen1993approximations,li2026sparse,yang2026spherical,shi2025nonlinear,yang2022approximation,mhaskar1997neural};
for operator learning, see
\cite{chen1995universal,back2002universal,lanthaler2022error,lanthaler2023operator,kovachki2021universal,liu2022deep,jin2022mionet,hu2025hybrid,liu2024neural,hao2026multiscale,yang2025deeponet,weihs2026generalization,weihs2025deep,weihs2026multiple,liu2025generalization,schwab2026deep,marcati2023exponential,yang2026efficient}.
Among these works, except for
\cite{back2002universal,jin2022mionet,hu2025hybrid,weihs2026generalization,weihs2025deep,weihs2026multiple},
most focus on the single-input setting. If we specialize our results
(Theorems~\ref{thm:inform-approx} and \ref{thm:inform-generalization}) to the
case \(\lambda=1\), then our framework reduces to the single-input setting, and
the resulting rates are consistent with those in the previous literature.

In some works, the rates are better than those obtained here because they
consider smaller or more structured input spaces, such as mixed-Sobolev spaces
\cite{li2026sparse}, finite basis expansion spaces \cite{liu2024neural}, or
infinitely smooth spaces \cite{lanthaler2022error}. Improved rates are also
possible when the target functional or operator admits additional low-complexity
structure, such as Barron-type structure \cite{yang2022approximation} or
Green's function structure \cite{hao2026multiscale}. These settings go beyond
the scope of the present paper and will be considered in future work.

Among the existing multi-input works
\cite{back2002universal,jin2022mionet,hu2025hybrid,weihs2026generalization,weihs2025deep,weihs2026multiple}, the work \cite{back2002universal} first established universal approximation of two-input operator; 
the papers \cite{jin2022mionet,hu2025hybrid} establish
general multi-input universal approximation results, but do not provide quantitative scaling laws.
The papers
\cite{weihs2026generalization,weihs2025deep,weihs2026multiple}
derive such scaling laws, with \cite{weihs2026multiple} giving the strongest
result among them. That framework only considers the two-input setting
and assumes that the inputs belong merely to Lipschitz classes, while the
operator is Lipschitz continuous with respect to both inputs and outputs in the
\(L^\infty\) sense. 
These assumptions and the proof strategy generalize those in \cite{liu2024neural} for the single-input setting. 

In contrast, our work is motivated by scientific computing applications, where
both the input and output spaces are more naturally measured in Sobolev norms,
and where Lipschitz continuity is more appropriately formulated in Sobolev
spaces. Moreover, since each input is allowed to belong to a Sobolev space with
its own regularity, our rate explicitly reflects the smoothness of each input.
In this sense, our result is sharper than those in
\cite{weihs2026multiple,liu2024neural}. The rate in
\cite{liu2024neural,weihs2026multiple} takes the form
\[
\left(
\frac{\log N_{\mathrm{tot}}}{\log\log N_{\mathrm{tot}}}
\right)^{-\frac{1}{\max_{1\le i\le\lambda} d_i}},
\]
where \(\lambda=1\) corresponds to the setting in \cite{liu2024neural}, while
\(\lambda=2\) corresponds to the setting in \cite{weihs2026multiple}. By
contrast, our rate becomes
\[
\left(
\frac{\log N_{\mathrm{tot}}}{\log\log N_{\mathrm{tot}}}
\right)^{-\frac{1}{\max_{1\le i\le\lambda} d_i/n_i}}
\]
for any $\lambda>0$,
thereby incorporating the input-wise regularity \(n_i\). For this reason, the
framework of \cite{weihs2026multiple,liu2024neural}, which relies on
hat-function encoding, cannot be applied. Instead, we use a
pseudo-spectral projection, which is better suited to Sobolev spaces and allows
us to define and control Sobolev norms in a natural way.

Furthermore, measuring the output error in Sobolev norms \(W^{\ell,\infty}\)
instead of \(L^\infty\) leads to two essential differences in the analysis
compared with \cite{liu2024neural}. First, in the approximation step, we need
approximation results for the trunk network in Sobolev norms, which can be
obtained by following
\cite{guhring2020error,yang2023nearly,yang2023nearlys}. Second, in the
generalization error analysis, our empirical loss is based on derivatives,
namely,
\[
\mathcal L_{\mathcal S}(\mathcal G_{\rm NN})
:=
\frac{1}{N_{\rm train}}
\sum_{\operatorname{data}=1}^{N_{\rm train}}
\sum_{|\gamma|\le\ell}
\left|
{\partial^\gamma}
\mathcal G_{\rm NN}
\mid_{\operatorname{data}}
-
\text{data of derivative}
\right|^2,
\]
rather than the standard \(L^2\)-type loss. This is closely related to the
highly active topic of {Sobolev training}
\cite{czarnecki2017sobolev,vlassis2021sobolev,vlassis2020geometric,srinivas2018knowledge,hill2026geometric,goswami2022physics}.
Sobolev training requires matching derivatives of the target operator by
derivatives of the neural network, and thus we need to control the complexity of
derivatives of neural networks. 

We emphasize that the extensions developed in this paper require several
technical ingredients beyond existing neural-operator approximation and
generalization results. First, in order to obtain approximation rates measured
in Sobolev norms and to explicitly incorporate the smoothness of the input and
output functions into the final rates, we need a different encoding strategy
from those used in \cite{liu2024neural,weihs2026multiple}. At the same time,
we would like to keep the encoding as pointwise sampling of the input data,
rather than using basis coefficients that must be computed in advance, as in
\cite{song2023approximation}. To achieve this, we use a pseudo-spectral
projection based on pointwise samples, which preserves the sampling-based nature
of the neural-operator architecture while allowing us to control Sobolev
discretization errors.
Second, measuring the approximation error in Sobolev norms requires neural
network approximations not only of functions but also of their derivatives.
This leads to additional technical difficulties compared with \(L^\infty\) or
\(L^2\) approximation. We handle this by using local polynomial approximation
and partition-of-unity constructions, inspired by classical finite element
arguments \cite{brenner2008mathematical,guhring2021approximation}, together with ReLU network
realizations that preserve the required Sobolev approximation rates.
Third, the generalization analysis also requires new arguments. Since the loss
involves derivatives of the neural operator, one needs to estimate the
complexity of derivative classes of neural networks. This cannot be obtained by
directly using covering numbers over the whole function space, as in
\cite{liu2024neural}, because derivatives of ReLU networks are not Lipschitz
continuous and the corresponding global covering numbers may be infinite.
Instead, we use covering numbers defined on finite data points and follow the
strategy of \cite{anthony1999neural,yang2025deeponet} to control the empirical Sobolev loss.
These ingredients together show that extending existing approximation and
generalization theory to the present multi-input Sobolev setting is highly
nontrivial.

\section{Preliminaries}
\subsection{Notations}\label{preliminaries}
		Let us summarize all basic notations used in the deep neural networks as follows:
		
		\textbf{1}. Define $\sigma(x)=\operatorname{ReLU}(x)=\max\{0,x\}$. With the abuse of notations, we define $\sigma:\sR^d\to\sR^d$ as $\sigma(\vx)=\left[\begin{array}{c}
			\sigma(x_1) \\
			\vdots \\
			 \sigma(x_d)
		\end{array}\right]$ for any $\vx=\left[x_1, \cdots, x_d\right]^T \in\sR^d$.

		\textbf{2}. Let \(d,L\in \mathbb{N}_+\), and let
\[
N_0=d,\qquad N_{L+1}=1,\qquad N_\ell\in\mathbb{N}_+,\quad \ell=1,\dots,L.
\]
A feedforward neural network with activation function \(\sigma\), depth \(L\), and layer widths \(\{N_\ell\}_{\ell=0}^{L+1}\) is defined by
\begin{align}
\tilde{\vh}_0=\vx,\qquad
\vh_\ell=\vW_\ell \tilde{\vh}_{\ell-1}+\vb_\ell,\quad
\tilde{\vh}_\ell=\sigma(\vh_\ell),\quad \ell=1,\dots,L,
\end{align}
and
\begin{align}
\phi(\vx)=\vh_{L+1}=\vW_{L+1}\tilde{\vh}_L+\vb_{L+1},
\label{eq.NN}
\end{align}
where \(\vW_\ell\in\mathbb{R}^{N_\ell\times N_{\ell-1}}\) and \(\vb_\ell\in\mathbb{R}^{N_\ell}\) are the weight matrix and bias vector at the \(\ell\)-th layer, respectively. The width of the network is defined as
\(
\max_{1\le \ell\le L} N_\ell.
\) Given \(d_{\mathrm{in}},d_{\mathrm{out}},L,p,K,\kappa,M>0\), we denote by
\[
\fF_{\mathrm{NN}}(d_{\mathrm{in}},d_{\mathrm{out}},L,p,K,\kappa,M)
\]
the class of vector-valued neural networks
\(\Gamma:\mathbb R^{d_{\mathrm{in}}}\to\mathbb R^{d_{\mathrm{out}}}\) whose
components are of the form \eqref{eq.NN}. Each network in this class has depth
\(L\), width at most \(p\), at most \(K\) nonzero parameters, and satisfies
\[
\|\Gamma\|_{L^\infty}\le M,\qquad
\|\vW_\ell\|_{\infty,\infty}\le \kappa,\qquad
\|\vb_\ell\|_\infty\le \kappa,
\quad \ell=1,\ldots,L+1.
\]
Here $\|\vW_\ell\|_{\infty,\infty}$ denotes the largest magnitude of all elements of $\vW_{\ell}$, \(\|\vW_\ell\|_0\) and \(\|\vb_\ell\|_0\) denote the numbers of nonzero entries of \(\vW_\ell\) and \(\vb_\ell\), respectively.

        \textbf{3}. Denote $\Omega$ as $[-1,1]^d$, $D$ as the weak derivative of a single variable function and $D^{\valpha}=D^{\alpha_1}_1D^{\alpha_2}_2\ldots D^{\alpha_d}_d$ as the partial derivative where $\valpha=[\alpha_{1},\alpha_{2},\ldots,\alpha_d]^{\top}$ and $D_i$ is the derivative in the $i$-th variable. Let $n\in\sN$ and $1\le p\le \infty$. We define Sobolev spaces\[W^{n, p}(\Omega):=\left\{f \in L^p(\Omega): D^{\valpha} f \in L^p(\Omega) \text { for all } \boldsymbol{\alpha} \in \sN^d \text { with }|\boldsymbol{\alpha}| \leq n\right\}\] with a norm $$\|f\|_{W^{n, p}(\Omega)}:=\left(\sum_{0 \leq|\alpha| \leq n}\left\|D^{\valpha} f\right\|_{L^p(\Omega)}^p\right)^{1 / p},$$ if $p<\infty$, and $$\|f\|_{W^{n, \infty}(\Omega)}:=\max_{0 \leq|\alpha| \leq n}\left\|D^{\valpha} f\right\|_{L^\infty(\Omega)}.$$

\subsection{Problem Setting and Assumptions}

In this section, we introduce the operator-learning problem studied in this paper, together with the assumptions on the function classes and the neural network architecture used for the approximation.

Let \(\lambda, n_i\in \mathbb{N}_+\), and set
\[
\Omega_i=[-1,1]^{d_i},\qquad i=1,2,\ldots,\lambda+1.
\]
We consider a \(\lambda\)-input operator defined on the product space
\[
\mathcal{G}:\prod_{i=1}^{\lambda}W^{n_i,\infty}(\Omega_i)
\longrightarrow W^{n_{\lambda+1},\infty}(\Omega_{\lambda+1}).
\]
Although the operator is defined on the larger space
\(\prod_{i=1}^{\lambda}W^{n_i,\infty}(\Omega_i)\), our error analysis will be carried out on more regular input classes
\(
\mathcal{X}_i\subset W^{n_i,\infty}(\Omega_i).\)
This restriction allows us to incorporate the regularity of the input and output functions into the error rates and to work with compact subsets of the ambient function spaces. At the same time, it is important to keep \(\mathcal{G}\) defined on the larger product space
\(\prod_{i=1}^{\lambda}L^\infty(\Omega_i)\), since in the proof we will reconstruct certain auxiliary functions that may not belong to the compact regularity classes \(\mathcal{X}_i\). Equivalently, the approximation problem studied in this paper concerns the restriction
\[
\mathcal{G}:\prod_{i=1}^{\lambda}\mathcal{X}_i
\longrightarrow W^{n_{\lambda+1},\infty}(\Omega_{\lambda+1}).
\]

We next impose regularity and Lipschitz assumptions on the operator \(\mathcal{G}\).

\begin{assumption}\label{asspde}
Let \(k_i,\alpha_i\in\sN\) for \(i=1,2,\ldots,\lambda\) and $\ell=\{0,1\}$.
Assume that
\(
\mathcal{G}\)
is separately Lipschitz continuous with respect to the Sobolev norms
\(W^{k_i,\infty}(\Omega_i)\) in the following sense. There exist constants
\(L_i>0\), \(i=1,\ldots,\lambda\), such that for any \(1\le i_*\le \lambda\),
any \(g_{i_*},h_{i_*}\in W^{k_{i_*},\infty}(\Omega_{i_*})\), and any
\(f_i\in W^{k_i,\infty}(\Omega_i)\), \(i\neq i_*\), we have
\begin{align}
&\Bigl\|
\mathcal{G}(f_1,\ldots,f_{i_*-1},g_{i_*},f_{i_*+1},\ldots,f_{\lambda})
-
\mathcal{G}(f_1,\ldots,f_{i_*-1},h_{i_*},f_{i_*+1},\ldots,f_{\lambda})
\Bigr\|_{W^{\ell,\infty}(\Omega_{\lambda+1})}
\nonumber\\
\le&
L_{i_*}
\|g_{i_*}-h_{i_*}\|_{W^{k_{i_*},\infty}(\Omega_{i_*})}
\left(
\prod_{\substack{i=1\\ i\neq i_*}}^{\lambda}
\left(1+\|f_i\|_{W^{k_i,\infty}(\Omega_i)}\right)
\right)^{\alpha_{i_*}} .\notag
\end{align}
\end{assumption}

\begin{remark}
Assumption~\ref{asspde} can be strengthened by replacing the
\(W^{\ell,\infty}(\Omega_{\lambda+1})\)-norm on the left-hand side with a higher order Sobolev norm,
for example \(W^{s,\infty}(\Omega_{\lambda+1})\) for $s\ge 2$. However, this requires the neural network architecture to have sufficient
smoothness with respect to the output variable. In particular, standard ReLU
networks are not suitable for directly approximating output derivatives when
\(\ell>1\), since ReLU is not sufficiently smooth. One would need to replace
ReLU in the trunk networks by smoother activation functions, such as
\(\mathrm{ReLU}^r\), or by other sufficiently smooth activations, as in
\cite{yang2025deeponet}. 
\end{remark}

\begin{assumption}\label{assump:X-bound}
There exists a constant \(S,U>0\) such that, for every \(i=1,\ldots,\lambda\),
\[
\|f_i\|_{L^{\infty}(\Omega_i)}\le S, ~\|f_i\|_{W^{n_i,\infty}(\Omega_i)}\le U
\qquad \forall\, f_i\in\mathcal{X}_i .
\]
\end{assumption}

\begin{assumption}\label{assump:G-output-regularity}
Let \(\beta_i\in\sN\), \(i=1,\ldots,\lambda\). Assume that there exists a constant
\(C>0\) such that, for any \(f_i\in W^{\beta_i,\infty}(\Omega_i)\),
\(i=1,\ldots,\lambda\), we have
\[
\left\|
\mathcal{G}(f_1,\ldots,f_\lambda)
\right\|_{W^{n_{\lambda+1},\infty}(\Omega_{\lambda+1})}
\le
C\prod_{i=1}^{\lambda}
\left(
1+
\|f_i\|_{W^{\beta_i,\infty}(\Omega_i)}
\right).
\]
\end{assumption}

The above assumptions are natural in the PDE setting. For example, consider the
linear elliptic Dirichlet problem
\begin{equation}
\begin{cases}
\fL u=f & \text{in } \Omega,\\
u=g & \text{on } \partial\Omega .
\end{cases}\notag
\end{equation}
We view the solution operator as the map
\[
\fG:(f,g)\mapsto u .
\]
Under the standard Calderón--Zygmund assumptions on \(\fL\) and \(\Omega\) as shown in \cite{dong2022w,gilbarg1998elliptic}, the
global \(W^{2,p}\)-estimate gives, for any \(p\in(1,\infty)\),
\[
\|u\|_{W^{2,p}(\Omega)}
=
\|\fG(f,g)\|_{W^{2,p}(\Omega)}
\le
C\left(
\|f\|_{L^p(\Omega)}
+
\|g\|_{W^{2-1/p,p}(\partial\Omega)}
\right).
\]
Taking \(p>d\) and using the Sobolev embedding
\(W^{2,p}(\Omega)\hookrightarrow W^{1,\infty}(\Omega)\), we obtain
\[
\|\fG(f,g)\|_{W^{1,\infty}(\Omega)}
\le
C\left(
\|f\|_{L^\infty(\Omega)}
+
\|g\|_{W^{2-1/p,\infty}(\partial\Omega)}
\right).
\]
Thus, in Assumption~\ref{assump:G-output-regularity}, one may take
\(n_{\lambda+1}=1\), \(\beta_1=0\), and \(\beta_2=2-1/p\).

Moreover, by linearity of the PDE, the solution operator is separately Lipschitz.
Indeed,
\[
\|\fG(f_1,g)-\fG(f_2,g)\|_{W^{1,\infty}(\Omega)}
=
\|\fG(f_1-f_2,0)\|_{W^{1,\infty}(\Omega)}
\le
C\|f_1-f_2\|_{L^\infty(\Omega)},
\]
and
\[
\|\fG(f,g_1)-\fG(f,g_2)\|_{W^{1,\infty}(\Omega)}
=
\|\fG(0,g_1-g_2)\|_{W^{1,\infty}(\Omega)}
\le
C\|g_1-g_2\|_{W^{2-1/p,\infty}(\partial\Omega)}.
\]
Therefore, in this linear case, the exponents \(\alpha_i\) in
Assumption~\ref{asspde} can be chosen to be zero.

In contrast, if the input includes coefficients of the PDE, then the corresponding
Lipschitz bound generally depends on the size of the other inputs. For instance,
consider
\[
-\Delta u+a(x)u=f,\qquad u|_{\partial\Omega}=0,
\]
and regard the solution operator as \(\fG(a,f)=u\). If \(a\ge 0\), then for fixed
\(a\) and two right-hand sides \(f_1,f_2\), the difference
\(w=\fG(a,f_1)-\fG(a,f_2)\) satisfies
\[
-\Delta w+a(x)w=f_1-f_2,\qquad w|_{\partial\Omega}=0.
\]
For any \(p>d\), the elliptic \(W^{2,p}\)-estimate and the Sobolev embedding imply
\[
\|\fG(a,f_1)-\fG(a,f_2)\|_{W^{1,\infty}(\Omega)}
\le
C_{\Omega,p}
\left(
1+\|a\|_{L^\infty(\Omega)}
\right)
\|f_1-f_2\|_{L^\infty(\Omega)}.
\]
Hence, when the other input is the coefficient \(a\), the Lipschitz constant grows
at most linearly in \(\|a\|_{L^\infty(\Omega)}\). This corresponds to taking
\(\alpha_i=1\) in Assumption~\ref{asspde}.
\subsection{Neural network architecture}

Our goal is to construct a parametric neural operator
\[
\mathcal{G}_{\boldsymbol\theta}:
\prod_{i=1}^{\lambda}\mathcal{X}_i
\longrightarrow W^{n_{\lambda+1},\infty}(\Omega_{\lambda+1}).
\]
We first describe the two-input case \(\lambda=2\), which is the main case
used in the detailed construction below. The architecture is similar in spirit
to the multiple-input operator architecture considered in
\cite{weihs2026multiple}.

In view of the discretization procedure introduced later, for two inputs we
consider neural operators of the form
\begin{align}
\mathcal{G}_{\boldsymbol\theta}(f,g)(\vx)
=
\sum_{s=1}^J\sum_{h=1}^H\sum_{p=1}^P
e_{h,p,s}\,
\fE(\fD_{\bar m}f;\vtheta_{0,h})\,
\fB(\fD_m g;\vtheta_{1,s})\,
\fT(\vx;\vtheta_{2,p}) .
\label{eq:G}
\end{align}
Here \(\fE\) and \(\fB\) are branch subnetworks corresponding to the
discretized inputs \(f\) and \(g\), respectively, while \(\fT\) is the trunk
subnetwork associated with the spatial variable \(\vx\). The coefficients
\(e_{h,p,s}\in\mathbb R\) are trainable scalar coefficients and are independent
of the particular input pair \((f,g)\).

Let
\[
\fF_E
=
\fF_{\rm NN}
(d_{E,{\rm in}},d_{E,{\rm out}},L_E,p_E,K_E,\kappa_E,M_E),
\]
\[
\fF_B
=
\fF_{\rm NN}
(d_{B,{\rm in}},d_{B,{\rm out}},L_B,p_B,K_B,\kappa_B,M_B),
\]
and
\[
\fF_T
=
\fF_{\rm NN}
(d_{T,{\rm in}},d_{T,{\rm out}},L_T,p_T,K_T,\kappa_T,M_T)
\]
be the prescribed architecture classes for the two branch subnetworks and the
trunk subnetwork.

We define the corresponding neural operator class by
\begin{align}
\fF_G
=
\Bigl\{
\mathcal G_{\boldsymbol\theta}\;:\;
&\ \mathcal G_{\boldsymbol\theta} \text{ is of the form \eqref{eq:G}},\
\fE(\cdot;\vtheta_{0,h})\in\fF_E,\
\fB(\cdot;\vtheta_{1,s})\in\fF_B,\
\fT(\cdot;\vtheta_{2,p})\in\fF_T,
\max_{h,p,s}|e_{h,p,s}|\le \kappa_e
\Bigr\}.
\label{eq:GNN}
\end{align}
The precise choices of the architecture parameters
\[
L_E,p_E,K_E,\kappa_E,M_E,\qquad
L_B,p_B,K_B,\kappa_B,M_B,\qquad
L_T,p_T,K_T,\kappa_T,M_T,
\]
as well as the coefficient bound \(\kappa_e\), will be specified in the main
theorems.

For a general number of inputs \(\lambda\ge2\), the same construction extends
naturally to the shared branch--trunk form
\[
\mathcal G_{\boldsymbol\theta}(f_1,\ldots,f_\lambda)(\vx)
=
\sum_{s_1=1}^{J_1}\cdots
\sum_{s_\lambda=1}^{J_\lambda}
\sum_{p=1}^{P}
e_{s_1,\ldots,s_\lambda,p}
\prod_{i=1}^{\lambda}
\mathcal B_{i,s_i}(\mathcal D_{m_i}f_i)
\,
\mathcal T_p(\vx).
\]
Here \(\mathcal B_{i,s_i}\) is the branch subnetwork associated with the
discretized \(i\)-th input \(\mathcal D_{m_i}f_i\), and \(\mathcal T_p\) is the
shared trunk subnetwork associated with the output variable \(\vx\).

\section{Approximation Error}
\label{sec:approx}

In this section, we establish the approximation error for multi-input neural
operators. To make the presentation clear, we first treat the two-input case
\(\lambda=2\) with \(d_1=d_2=d_3=d\), that is,
\(\Omega_1=\Omega_2=\Omega_3=\Omega\). The extension to the general
\(\lambda\)-input case is given afterwards.

\subsection{Approximation error for \(\lambda=2\)}

When \(\lambda=2\), we use the neural operator architecture introduced in
\eqref{eq:GNN}. The approximation error is measured by
\[
\inf_{\mathcal G_{\vtheta}\in\fF_G}
\sup_{f\in\fX_1,\;g\in\fX_2}
\left\|
\mathcal G(f,g)-\mathcal G_{\vtheta}(f,g)
\right\|_{W^{\ell,\infty}(\Omega)} 
\]
for $\ell\in\{0,1\}$. Since the architecture depends on discretized input functions, we first specify
the discretization operator. In this paper, we use tensorized Chebyshev grids.
For \(N\in\mathbb N\), define
\[
\vx_{k_1,\dots,k_d}
=
\left(
\cos\left(\frac{(k_1+\frac12)\pi}{N+1}\right),
\dots,
\cos\left(\frac{(k_d+\frac12)\pi}{N+1}\right)
\right),
\qquad
0\le k_1,\dots,k_d\le N .
\]
The total number of grid points is \((N+1)^d\). We define the sampling operator
by
\[
\mathcal D_N(g)
=
\bigl(g(\vx_{k_1,\dots,k_d})\bigr)_{0\le k_1,\dots,k_d\le N}
\in \mathbb R^{(N+1)^d}.
\]

We now state the main approximation theorem for the two-input case with
\(d_1=d_2=d_3=d\), see a proof in Section \ref{proof:thm:main-final-functionL}.

\begin{theorem}
\label{thm:main-final-functionL}
Let \(S,U,L_i>0\), \(d,\bar m,m,k_i,\alpha_i,\beta_i\in\mathbb N\) for $i=1,2$, and
\(\ell\in\{0,1\}\).  Set
\[
\bar m_*=(\bar m+1)^d,
\qquad
m_*=(m+1)^d.
\]
Define
\[
A_{m,\bar m}^{(g)}
:=
 L_1 m^{2k_2}(\log m)^d
\left(
1+S\bar m^{2k_1}(\log \bar m)^d
\right)^{\alpha_2},
\]
\[
A_{\bar m,m}^{(f)}
:=
 L_2 \bar m^{2k_1}(\log \bar m)^d
\left(
1+S m^{2k_2}(\log m)^d
\right)^{\alpha_1},
\]
and
\begin{align}
B_{m,\bar m}
:=
\Bigl[
1
&+
L_1S m^{2k_2}(\log m)^d
+
L_2S\bar m^{2k_1}(\log \bar m)^d
\left(
1+S m^{2k_2}(\log m)^d
\right)^{\alpha_1}
\Bigr].
\end{align}
Furthermore, set
\[
R_{\bar m,m}^{(n_3)}
:=
\left(
1+S\bar m^{2\beta_1}(\log \bar m)^d
\right)
\left(
1+S m^{2\beta_2}(\log m)^d
\right).
\]

For any \(J,H,P\in\mathbb N\), choose branch networks
\[
\mathcal E_{h}\in
\mathcal F_{\mathrm{NN}}(\bar m_*,1,L_E,p_E,K_E,\kappa_E,1),
\qquad
h=1,\ldots,H,
\]
\[
\mathcal B_s\in
\mathcal F_{\mathrm{NN}}(m_*,1,L_B,p_B,K_B,\kappa_B,1),
\qquad
s=1,\ldots,J,
\]
and trunk networks
\[
\mathcal T_{p}\in
\mathcal F_{\mathrm{NN}}(d,1,L_T,p_T,K_T,\kappa_T,M_T),
\qquad
p=1,\ldots,P,
\]
and coefficients \(e_{h,p,s}\in\mathbb R\).
Set the branch network architecture for \(\mathcal B_s\)
as
\begin{align*}
&L_B=
\mathcal O\!\left(m_*^2\log m_*+m_*\log J\right),
\qquad p_B=\mathcal O(1),\\ 
&K_B=
\mathcal O\!\left(m_*^2\log m_*+m_*\log J\right), \qquad    \kappa_B
=
\mathcal O\!\left(
\sqrt{m_*}\,B_{m,\bar m}
J^{1+\frac1{m_*}}
\right).
\end{align*}
Set the branch network architecture for \(\mathcal E_h\) as,
\begin{align*}
&L_E=
\mathcal O\!\left(\bar m_*^2\log \bar m_*+\bar m_*\log H\right),
\qquad
p_E=\mathcal O(1)\\
&K_E=
\mathcal O\!\left(\bar m_*^2\log \bar m_*+\bar m_*\log H\right), \qquad
\kappa_E
=
\mathcal O\!\left(
\sqrt{\bar m_*}\,B_{m,\bar m}
H^{1+\frac1{\bar m_*}}
\right).
\end{align*}
Set the trunk network architecture for \(\mathcal T_p\) as
\[
L_T,K_T=\mathcal O(\log P),\qquad
p_T,M_T=\mathcal O(1),\qquad
\kappa_T=\mathcal O\!\left(P^{(n_3+d-\ell)/d}\right).
\]
Moreover, set the coefficients bound as
\[
|e_{h,p,s}|
\le
\mathcal O (R_{\bar m,m}^{(n_3)}).
\]
For any two-input operator \(\mathcal G\) satisfying
Assumptions~\ref{asspde}, \ref{assump:X-bound} and
\ref{assump:G-output-regularity}, there exist branch and trunk networks with architectures specified above such that for any 
\((f,g)\in\mathcal X_1\times\mathcal X_2\), we have
\begin{align}
&\left\|
\mathcal G(f,g)
-
\sum_{s=1}^{J}\sum_{h=1}^{H}\sum_{p=1}^{P}
e_{h,p,s}\,
\mathcal E_{h}(\mathcal D_{\bar m}f)\,
\mathcal B_s(\mathcal D_m g)\,
\mathcal T_{p}
\right\|_{W^{\ell,\infty}(\Omega)}
\notag\\
&\qquad\le
C\Bigl[
L_2\bar m^{-n_1}(\log \bar m)^d
\left(
1+S m^{2k_2}(\log m)^d
\right)^{\alpha_1}
+
L_1m^{-n_2}(\log m)^d
\notag\\
&\qquad\qquad
+
S A_{m,\bar m}^{(g)}\sqrt{m_*}\,J^{-1/m_*}
+
S A_{\bar m,m}^{(f)}\sqrt{\bar m_*}\,H^{-1/\bar m_*}
+
R_{\bar m,m}^{(n_3)}P^{-\frac{n_3-\ell}{d}}
\Bigr].
\label{eq:main-final-error-functionL}
\end{align}

The resulting neural operator satisfies the output bound
\begin{align}
\sup_{(f,g)\in\mathcal X_1\times\mathcal X_2}
\sup_{\vx\in\Omega}
\sum_{|\gamma|\le \ell}
\left|
\partial^\gamma
\left[
\sum_{s=1}^{J}\sum_{h=1}^{H}\sum_{p=1}^{P}
e_{h,p,s}\,
\mathcal E_h(\mathcal D_{\bar m}f)\,
\mathcal B_s(\mathcal D_m g)\,
\mathcal T_p(\vx)
\right]
\right|
\le
C R_{\bar m,m}^{(n_3)}P^{\ell/d}.
\label{eq:NN-output-bound-Well}
\end{align}
Here \(C>0\) is independent of \(\bar m,m,J,H,P,f\), and \(g\), but may depend
on \(d,n_i,k_i,\alpha_i,\beta_i,\ell\), the constants in the assumptions, and
the uniform Sobolev bounds of the input classes.
\end{theorem}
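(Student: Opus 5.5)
We split the total error into five pieces, each matching one term on the right-hand side of \eqref{eq:main-final-error-functionL}: two discretization errors, two branch-approximation errors and one trunk error.

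\textbf{Reconstruction operator.} Let \(I_N\) be tensorized Chebyshev interpolation at the grid of \(\mathcal D_N\), so that \(I_N f\) depends only on \(\mathcal D_N f\). We use three standard facts about \(I_N\).
\begin{itemize}
\item Approximation: \(\|f-I_Nf\|_{L^\infty}\lesssim N^{-n}(\log N)^d\|f\|_{W^{n,\infty}}\), the Lebesgue constant being \(\mathcal O((\log N)^d)\).
\item Markov/Bernstein: \(\|I_N f\|_{W^{k,\infty}}\lesssim N^{2k}\|I_Nf\|_{L^\infty}\lesssim N^{2k}(\log N)^d\|\mathcal D_N f\|_\infty\).
\item Combining the two, \(\|f-I_Nf\|_{W^{k,\infty}}\) is controlled with the same type of factors.
\end{itemize}
These estimates explain the factors \(m^{2k_2}(\log m)^d\), \(\bar m^{2k_1}(\log\bar m)^d\) and \(\bar m^{2\beta_1}(\log\bar m)^d\) appearing in \(A^{(g)}, A^{(f)}, B\) and \(R^{(n_3)}\). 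Here \(\mathcal G\) is evaluated on the polynomials \(I_{\bar m}f\) and \(I_m g\), which need not lie in \(\mathcal X_i\); this is why \(\mathcal G\) is defined on the larger space.

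\textbf{Step 1: discretization.} Write
\[
\mathcal G(f,g)-\mathcal G(I_{\bar m}f,I_mg)
=[\mathcal G(f,g)-\mathcal G(I_{\bar m}f,g)]+[\mathcal G(I_{\bar m}f,g)-\mathcal G(I_{\bar m}f,I_mg)]
\]
and bound each bracket with Assumption~\ref{asspde}. The first bracket has its Lipschitz weight evaluated at \(g\), and the second at \(I_{\bar m}f\). This produces the first two terms, up to the bookkeeping of which weight attaches to which factor. The exact placement of \(m^{2k_2}\) versus \(\bar m^{2k_1}\) is fixed by choosing the order of the telescoping; we take whichever order matches the statement.

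\textbf{Step 2: separation of variables and branches.} The map \((\mathbf u,\mathbf v)\mapsto \mathcal G(I_{\bar m}\mathbf u, I_m\mathbf v)\), with \(\mathbf u\in[-S',S']^{\bar m_*}\) and \(\mathbf v\in[-S',S']^{m_*}\), is a function on a finite-dimensional cube with values in \(W^{\ell,\infty}(\Omega)\). By the Markov bounds and Assumption~\ref{asspde} it is Lipschitz in \(\mathbf v\) with constant \(\lesssim A^{(g)}\) and in \(\mathbf u\) with constant \(\lesssim A^{(f)}\), measured in the \(\ell^\infty\) vector norm.

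We approximate it by piecewise-linear (hat-function) interpolation on uniform grids with \(J\) nodes in \(\mathbf v\) (spacing \(\sim J^{-1/m_*}\)) and \(H\) nodes in \(\mathbf u\):
\[
\mathcal G(I_{\bar m}\mathbf u,I_m\mathbf v)\approx\sum_{h,s}\phi_h(\mathbf u)\psi_s(\mathbf v)\,\mathcal G(I_{\bar m}\mathbf u_h,I_m\mathbf v_s).
\]
This yields the errors \(S A^{(g)}\sqrt{m_*}J^{-1/m_*}\) and \(S A^{(f)}\sqrt{\bar m_*}H^{-1/\bar m_*}\), where the factor \(\sqrt{m_*}\) comes from converting Euclidean to \(\ell^\infty\) norms. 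The hat functions \(\psi_s\) and \(\phi_h\) are realized exactly by ReLU networks of constant width. Since each is a product of \(m_*\) univariate hats, and products are implemented by the sawtooth construction, the depth is \(\mathcal O(m_*^2\log m_*+m_*\log J)\). The weight bounds \(\kappa_B\) and \(\kappa_E\) come from the grid scale \(J^{1/m_*}\) and the normalization by \(B_{m,\bar m}\).

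\textbf{Step 3: trunk.} Each of the \(JH\) functions \(u_{h,s}=\mathcal G(I_{\bar m}\mathbf u_h,I_m\mathbf v_s)\) lies in \(W^{n_3,\infty}(\Omega)\) with norm \(\lesssim R^{(n_3)}\) by Assumption~\ref{assump:G-output-regularity}. We approximate all of them with a common set of \(P\) trunk functions: a ReLU partition of unity on a grid of mesh \(P^{-1/d}\), multiplied by local monomials, as in the Sobolev approximation constructions of Gühring et al. and Yang et al. Then
\[
u_{h,s}\approx\sum_p e_{h,p,s}\mathcal T_p ,
\]
with the coefficients \(e_{h,p,s}\) given by local averaged Taylor coefficients, so \(|e_{h,p,s}|\lesssim R^{(n_3)}\). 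The \(W^{\ell,\infty}\) error is \(R^{(n_3)}P^{-(n_3-\ell)/d}\). Because the branch values form a partition of unity in \((h,s)\), these errors add convexly and are not multiplied by \(JH\). The same convexity, together with \(|\partial^\gamma \mathcal T_p|\lesssim P^{|\gamma|/d}\) and the bounded overlap of the partition, gives the output bound \eqref{eq:NN-output-bound-Well}.

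\textbf{Main obstacle.} The hardest part is Step 3 in \(W^{1,\infty}\). We need trunk networks with \(\mathcal O(\log P)\) depth and constant width whose derivatives are controlled, and the bounded-overlap and partition-of-unity structure must survive multiplication by the branch coefficients. To handle this we would follow the Sobolev ReLU approximation results cited above, taking them as a black box for a single function and checking that the basis used is independent of the target function.
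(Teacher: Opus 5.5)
Your proposal follows essentially the same route as the paper. Chebyshev interpolation with Lebesgue-constant and Markov--Bernstein bounds handles discretization, with the telescoping order the paper uses: first replace $g$ by $\mathcal P_m\mathcal D_m g$ with $f$ fixed, then $f$ by $\mathcal P_{\bar m}\mathcal D_{\bar m} f$ with $g_m$ fixed. The coefficients are the nodal values $\mathcal G(\mathcal P_{\bar m}\mathbf w_h,\mathcal P_m\mathbf z_s)$, paired with nonnegative approximate hat-function branch networks satisfying $\sum_s\mathcal B_s,\sum_h\mathcal E_h\le 2$; the paper does your tensor-product interpolation in two passes, its Steps~2 and~3. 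Finally, an $f$-independent local averaged-Taylor trunk basis with bounded overlap is used, and its errors add convexly.

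A small correction: the multivariate hats are products of $m_*$ univariate hats, so they are only approximated by ReLU networks, not realized exactly. The required accuracy is $\delta\sim\varepsilon/(m_*J\,B_{m,\bar m})$, and this is where $\log J$ and $B_{m,\bar m}$ enter $L_B$ and $\kappa_B$.

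One caveat, which the paper's own Step~1 shares because it tacitly sets $k_i=0$: for $k_i\ge1$ the Lipschitz assumption requires bounding $\|f-\mathcal P_{\bar m}\mathcal D_{\bar m}f\|_{W^{k_1,\infty}}$. The pseudo-spectral estimate gives only $\bar m^{2k_1-n_1}(\log\bar m)^d$ for this quantity, not $\bar m^{-n_1}(\log\bar m)^d$. So your claim that the interpolation error is ``controlled with the same type of factors'' yields the first two terms exactly as stated only when $k_1=k_2=0$. In general the effective exponents are $n_i-2k_i$, as in the paper's $\lambda$-input corollary.
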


Based on Theorem~\ref{thm:main-final-functionL}, we now state the final
balanced approximation result. It is obtained by choosing the discretization
levels and the numbers of branch and trunk basis functions so that the four
error contributions are of the same order.

\begin{corollary}[Balanced rate in terms of the total number of parameters]
\label{cor:balanced-main-final-functionL}
Suppose that the two-input operator \(\mathcal G\) satisfies
Assumptions~\ref{asspde}, \ref{assump:X-bound}, and
\ref{assump:G-output-regularity} with \(n_3>\ell\), \(d_i=d\), \(k_i=0\),
\(\alpha_i=0\), and \(\beta_i=0\). Then there exists a neural operator
\(\mathcal G_{\vtheta}\) of the form \eqref{eq:G} such that, for every
\((f,g)\in\mathcal X_1\times\mathcal X_2\),
\begin{align}
\left\|
\mathcal G(f,g)
-
\mathcal G_{\vtheta}(f,g)
\right\|_{W^{\ell,\infty}(\Omega)}
\le
C
\left(
\frac{\log N_{\mathrm{tot}}}{\log\log N_{\mathrm{tot}}}
\right)^{-\frac{\min\{n_1,n_2\}}{d}}
(\log\log N_{\mathrm{tot}})^{2d}.
\end{align}
Here \(N_{\mathrm{tot}}\) denotes the total number of parameters of
\(\mathcal G_{\vtheta}\), and \(C>0\) is independent of \(N_{\mathrm{tot}}\).
\end{corollary}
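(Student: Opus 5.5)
The plan is to specialize Theorem~\ref{thm:main-final-functionL} to $k_i=\alpha_i=\beta_i=0$, choose the free parameters $\bar m, m, H, J, P$ as functions of a target accuracy $\varepsilon\in(0,1)$ so that all five error terms in \eqref{eq:main-final-error-functionL} are $\lesssim \varepsilon$ up to logarithmic factors, then count parameters and invert the relation between $\varepsilon$ and $N_{\mathrm{tot}}$.

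\textbf{Step 1 (simplify the error bound).} With $k_i=\alpha_i=\beta_i=0$ all powers $m^{2k_2}$, $\bar m^{2k_1}$, $m^{2\beta_2}$, $\bar m^{2\beta_1}$ equal one. The auxiliary quantities then become
\[
A^{(g)}_{m,\bar m}\asymp(\log m)^d,\qquad
A^{(f)}_{\bar m,m}\asymp(\log\bar m)^d,\qquad
R^{(n_3)}_{\bar m,m}\asymp(\log m)^d(\log\bar m)^d .
\]
Hence the right-hand side of \eqref{eq:main-final-error-functionL} is bounded by
\[
C\Bigl[\bar m^{-n_1}(\log\bar m)^d+m^{-n_2}(\log m)^d+(\log m)^d\sqrt{m_*}\,J^{-1/m_*}+(\log\bar m)^d\sqrt{\bar m_*}\,H^{-1/\bar m_*}+(\log m\log\bar m)^dP^{-\frac{n_3-\ell}{d}}\Bigr].
\]

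\textbf{Step 2 (balancing).} Fix $\varepsilon$ and take
\[
\bar m\asymp\varepsilon^{-1/n_1},\qquad
m\asymp\varepsilon^{-1/n_2},\qquad
J=\lceil(\sqrt{m_*}/\varepsilon)^{m_*}\rceil,\qquad
H=\lceil(\sqrt{\bar m_*}/\varepsilon)^{\bar m_*}\rceil,\qquad
P=\lceil\varepsilon^{-d/(n_3-\ell)}\rceil .
\]
Here $n_3>\ell$ is used so that $P$ is finite. Every bracketed term is then at most $\varepsilon$ times a power of $\log(1/\varepsilon)$. The worst such power is the $(\log m\log\bar m)^d$ factor on the trunk term, so the total error is $\lesssim \varepsilon\,(\log(1/\varepsilon))^{2d}$. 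Note that $\log J\asymp m_*\log(m_*/\varepsilon)\asymp m_*\log(1/\varepsilon)$, and similarly for $H$. Since $m_*\asymp\varepsilon^{-d/n_2}$ and $\bar m_*\asymp\varepsilon^{-d/n_1}$, the larger of the two is $\varepsilon^{-d/\min\{n_1,n_2\}}$.

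\textbf{Step 3 (parameter count).} The total parameter count satisfies
\[
N_{\mathrm{tot}}\le HJP+J K_B+H K_E+P K_T .
\]
Using the architecture sizes $K_B=\mathcal O(m_*^2\log m_*+m_*\log J)$, $K_E=\mathcal O(\bar m_*^2\log\bar m_*+\bar m_*\log H)$ and $K_T=\mathcal O(\log P)$ from Theorem~\ref{thm:main-final-functionL}, every factor other than $H$ and $J$ is only polynomial in $1/\varepsilon$. Therefore
\[
\log N_{\mathrm{tot}}\asymp \log J+\log H\asymp \varepsilon^{-d/\min\{n_1,n_2\}}\log(1/\varepsilon).
\]
The lower bound holds because $N_{\mathrm{tot}}\ge\max\{J,H\}$. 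The upper bound holds because the remaining factors only contribute $\mathcal O(\log(1/\varepsilon))$ to the logarithm.

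\textbf{Step 4 (inversion).} Write $t=\varepsilon^{-d/\min\{n_1,n_2\}}$. Step 3 gives $t\log t\asymp\log N_{\mathrm{tot}}$, and inverting yields $t\asymp\log N_{\mathrm{tot}}/\log\log N_{\mathrm{tot}}$. Hence
\[
\varepsilon\asymp\Bigl(\frac{\log N_{\mathrm{tot}}}{\log\log N_{\mathrm{tot}}}\Bigr)^{-\min\{n_1,n_2\}/d}
\quad\text{and}\quad
\log(1/\varepsilon)\asymp\log\log N_{\mathrm{tot}} .
\]
Substituting into the bound $\varepsilon(\log(1/\varepsilon))^{2d}$ from Step 2 gives the claimed rate with the factor $(\log\log N_{\mathrm{tot}})^{2d}$. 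For an arbitrary large $N_{\mathrm{tot}}$, one picks the largest $\varepsilon$-level whose network fits within the budget. Monotonicity of the construction in $\varepsilon$ changes only the constants.

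\textbf{Main obstacle.} The delicate point is Step 3: checking that the product $HJP$ and the polynomially large terms such as $m_*^2\log m_*$ do not change $\log N_{\mathrm{tot}}$ beyond constant factors. It must also be checked that the $\log(1/\varepsilon)$ factor in $\log J$ is tracked precisely, because it is exactly the source of the $\log\log N_{\mathrm{tot}}$ in the denominator of the rate. Mixing up $\log m_*$ with $\log(1/\varepsilon)$ only costs constants, so this bookkeeping is safe.
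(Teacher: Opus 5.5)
Your proposal is correct and follows the same route as the paper. Both arguments specialize Theorem~\ref{thm:main-final-functionL} to $k_i=\alpha_i=\beta_i=0$, balance $J$ and $H$ against the discretization terms so that $\log J\asymp m_*\log m_*$ and $\log H\asymp\bar m_*\log\bar m_*$, bound $N_{\mathrm{tot}}\lesssim HK_E+JK_B+PK_T+JHP$, and invert. Your log bookkeeping is actually tidier than the paper's, which first obtains an unspecified $(\log\log N_{\mathrm{tot}})^C$. Taking $P\asymp\varepsilon^{-d/(n_3-\ell)}$ without log corrections makes the $(\log m\log\bar m)^d$ factor in $R^{(n_3)}_{\bar m,m}$ the sole source of $(\log\log N_{\mathrm{tot}})^{2d}$, and the lower bound $N_{\mathrm{tot}}\ge\max\{J,H\}$ is exactly what justifies $\log(1/\varepsilon)\lesssim\log\log N_{\mathrm{tot}}$.
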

Corollary \ref{cor:balanced-main-final-functionL} is proved in Appendix \ref{proof:corollary}.

\subsection{Approximation error for \(\lambda\ge 2\)}

We now extend the two-input approximation result to the general
\(\lambda\)-input case. The construction follows the same strategy as in the
case \(\lambda=2\): we first discretize each input function, then approximate
the dependence on the resulting finite-dimensional variables by branch
networks, and finally approximate the remaining spatial coefficient functions
by trunk networks. Since the argument is essentially iterative, we only state
the result here and defer the proof to the appendix.

\begin{corollary}[Balanced rate for the \(\lambda\)-input case]
\label{cor:more-balanced-main-final-functionL}
Suppose that the \(\lambda\)-input operator \(\mathcal G\) satisfies
Assumptions~\ref{asspde}, \ref{assump:X-bound}, and
\ref{assump:G-output-regularity}. Assume also that
\(n_i>2k_i\) for \(i=1,\ldots,\lambda\), and $n_{\lambda+1}>\ell$. For each input, let
\[
M_i:=(m_i+1)^{d_i}
\]
denote the number of sampling points used in the discretization
\(\mathcal D_{m_i}f_i\).
Then there exists a neural operator \(\mathcal G_{\boldsymbol\theta}\) of the
shared branch--trunk form
\begin{equation}
\begin{aligned}
\mathcal G_{\boldsymbol\theta}(f_1,\ldots,f_\lambda)(\vx)
:=
\sum_{s_1=1}^{J_1}\cdots
\sum_{s_\lambda=1}^{J_\lambda}
\sum_{p=1}^{P}
e_{s_1,\ldots,s_\lambda,p}
\prod_{i=1}^{\lambda}
\mathcal B_{i,s_i}(\mathcal D_{m_i}f_i)
\,
\mathcal T_p(\vx),
\end{aligned}
\label{eq:lambda-shared-architecture}
\end{equation}
such that, for every
\((f_1,\ldots,f_\lambda)\in\prod_{i=1}^{\lambda}\mathcal X_i\),
\begin{align}
&\left\|
\mathcal G(f_1,\ldots,f_\lambda)
-
\mathcal G_{\boldsymbol\theta}(f_1,\ldots,f_\lambda)
\right\|_{W^{\ell,\infty}(\Omega_{\lambda+1})}
\notag\\
&\quad\lesssim
\sum_{i=1}^{\lambda}
M_i^{-\frac{n_i-2k_i}{d_i}}(\log M_i)^{d_i}
\left[
\prod_{\substack{j=1\\j\neq i}}^{\lambda}
\left(
1+S M_j^{\frac{2k_j}{d_j}}(\log M_j)^{d_j}
\right)
\right]^{\alpha_i}
\notag\\
&\qquad+
\sum_{i=1}^{\lambda}
S L_i
\sqrt{M_i}\,
M_i^{\frac{2k_i}{d_i}}
(\log M_i)^{d_i}
J_i^{-1/M_i}
\left[
\prod_{\substack{j=1\\j\neq i}}^{\lambda}
\left(
1+S M_j^{\frac{2k_j}{d_j}}(\log M_j)^{d_j}
\right)
\right]^{\alpha_i}
\notag\\
&\qquad+
\left[
\prod_{i=1}^{\lambda}
\left(
1+S M_i^{\frac{2\beta_i}{d_i}}(\log M_i)^{d_i}
\right)
\right]
P^{-\frac{n_{\lambda+1}-\ell}{d_{\lambda+1}}}.
\label{eq:most-general-shared}
\end{align}
Here the implicit constant is independent of \(M_i,J_i,P\).

In general, the balancing of \eqref{eq:most-general-shared} depends on the
interaction among the parameters \(\alpha_i,\beta_i,k_i,n_i,d_i\). In the
special case
\(
\alpha_i=\beta_i=0,~
i=1,\ldots,\lambda,
\)
the dominant approximation rate can be balanced explicitly. Let
\(N_{\mathrm{tot}}\) denote the total number of trainable parameters of
\(\mathcal G_{\boldsymbol\theta}\). Define
\[
Q_{\max}
:=
\max_{1\le i\le \lambda}
\frac{d_i}{n_i-2k_i}.
\]
Then, up to higher logarithmic factors,
\[
\sup_{(f_1,\ldots,f_\lambda)\in\prod_{i=1}^{\lambda}\mathcal X_i}
\left\|
\mathcal G(f_1,\ldots,f_\lambda)
-
\mathcal G_{\boldsymbol\theta}(f_1,\ldots,f_\lambda)
\right\|_{W^{\ell,\infty}(\Omega_{\lambda+1})}
\lesssim
\left(
\frac{\log N_{\mathrm{tot}}}{\log\log N_{\mathrm{tot}}}
\right)^{-1/Q_{\max}}
(\log\log N_{\mathrm{tot}})^{\sum_{i=1}^{\lambda}d_i}.
\]
\end{corollary}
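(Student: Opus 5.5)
The plan is to redo the proof of Theorem~\ref{thm:main-final-functionL} by induction on the number of inputs, discretizing and branching one input at a time, and then to balance the resulting bound \eqref{eq:most-general-shared}.

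\textbf{Step 1: discretization error.} For each $i$, let $\Pi_{m_i}$ be the tensorized Chebyshev pseudo-spectral interpolation built from $\mathcal D_{m_i}f_i$, as in the $\lambda=2$ case. The estimates used there give:
\begin{itemize}
\item an approximation error $\|f_i-\Pi_{m_i}f_i\|_{W^{k_i,\infty}}\lesssim m_i^{-(n_i-2k_i)}(\log m_i)^{d_i}$, i.e.\ $M_i^{-(n_i-2k_i)/d_i}(\log M_i)^{d_i}$;
\item a stability bound $\|\Pi_{m_i}f_i\|_{W^{k_i,\infty}}\lesssim S M_i^{2k_i/d_i}(\log M_i)^{d_i}$.
\end{itemize}
The second bound comes from the Lebesgue constant combined with Markov's inequality.

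We then telescope:
\[
\mathcal G(f_1,\ldots,f_\lambda)-\mathcal G(\Pi f_1,\ldots,\Pi f_\lambda)
=\sum_{i=1}^{\lambda}\bigl[\mathcal G(\Pi f_{<i},f_i,f_{>i})-\mathcal G(\Pi f_{<i},\Pi f_i,f_{>i})\bigr].
\]
Applying Assumption~\ref{asspde} to the $i$-th term produces the factor $\prod_{j\neq i}(1+\|\cdot\|_{W^{k_j,\infty}})^{\alpha_i}$. Inserting the stability bound for $j<i$, and $U$ for $j>i$ (which is dominated), yields the first sum in \eqref{eq:most-general-shared}.

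\textbf{Step 2: branch approximation, one input at a time.} The map $\boldsymbol v\mapsto \mathcal G(\ldots,\Pi_{m_i}[\boldsymbol v],\ldots)$ is Lipschitz on the cube $[-S,S]^{M_i}$. We expand it in a tensor basis of piecewise-constant (or hat) functions on a grid of $J_i$ cells, so each side has $J_i^{1/M_i}$ points. Exactly as for $\mathcal B_s,\mathcal E_h$ in Theorem~\ref{thm:main-final-functionL}, these basis functions are realized by deep narrow ReLU branch networks $\mathcal B_{i,s_i}$.

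The separate Lipschitz constant of this map is $L_iM_i^{2k_i/d_i}(\log M_i)^{d_i}$ times the same product factor. The $\ell^\infty\!\to W^{k_i,\infty}$ stability of $\Pi$ contributes $\sqrt{M_i}$. Together with the grid width $J_i^{-1/M_i}$, this gives the second sum.

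We freeze the coefficients input by input. After replacing inputs $1,\ldots,i-1$ by grid values, the remaining dependence is again a Lipschitz operator on the reconstructed (non-compact-class) functions. This is why $\mathcal G$ is defined on the larger space.

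The product of the $\lambda$ branch expansions leaves spatial coefficient functions $c_{s_1\ldots s_\lambda}(\boldsymbol x)=\mathcal G(\Pi[\boldsymbol v_{s_1}],\ldots)(\boldsymbol x)$.

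\textbf{Step 3: trunk approximation.} By Assumption~\ref{assump:G-output-regularity}, each coefficient function satisfies
\[
\|c_{s_1\ldots s_\lambda}\|_{W^{n_{\lambda+1},\infty}}\lesssim\prod_i\bigl(1+SM_i^{2\beta_i/d_i}(\log M_i)^{d_i}\bigr).
\]
We approximate all of them with shared trunk networks $\mathcal T_p$, i.e.\ a ReLU partition of unity with local polynomials in $W^{\ell,\infty}$, with rate $P^{-(n_{\lambda+1}-\ell)/d_{\lambda+1}}$. Since the branch basis is a partition of unity with total mass $1$, summing over $s$ does not amplify this error. This gives the last term.

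\textbf{Step 4: balancing for $\alpha_i=\beta_i=0$.} Given a target accuracy $\varepsilon$:
\begin{itemize}
\item choose $M_i\asymp\varepsilon^{-d_i/(n_i-2k_i)}$ up to logarithmic factors;
\item choose $\log J_i\asymp M_i\log(\sqrt{M_i}M_i^{2k_i/d_i}/\varepsilon)\asymp M_i\log M_i$;
\item choose $P\asymp\varepsilon^{-d_{\lambda+1}/(n_{\lambda+1}-\ell)}$.
\end{itemize}
The parameter count is $N_{\rm tot}\approx P\prod_iJ_i$ times polynomial factors, so $\log N_{\rm tot}\asymp\sum_i M_i\log M_i$, which is dominated by $\max_i\varepsilon^{-Q_i}\log(1/\varepsilon)$ with $Q_i=d_i/(n_i-2k_i)$. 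Inverting $\log N\asymp\varepsilon^{-Q_{\max}}\log(1/\varepsilon)$ gives $\varepsilon\asymp(\log N/\log\log N)^{-1/Q_{\max}}$. The $(\log M_i)^{d_i}$ factors become $(\log\log N)^{\sum d_i}$.

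\textbf{Main obstacle.} The delicate part is Step 2. We must show that the iterated freezing keeps Lipschitz control uniformly for the reconstructed functions $\Pi[\boldsymbol v]$, which lie outside $\mathcal X_i$ and have norms growing in $M_j$. That growth is exactly what forces the product factors in \eqref{eq:most-general-shared}, so the bookkeeping must track them carefully.
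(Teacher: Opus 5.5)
Your proposal is correct and follows essentially the same route as the paper: Chebyshev discretization with a telescoping Lipschitz argument, then iterated hat-function branch expansions one input at a time, where nonnegativity and the bounded sum of the branch networks prevent error amplification. It then uses a shared local-polynomial ReLU trunk and the same balancing ($M_i\asymp\varepsilon^{-d_i/(n_i-2k_i)}$, $\log J_i\asymp M_i\log M_i$, and inversion of $\log N_{\mathrm{tot}}\lesssim\varepsilon^{-Q_{\max}}(\log(1/\varepsilon))^{C}$). One small correction: the factor $\sqrt{M_i}$ in the second sum comes from the Euclidean diameter of a grid cell in $[-S,S]^{M_i}$ in the hat-function construction (Lemma~\ref{lem:NN-approx-coefficient-g}), not from the stability of the reconstruction operator; that stability instead supplies the factor $M_i^{2k_i/d_i}(\log M_i)^{d_i}$, which you already placed in your Lipschitz constant.
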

Corollay \ref{cor:more-balanced-main-final-functionL} is proved in Appendix \ref{proof:corollary}.
\begin{remark}
\label{rmk:bla-dis}
In the case \(\alpha_i=\beta_i=0\), under the shared branch--trunk
architecture with partition-of-unity-type branch bounds, the balanced exponent
is determined by the most difficult input direction. More precisely, define
\[
Q_{\max}
:=
\max_{1\le i\le\lambda}
\frac{d_i}{n_i-2k_i}.
\]
Then the dominant approximation rate is of the form
\[
\left(
\frac{\log N_{\rm tot}}{\log\log N_{\rm tot}}
\right)^{-1/Q_{\max}}
(\log\log N_{\rm tot})^{\sum_{i=1}^{\lambda}d_i}.
\]

This formula shows how each input affects the final rate. If an additional
input has high regularity, meaning that \(n_i\) is large relative to \(d_i\) and
\(k_i\), then the contribution
\[
\frac{d_i}{n_i-2k_i}
\]
is small. If this contribution is smaller than the current maximum
\(Q_{\max}\), then adding this input does not change the leading exponent of the
rate. It may only affect constants and logarithmic factors. On the other hand,
if the new input has large dimension \(d_i\), low smoothness \(n_i\), or a large
Sobolev Lipschitz order \(k_i\), then \(n_i-2k_i\) becomes small and
\(
\frac{d_i}{n_i-2k_i}
\)
may become the new maximum. In that case, this input becomes the bottleneck and
deteriorates the final rate.

The role of \(k_i\) is different from that of \(n_i\). The parameter \(n_i\)
measures the intrinsic smoothness of the \(i\)-th input class
\(\mathcal X_i\), while \(k_i\) measures the Sobolev order in which the operator
\(\mathcal G\) is Lipschitz with respect to the \(i\)-th input. In the
discretization step, the reconstruction error in \(W^{k_i,\infty}\) scales as
\[
M_i^{-\frac{n_i-2k_i}{d_i}},
\]
because the Chebyshev reconstruction has a derivative-stability cost of order
\(M_i^{2k_i/d_i}\). Therefore, larger \(k_i\) reduces the effective smoothness
from \(n_i\) to \(n_i-2k_i\), making the approximation problem harder.

The proof also gives a concrete guideline for choosing the discretization levels
and branch ranks. To achieve accuracy \(\varepsilon\), the balanced choice is,
up to logarithmic factors,
\[
M_i
\asymp
\varepsilon^{-\frac{d_i}{n_i-2k_i}}.
\]
The corresponding branch ranks are chosen independently in each input direction
according to
\[
\log J_i
\asymp
M_i\log M_i,
\qquad
i=1,\ldots,\lambda,
\]
up to lower-order logarithmic factors. Thus, harder input spaces require finer
discretization and larger branch subnetworks.

\end{remark}

\section{Generalization Error}\label{sec:gen}

Let \(\mu_i\) be a probability measure on \(\mathcal X_i\),
\(i=1,\ldots,\lambda\), and let \(\mu_z\) be a probability measure on
\(\Omega_{\lambda+1}\). We consider the following hierarchical Sobolev training
setting.

\begin{setting}[\(\lambda\)-input hierarchical Sobolev training setting]
\label{set.1}
Let
\[
\mathcal G:\prod_{r=1}^{\lambda}\mathcal X_r
\to W^{\ell,\infty}(\Omega_{\lambda+1})
\]
be a \(\lambda\)-input operator, where \(\ell\in\{0,1\}\). Suppose that the
training data are generated hierarchically as follows. First, sample
\[
\{f^{(1)}_{i_1}\}_{i_1=1}^{n_1^{\rm samp}}
\subset \mathcal X_1 .
\]
For each \(f^{(1)}_{i_1}\), sample
\[
\{f^{(2)}_{i_1,i_2}\}_{i_2=1}^{n_2^{\rm samp}}
\subset \mathcal X_2 .
\]
Proceeding recursively, for each
\((f^{(1)}_{i_1},\ldots,f^{(r-1)}_{i_1,\ldots,i_{r-1}})\), sample
\[
\{f^{(r)}_{i_1,\ldots,i_r}\}_{i_r=1}^{n_r^{\rm samp}}
\subset \mathcal X_r,
\qquad r=2,\ldots,\lambda .
\]
Finally, for each sampled \(\lambda\)-tuple, sample output locations
\[
\{\vz_{i_1,\ldots,i_\lambda,k}\}_{k=1}^{n_z}
\subset \Omega_{\lambda+1}.
\]

For every multi-index \(\gamma\) with \(|\gamma|\le\ell\), the observed
Sobolev training labels are
\[
y_{i_1,\ldots,i_\lambda,k}^{(\gamma)}
=
\partial^\gamma
\mathcal G
\left(
f^{(1)}_{i_1},
f^{(2)}_{i_1,i_2},
\ldots,
f^{(\lambda)}_{i_1,\ldots,i_\lambda}
\right)
(\vz_{i_1,\ldots,i_\lambda,k})
+
\varepsilon_{i_1,\ldots,i_\lambda,k}^{(\gamma)} .
\]
We assume that, for each \(|\gamma|\le\ell\), the noise variables
\(\varepsilon_{i_1,\ldots,i_\lambda,k}^{(\gamma)}\) are independent,
mean-zero, and sub-Gaussian with variance proxy bounded by \(\sigma^2\).

For a neural operator class \(\mathcal F_G\), define the Sobolev empirical risk
by
\[
\mathcal L_{\mathcal S}(\mathcal G_{\rm NN})
:=
\frac{1}{N_{\rm train}}
\sum_{i_1=1}^{n_1^{\rm samp}}
\cdots
\sum_{i_\lambda=1}^{n_\lambda^{\rm samp}}
\sum_{k=1}^{n_z}
\sum_{|\gamma|\le\ell}
\left|
\partial^\gamma
\mathcal G_{\rm NN}
\left(
\mathcal D_{m_1}f^{(1)}_{i_1},
\ldots,
\mathcal D_{m_\lambda}f^{(\lambda)}_{i_1,\ldots,i_\lambda}
\right)
(\vz_{i_1,\ldots,i_\lambda,k})
-
y_{i_1,\ldots,i_\lambda,k}^{(\gamma)}
\right|^2 .
\]
The empirical risk minimizer is defined by
\[
\widehat{\mathcal G}_{\mathcal S}
\in
\argmin_{\mathcal G_{\rm NN}\in\mathcal F_G}
\mathcal L_{\mathcal S}(\mathcal G_{\rm NN}) .
\]

We assume that the total number of inner samples and output locations does not
grow too fast compared with the number of outer samples. More precisely, for
some sufficiently small \(\delta>0\),
\[
\log\left(
n_z\prod_{r=2}^{\lambda}n_r^{\rm samp}
\right)
\le
\left(n_1^{\rm samp}\right)^\delta .
\]
\end{setting}

{Setting~\ref{set.1} is motivated by common sampling structures in scientific
computing. For example, consider
\[
-\Delta u+a(x)u=f,\qquad u|_{\partial\Omega}=0,
\]
with solution operator \(\mathcal G(a,f)=u\). The coefficient \(a(\vx)\) may
represent a fixed physical system, material property, or medium parameter.
For each fixed \(a\), one may observe several solutions corresponding to
different source terms \(f\). Thus the data are naturally hierarchical: one
first samples coefficients \(a\), and then, for each coefficient, samples
several source terms \(f\) and their corresponding solution observations. In
this case, \(a\) is the outer input with sample size \(n_1^{\rm samp}\), while
\(f\) is the inner input with sample size \(n_2^{\rm samp}\). }

\begin{theorem}[Sobolev generalization error for the two-input case]
\label{thm:gene-sobolev}
Consider Setting~\ref{set.1} with \(\lambda=2\) and \(\ell\in\{0,1\}\).
For simplicity, write
\[
n_f:=n_1^{\rm samp},
\qquad
n_g:=n_2^{\rm samp}.
\]
Suppose that the two-input operator
\(\mathcal G:\mathcal X\times\mathcal Y\to W^{\ell,\infty}(\Omega)\)
satisfies the assumptions of
Corollary~\ref{cor:balanced-main-final-functionL} in the
\(W^{\ell,\infty}\)-output setting. Let \(\mathcal F_G\) be the neural operator
class consisting of functions of the form
\[
\mathcal G_{\rm NN}(f,g)(\vz)
=
\sum_{s=1}^{J}
\sum_{h=1}^{H}
\sum_{p=1}^{P}
e_{h,p,s}\,
\mathcal E_h(\mathcal D_{\bar m}f)\,
\mathcal B_s(\mathcal D_m g)\,
\mathcal T_p(\vz).
\]
Let \(\widehat{\mathcal G}_{\mathcal S}\in\mathcal F_G\) be the Sobolev
empirical risk minimizer defined in Setting~\ref{set.1}.

Choose {$n_g,n_z$ so that $\log(n_gn_z)\le n_f^\delta$, and \(H=n_f^c\) with $c<1-\delta$}. Set
\(\Lambda_f:=\log n_f/\log\log n_f\). The discretization levels and branch
ranks can be chosen such that
\[
m_*\asymp \Lambda_f^{d/n_2},
\qquad
\bar m_*\asymp \Lambda_f^{d/n_1},
\qquad
\log J=\mathcal O\!\left(\Lambda_f^{d/n_2}\log\log n_f\right).
\]
The trunk rank satisfies \(P=(\log n_f)^{C_P+o(1)}\) for some constant
\(C_P>0\). In particular, \(P=n_f^{o(1)}\) and
\(\log P=\mathcal O(\log\log n_f)\).
The \(g\)-branch networks \(\mathcal B_s\) may be chosen with
\(p_B=\mathcal O(1)\),
\[
L_B,K_B
=
\mathcal O\!\left(
\Lambda_f^{2d/n_2}\log\log n_f
\right),
\qquad
\log \kappa_B
=
\mathcal O\!\left(
\Lambda_f^{d/n_2}\log\log n_f
\right).
\]
The \(f\)-branch networks \(\mathcal E_h\) may be chosen with
\(p_E=\mathcal O(1)\),
\[
L_E,K_E
=
\mathcal O\!\left(
\Lambda_f^{2d/n_1}\log\log n_f
+
\Lambda_f^{d/n_1}\log n_f
\right),
\qquad
\log \kappa_E
=
\mathcal O(\log n_f).
\]
The trunk networks \(\mathcal T_p\) may be chosen with
\[
L_T,K_T=\mathcal O(\log\log n_f),
\qquad
p_T=\mathcal O(1),
\qquad
\log\kappa_T=\mathcal O(\log\log n_f),
\qquad
M_T\le C .
\]

Then the Sobolev empirical risk minimizer satisfies
\begin{align}
&\mathbb E_{\mathcal S}
\mathbb E_{f\sim\mu_f}
\mathbb E_{g\sim\mu_g}
\mathbb E_{\vz\sim\mu_z}
\sum_{|\gamma|\le \ell}
\left|
\partial_{\vz}^{\gamma}
\mathcal G(f,g)(\vz)
-
\partial_{\vz}^{\gamma}
\widehat{\mathcal G}_{\mathcal S}
(\mathcal D_{\bar m}f,\mathcal D_m g)(\vz)
\right|^2\le
C
\left(
\frac{\log n_f}{\log\log n_f}
\right)^{-\frac{2}{\max\left\{\frac d{n_1},\frac d{n_2}\right\}}}
(\log\log n_f)^{4d}.
\notag
\end{align}
Here \(C>0\) is independent of \(n_f,n_g,n_z\).
\end{theorem}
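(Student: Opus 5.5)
We follow the standard ERM decomposition of \cite{liu2024neural}, adapted to the Sobolev loss and to the hierarchical sampling. For \(\mathcal G_{\rm NN}\in\mathcal F_G\) write the population Sobolev risk
\[
\mathcal R(\mathcal G_{\rm NN}):=\mathbb E_{f,g,\vz}\sum_{|\gamma|\le\ell}\bigl|\partial^\gamma\mathcal G(f,g)(\vz)-\partial^\gamma\mathcal G_{\rm NN}(\mathcal D_{\bar m}f,\mathcal D_mg)(\vz)\bigr|^2 .
\]
The first step is the split
\[
\mathbb E_{\mathcal S}\mathcal R(\widehat{\mathcal G}_{\mathcal S})\le 2\,\mathbb E_{\mathcal S}\bigl[\mathcal L^{0}_{\mathcal S}(\widehat{\mathcal G}_{\mathcal S})\bigr]+\mathbb E_{\mathcal S}\bigl[\mathcal R(\widehat{\mathcal G}_{\mathcal S})-2\mathcal L^{0}_{\mathcal S}(\widehat{\mathcal G}_{\mathcal S})\bigr],
\]
where \(\mathcal L^{0}_{\mathcal S}\) is the noiseless empirical Sobolev risk.

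For the \emph{bias term} we compare \(\widehat{\mathcal G}_{\mathcal S}\) with the network \(\mathcal G_{\vtheta}^*\) built in Theorem~\ref{thm:main-final-functionL} and Corollary~\ref{cor:balanced-main-final-functionL}. Since \(\widehat{\mathcal G}_{\mathcal S}\) minimizes the noisy empirical risk, expanding the square gives
\[
\mathbb E\,\mathcal L^0_{\mathcal S}(\widehat{\mathcal G}_{\mathcal S})\le \mathcal L^0(\mathcal G^*_{\vtheta})+\frac{2}{N_{\rm train}}\,\mathbb E\sum\varepsilon\,\partial^\gamma\widehat{\mathcal G}_{\mathcal S}.
\]
\begin{itemize}
\item \textbf{Approximation part.} The term \(\mathcal L^0(\mathcal G^*_{\vtheta})\) is bounded by the squared \(W^{\ell,\infty}\) approximation error. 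With \(k_i=\alpha_i=\beta_i=0\) and the stated choices \(m_*\asymp\Lambda_f^{d/n_2}\), \(\bar m_*\asymp\Lambda_f^{d/n_1}\), \(\log J\asymp m_*\log m_*\), this gives \(\Lambda_f^{-2/\max\{d/n_1,d/n_2\}}(\log\log n_f)^{4d}\). Here \(H=n_f^c\) is larger than needed, so the \(H^{-1/\bar m_*}\) term is negligible. We also pick \(P\) polylogarithmic so that \(P^{-(n_3-\ell)/d}\) is smaller than this rate.
\item \textbf{Noise part.} The cross term is handled by the usual sub-Gaussian maximal inequality over a \(\delta\)-cover of the derivative class evaluated at the sample points. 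It is of order \(\sigma\sqrt{\mathcal R\cdot \log\mathcal N/N}+\delta\).
\end{itemize}

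For the \emph{variance term} \(\mathcal R-2\mathcal L^0_{\mathcal S}\), the key point is the hierarchical dependence. Group the data by the outer index \(i_1\). For fixed \(\mathcal G_{\rm NN}\), the block averages
\[
Z_{i_1}:=\frac1{n_gn_z}\sum_{i_2,k}\sum_{|\gamma|\le\ell}|\cdots|^2
\]
are i.i.d. in \(i_1\), bounded by \(B^2\), where \(B\lesssim P^{\ell/d}\) by \eqref{eq:NN-output-bound-Well}. Their common mean is exactly \(\mathcal R(\mathcal G_{\rm NN})\). We then apply the Bernstein-type ratio inequality (as in \cite{liu2024neural}, Györfi et al.) to \(\{Z_{i_1}\}\), with only \(n_f\) effective samples. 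This yields
\[
\mathbb E[\mathcal R-2\mathcal L^0_{\mathcal S}]\lesssim \frac{B^2\log\mathcal N(\delta)}{n_f}+\delta .
\]
The main obstacle is controlling \(\log\mathcal N\). Derivatives of ReLU networks are not Lipschitz in the parameters, so parameter-perturbation covers fail. Following \cite{anthony1999neural,yang2025deeponet}, we bound instead the uniform empirical \(L^\infty\) covering number on the at most \(n_fn_gn_z\) data points via pseudo-dimension:
\[
\log\mathcal N\lesssim \mathrm{Pdim}\cdot\log(n_fn_gn_z\,B/\delta).
\]
For \(\mathrm{Pdim}\) we note that each \(\partial^\gamma\mathcal G_{\rm NN}\) is piecewise polynomial in the parameters. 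It is a product of branch outputs and a trunk derivative, where the trunk derivative is a ReLU network with some activations replaced by indicator-type functions. The Bartlett--Harvey--Liaw--Mehrabian counting then gives
\[
\mathrm{Pdim}\lesssim N_{\rm tot}\,L_{\max}\log N_{\rm tot}.
\]
Plugging in the architecture sizes, the dominant contribution is \(H\cdot K_E\cdot(\text{poly}\log n_f)=n_f^{c}\,\mathrm{polylog}(n_f)\). The \(J\)- and \(P\)-dependent parameters are \(n_f^{o(1)}\): \(\log J\) is sublogarithmic up to \(\log\log\) factors when \(d/n_2\) is used with the \(\Lambda_f\) scaling, which we check carefully. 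We also use \(\log(n_gn_z)\le n_f^\delta\) and \(\delta=1/n_f\).

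Putting the pieces together, the variance term is of order \(n_f^{c+\delta-1+o(1)}\). This is polynomially small because \(c<1-\delta\), so it is dominated by the logarithmic approximation rate. The noise cross term is absorbed by the same estimate via \(2ab\le a^2/2+2b^2\). Combining the bias and variance terms yields the stated bound \(C\,\Lambda_f^{-2/\max\{d/n_1,d/n_2\}}(\log\log n_f)^{4d}\).
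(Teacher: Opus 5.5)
Your outline follows the paper's proof step for step. Both arguments use the same ingredients:
\begin{itemize}
\item the split of $\mathcal E_{\rm gen}$ into $2\,\mathbb E_{\mathcal S}\|\widehat{\mathcal G}_{\mathcal S}-\mathcal G\|_{n,\ell}^2$ plus a deviation term;
\item the ERM comparison, with the noise cross term handled by a sub-Gaussian maximal inequality over an empirical cover and closed by a quadratic inequality;
\item a ghost-sample ratio bound in which only the $n_f$ outer blocks count as independent;
\item uniform empirical covering numbers of the derivative classes, controlled by pseudo-dimension via \cite[Theorem~12.2]{anthony1999neural}.
\end{itemize}
Your explicit block variables $Z_{i_1}$ are what the paper imports from \cite[Lemma~4]{liu2024neural}. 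Your bound $N_{\rm tot}L_{\max}\log N_{\rm tot}$ replaces the paper's $JHP(L_B^2p_B^2+L_E^2p_E^2+L_T^2p_T^2)$, and your covering radius $1/n_f$ replaces $H^{-1}$. These are cosmetic differences, and both pseudo-dimension bounds are $\gtrsim JHP$.

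The step that fails is the size bookkeeping, specifically the claim that $\log J$ is sublogarithmic, so that $J=n_f^{o(1)}$. From $\log J\asymp m_*\log m_*$ and $m_*\asymp\Lambda_f^{d/n_2}$ you get $\log J\asymp\Lambda_f^{d/n_2-1}\log n_f$. This is $o(\log n_f)$ only when $n_2>d$. For $n_2=d$, $J$ is a fixed power of $n_f$; for $n_2<d$, $J$ is super-polynomial, and then $\mathrm{Pdim}\gtrsim JHP$ overwhelms the factor $1/n_f$ in the deviation term.

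The stated scalings break in two further regimes:
\begin{itemize}
\item For $n_1<d$, the claim that $H=n_f^c$ is larger than needed fails. Then $\log H/\bar m_*=c\log n_f/\Lambda_f^{d/n_1}\to0$, so $H^{-1/\bar m_*}\to1$ and the $H$-branch term of Theorem~\ref{thm:main-final-functionL} does not decay.
\item For $\min\{n_1,n_2\}>d$, the stated $m_*,\bar m_*$ give discretization error only of order $\Lambda_f^{-1}$ up to logarithms, not $\Lambda_f^{-1/\max\{d/n_1,d/n_2\}}$.
\end{itemize}
So the scalings you took from the statement are mutually consistent only when $\max\{d/n_1,d/n_2\}=1$, and ``which we check carefully'' cannot be carried out in general.

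The paper's proof makes the same assertion ($J=H^{o(1)}$), so you inherited this gap rather than introduced it, but it still needs repair. Set $a:=\max\{d/n_1,d/n_2\}$, and take $\bar m_*\asymp\rho\Lambda_f^{(d/n_1)/a}$ and $m_*\asymp\rho\Lambda_f^{(d/n_2)/a}$ for a small constant $\rho>0$. Then:
\begin{itemize}
\item $H=n_f^c$ suffices for the $H$-branch term, and the balanced $J$ satisfies $\log J\le c'\log n_f$, with $c,c'$ as small as desired;
\item the approximation error is still $\lesssim\Lambda_f^{-1/a}(\log\log n_f)^{2d}$, with only the constant depending on $\rho$;
\item the deviation term is of order $n_f^{c+c'+\delta-1+o(1)}$.
\end{itemize}
Thus $J$ is a small power of $n_f$ rather than $n_f^{o(1)}$, and the condition becomes $c+c'+\delta<1$.

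A second point, also shared with the paper: the bound $B\lesssim R^{(n_3)}_{\bar m,m}P^{\ell/d}$ from \eqref{eq:NN-output-bound-Well} is proved only for the constructed network. It does not hold for every element of $\mathcal F_G$, in particular not for $\widehat{\mathcal G}_{\mathcal S}$. Yet both the ratio inequality and Lemma~\ref{lem:cover-pdim} need a uniform bound over the whole class. The crude class-wide bound $C\kappa_eJHP(\max\{d,p_T\}\kappa_T)^{L_T+1}=n_f^{c+c'+o(1)}$ suffices, using $L_T\log\kappa_T=\mathcal O((\log\log n_f)^2)$. It only tightens the condition to $3(c+c')+\delta<1$.
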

Theorem \ref{thm:gene-sobolev} is proved in Section \ref{proof:thm:gene-sobolev}.
In Setting~\ref{set.1}, the
generalization error achieves the same convergence rate as the approximation
error. Combining this observation with
Corollary~\ref{cor:more-balanced-main-final-functionL}, we obtain the following
generalization error estimate.
\begin{corollary}[Sobolev generalization error for the \(\lambda\)-input case]
\label{cor:gene-lambda}
Consider Setting~\ref{set.1} for a general \(\lambda\ge2\) and
\(\ell\in\{0,1\}\). Suppose that the \(\lambda\)-input operator
\(\mathcal G:\prod_{i=1}^{\lambda}\mathcal X_i
\to W^{\ell,\infty}(\Omega_{\lambda+1})
\)
satisfies the same assumptions as in
Corollary~\ref{cor:more-balanced-main-final-functionL}. In particular, assume
that
\(
\alpha_i=\beta_i=0,
~ i=1,\ldots,\lambda .
\)
Let \(n_1^{\rm samp}\) denote the number of outer training samples in
Setting~\ref{set.1}, and define
\[
Q_{\max}
:=
\max_{1\le i\le\lambda}
\frac{d_i}{n_i-2k_i}.
\]
Then one can choose a shared branch--trunk ReLU neural operator class
\(\mathcal F_G^{(\lambda)}\) of the form
\[
\mathcal G_{\boldsymbol\theta}(f_1,\ldots,f_\lambda)(\vx)
=
\sum_{s_1=1}^{J_1}\cdots
\sum_{s_\lambda=1}^{J_\lambda}
\sum_{p=1}^{P}
e_{s_1,\ldots,s_\lambda,p}
\prod_{i=1}^{\lambda}
\mathcal B_{i,s_i}(\mathcal D_{m_i}f_i)
\,
\mathcal T_p(\vx),
\]
where the input-wise discretization levels, branch ranks, and trunk rank are
balanced as in Corollary~\ref{cor:more-balanced-main-final-functionL}. Let
\(\widehat{\mathcal G}_{\mathcal S}\in\mathcal F_G^{(\lambda)}\) be the
Sobolev empirical risk minimizer defined in Setting~\ref{set.1}. Then
\begin{align}
&\mathbb E_{\mathcal S}
\mathbb E_{\substack{
f_i\sim\mu_i,\ i=1,\ldots,\lambda\\
\vz\sim\mu_z
}}
\left[
\sum_{|\gamma|\le \ell}
\left|
\partial^\gamma
\mathcal G(f_1,\ldots,f_\lambda)(\vz)
-
\partial^\gamma
\widehat{\mathcal G}_{\mathcal S}
(\mathcal D_{m_1}f_1,\ldots,\mathcal D_{m_\lambda}f_\lambda)(\vz)
\right|^2
\right]
\notag\\
&\qquad\le
C
\left(
\frac{\log n_1^{\rm samp}}
{\log\log n_1^{\rm samp}}
\right)^{-\frac{2}{Q_{\max}}}
(\log\log n_1^{\rm samp})^{2\sum_{i=1}^{\lambda}d_i}.
\label{eq:gene-lambda-rate}
\end{align}
Here \(C>0\) is independent of the sample sizes.
\end{corollary}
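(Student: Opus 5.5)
We will run the same bias--variance argument used for Theorem~\ref{thm:gene-sobolev}, now with the $\lambda$-input architecture and the balanced parameter choices of Corollary~\ref{cor:more-balanced-main-final-functionL}.

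\textbf{Step 1: error decomposition.} We split the Sobolev population risk of $\widehat{\mathcal G}_{\mathcal S}$ into a bias term and a variance term. The bias term is at most twice the squared approximation error of the best element of $\mathcal F_G^{(\lambda)}$, measured in $W^{\ell,\infty}(\Omega_{\lambda+1})$ uniformly over $\prod_i\mathcal X_i$. Corollary~\ref{cor:more-balanced-main-final-functionL} bounds it by
\[
\varepsilon^2\asymp\Bigl(\tfrac{\log N_{\rm tot}}{\log\log N_{\rm tot}}\Bigr)^{-2/Q_{\max}}(\log\log N_{\rm tot})^{2\sum_i d_i}.
\]
The variance term is an empirical-process deviation between the empirical and population Sobolev losses over $\mathcal F_G^{(\lambda)}$.

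\textbf{Step 2: parameter choice.} We take $\varepsilon$ as a function of $n_1^{\rm samp}$ by setting $\Lambda:=\log n_1^{\rm samp}/\log\log n_1^{\rm samp}$. Following Remark~\ref{rmk:bla-dis}, the parameters are
\[
M_i\asymp\Lambda^{d_i/(n_i-2k_i)},\qquad \log J_i\asymp M_i\log M_i,\qquad P=(\log n_1^{\rm samp})^{C_P}.
\]
Since $M_i\le\Lambda^{Q_{\max}}$, every $\log J_i$ is $\mathcal O(\Lambda^{Q_{\max}}\log\log n_1^{\rm samp})$, up to absorbing constants into $\Lambda$. One additional choice is needed. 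The outer branch index $s_1$, associated with $\mathcal X_1$, is truncated or reparametrized as in Theorem~\ref{thm:gene-sobolev}, where $H=(n_1^{\rm samp})^c$ with $c<1-\delta$ and $\log\kappa_E=\mathcal O(\log n_1^{\rm samp})$. With this choice the outer complexity is polynomial in $n_1^{\rm samp}$ with exponent below one. The inner ranks $J_2,\dots,J_\lambda$ then enter only through logarithms.

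\textbf{Step 3: variance bound.} As in the proof of Theorem~\ref{thm:gene-sobolev}, we group the data by outer index $i_1$. The $n_1^{\rm samp}$ blocks are i.i.d., and each block averages over $n_z\prod_{r\ge2}n_r^{\rm samp}$ points. We apply a concentration inequality for bounded, or sub-Gaussian-noise, least squares to the block-averaged losses. The envelope is controlled by \eqref{eq:NN-output-bound-Well}, whose $\lambda$-input analogue gives a bound $CP^{\ell/d_{\lambda+1}}$, polylogarithmic in $n_1^{\rm samp}$. Complexity is measured by uniform empirical covering numbers of the derivative classes $\{\partial^\gamma\mathcal G_{\rm NN}\}$, $|\gamma|\le\ell$. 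We bound these through pseudo-dimension estimates for piecewise-polynomial networks, as in \cite{anthony1999neural,yang2025deeponet}.

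The factor $\prod_i\mathcal B_{i,s_i}$ multiplied by the trunk, together with the summation over the $\prod_iJ_i\cdot P$ coefficients, is itself a piecewise polynomial network. Its depth is $\sum_i L_{B_i}+L_T+\mathcal O(\lambda)$, and its parameter count is $\prod_iJ_iP+\sum_i J_iK_{B_i}+PK_T$. The log-covering number is therefore of order $(\text{params})\cdot(\text{depth})\cdot\log(\text{data}\cdot\kappa)$. Under the hypothesis $\log(n_z\prod_{r\ge2}n_r^{\rm samp})\le(n_1^{\rm samp})^\delta$, the data size enters only through this logarithm.

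\textbf{Main obstacle.} We expect the bottleneck to be the term $\prod_{i\ge2}J_i$ in the parameter count. Each $\log J_i$ grows like $\Lambda^{Q_{\max}}\log\log n_1^{\rm samp}$, so the product is $\exp(\mathcal O(\lambda\Lambda^{Q_{\max}}\log\log n_1^{\rm samp}))$. We must verify that this is $(n_1^{\rm samp})^{o(1)}$, or at most $(n_1^{\rm samp})^{c'}$ with $c+c'+\delta<1$.

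That requires $\Lambda^{Q_{\max}}\log\log n_1^{\rm samp}\lesssim\log n_1^{\rm samp}$. This holds only if the constant in $M_i\asymp\Lambda^{d_i/(n_i-2k_i)}$ is chosen small, which costs a constant factor in $\varepsilon$, the same device used in the two-input proof. Once this holds, the variance is $\mathcal O((n_1^{\rm samp})^{-(1-c-c'-\delta)}\,\mathrm{polylog})$, which is negligible against $\varepsilon^2$.

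Finally, $\log N_{\rm tot}\asymp\log n_1^{\rm samp}$ up to constants, so $\varepsilon^2$ turns into the rate \eqref{eq:gene-lambda-rate}. The remaining work is routine bookkeeping of logarithmic factors, which produces $(\log\log n_1^{\rm samp})^{2\sum_i d_i}$.
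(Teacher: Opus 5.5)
Your overall strategy is the one the paper intends. The paper gives no separate argument for this corollary; it runs the empirical-process proof of Theorem~\ref{thm:gene-sobolev} on the balanced $\lambda$-input parameters of Corollary~\ref{cor:more-balanced-main-final-functionL}. However, your calibration in Step~2 is wrong, and the verification in your ``main obstacle'' paragraph fails.

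Write $q_i:=d_i/(n_i-2k_i)$, $n:=n_1^{\rm samp}$ and $\Lambda:=\log n/\log\log n$. Your choice $M_i\asymp\Lambda^{q_i}$ makes every discretization error $M_i^{-1/q_i}(\log M_i)^{d_i}\asymp\Lambda^{-1}$ up to $\log\log n$ factors. That is, it corresponds to accuracy $\varepsilon\asymp\Lambda^{-1}$, not $\Lambda^{-1/Q_{\max}}$.

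For the hardest input you then get $\log J_i\asymp M_i\log M_i\asymp\Lambda^{Q_{\max}}\log\Lambda$, whose ratio to $\log n$ is of order $\Lambda^{Q_{\max}-1}$.
\begin{itemize}
\item If $Q_{\max}>1$, this ratio diverges no matter how small the constant in front of $M_i$. So $\prod_iJ_i$, and with it the pseudo-dimension and $\log\mathcal N_{\rm emp}$, is super-polynomial in $n$, and the stochastic terms blow up. The same issue appears if you fix $J_1=H=n^c$ with $q_1>1$: the branch term $\sqrt{M_1}\,H^{-1/M_1}$ then does not even decay.
\item If $Q_{\max}<1$, the variance is harmless. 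But the squared approximation error is only $\Lambda^{-2}$ up to $\log\log n$ factors, which is weaker than the claimed $\Lambda^{-2/Q_{\max}}$.
\end{itemize}
Your closing claim that $\log N_{\rm tot}\asymp\log n$ is also inconsistent with Step~2 unless $Q_{\max}=1$. In Corollary~\ref{cor:more-balanced-main-final-functionL}, $\log N_{\rm tot}\asymp\varepsilon^{-Q_{\max}}$ up to logarithms. The exponents $d/n_1,d/n_2$ do appear in the parameter list of Theorem~\ref{thm:gene-sobolev}, but they are not what the balancing of Corollary~\ref{cor:more-balanced-main-final-functionL} gives once $\varepsilon$ is expressed through $\Lambda$.

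The repair is to normalize by $Q_{\max}$. Take $\varepsilon\asymp(c\Lambda)^{-1/Q_{\max}}$, that is, $M_i\asymp(c\Lambda)^{q_i/Q_{\max}}\le c\Lambda$. The hardest input then has $M_i\asymp c\Lambda$, and every $\log J_i\asymp M_i\log M_i\le C'c\log n$. Both $P$ and the envelope $\prod_i(1+S(\log M_i)^{d_i})\,P^{\ell/d_{\lambda+1}}$ remain polylogarithmic.

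With these choices $P\prod_iJ_i\le n^{\lambda C'c+o(1)}$ and the depths are polylogarithmic. The pseudo-dimension bound, together with $\log(n_z\prod_{r\ge2}n_r^{\rm samp})\le n^\delta$, then makes the stochastic terms $\mathcal O(n^{\lambda C'c+\delta-1+o(1)})$. These are negligible once $\lambda C'c+\delta<1$, which is where your small-constant device is legitimately needed.

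The approximation error becomes $\Lambda^{-1/Q_{\max}}(\log\log n)^{\sum_id_i}$. The $\log\log$ power comes from the factors $(\log M_i)^{d_i}$ and the trunk factor $\prod_i(1+S(\log M_i)^{d_i})$. Squaring gives \eqref{eq:gene-lambda-rate}.

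Finally, drop the claim that $J_2,\dots,J_\lambda$ ``enter only through logarithms.'' They enter polynomially through the $P\prod_iJ_i$ coefficients. That is exactly why every $\log J_i$, not only the outer one, must be capped at a small multiple of $\log n$.
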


\section{Proof of Main Results}
\subsection{Proof of Theorem \ref{thm:main-final-functionL}}
\label{proof:thm:main-final-functionL}
The proof of Theorem~\ref{thm:main-final-functionL} is based on four
approximation steps. Let
\[
\mathcal{P}_N:\mathbb{R}^{(N+1)^d}\to \overline{Q}_N
\]
be the multi-dimensional Chebyshev interpolation operator associated with the tensorized Chebyshev grids, and define
\[
f_{\bar m}:=\mathcal P_{\bar m}\mathcal D_{\bar m}f,
\qquad
g_m:=\mathcal P_m\mathcal D_mg .
\]
We decompose the error as
\begin{align}
&\left\|
\mathcal G(f,g)
-
\sum_{s=1}^J\sum_{h=1}^H\sum_{p=1}^P
e_{h,p,s}\,
\mathcal E_h(\mathcal D_{\bar m}f)\,
\mathcal B_s(\mathcal D_m g)\,
\mathcal T_p
\right\|_{W^{\ell,\infty}(\Omega)}
\notag\\
\leq\;&
\underbrace{
\|\mathcal G(f,g)-\mathcal G(f_{\bar m},g_m)\|_{W^{\ell,\infty}(\Omega)}
}_{\text{Step 1}}
+
\underbrace{
\left\|
\mathcal G(f_{\bar m},g_m)
-
\sum_{s=1}^J
\mathfrak d_{s}(F_m^{f_{\bar m},\vx})\,
\mathcal B_s(\mathcal D_m g)
\right\|_{W^{\ell,\infty}(\Omega)}
}_{\text{Step 2}}
\notag\\
&+
\underbrace{
\left\|
\sum_{s=1}^J
\left[
\mathfrak d_{s}(F_m^{f_{\bar m},\vx})
-
\sum_{h=1}^H e_{h,s}(\vx)\,
\mathcal E_h(\mathcal D_{\bar m}f)
\right]
\mathcal B_s(\mathcal D_m g)
\right\|_{W^{\ell,\infty}(\Omega)}
}_{\text{Step 3}}
\notag\\
&+
\underbrace{
\left\|
\sum_{s=1}^J\sum_{h=1}^H
\left[
e_{h,s}(\vx)
-
\sum_{p=1}^P e_{h,p,s}\,
\mathcal T_p(\vx)
\right]
\mathcal E_h(\mathcal D_{\bar m}f)
\mathcal B_s(\mathcal D_m g)
\right\|_{W^{\ell,\infty}(\Omega)}
}_{\text{Step 4}} .
\label{eq:equations-of-NN}
\end{align}
{where $\mathfrak d_{s}(F_m^{f_{\bar m},\vx})$ depending on $f_{\bar{m}},\vx$ is some coefficient used to approximate $\mathcal G(f_{\bar m},g_m)$ along the second argument. Its expression is presented in Section \ref{sec:step2}.}
The four terms correspond to the four main steps of the construction. In
Step~1, we discretize the input functions \(f\) and \(g\). The approximants
\(f_{\bar m}\) and \(g_m\) are determined solely by the finite pointwise
information \(\mathcal D_{\bar m}f\) and \(\mathcal D_m g\), respectively. In
Step~2, for each fixed \(f_{\bar m}\) and \(\vx\), we regard
\(\mathcal G(f_{\bar m},g_m)(\vx)\) as a function of the finite-dimensional
variable \(\mathcal D_m g\), and approximate it by a basis expansion whose
coefficients depend only on \(f_{\bar m}\) and \(\vx\). In Step~3, we view these
coefficients as functions of the finite-dimensional variable
\(\mathcal D_{\bar m}f\), and apply the same approximation procedure again. In
Step~4, the remaining coefficient functions depend only on the spatial variable
\(\vx\), and are approximated by trunk networks.

We now state the proposition corresponding to each step. The detailed proofs of
these four steps are deferred to the appendix.

\subsubsection{Step 1}

In this step, we estimate the discretization error. We have the following proposition for Step 1 (see a proof in Appendix \ref{proof:step1}):
\begin{proposition}[Approximation error by Chebyshev interpolation]
\label{prop:step1-approximation}
Suppose Assumption~\ref{asspde} and~\ref{assump:X-bound} hold. We have 
\begin{align}
\|\mathcal G(f,g)-\mathcal G(f_{\bar m},g_m)\|_{W^{\ell,\infty}(\Omega)}
\le
C U
\left[
L_1\,\bar m^{-n_1}(\log \bar m)^d
\left(1+S m^{2k_2}(\log m)^d\right)^{\alpha_1}
+
L_2\,m^{-n_2}(\log m)^d
\right].
\end{align}
\end{proposition}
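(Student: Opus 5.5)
We will split the discretization error by a telescoping argument that changes one input at a time:
\[
\mathcal G(f,g)-\mathcal G(f_{\bar m},g_m)
=
\bigl[\mathcal G(f,g)-\mathcal G(f,g_m)\bigr]
+
\bigl[\mathcal G(f,g_m)-\mathcal G(f_{\bar m},g_m)\bigr].
\]
The first bracket changes only the second argument, with $f$ frozen. Assumption~\ref{asspde} with $i_*=2$ bounds it by
\[
L_2\|g-g_m\|_{W^{k_2,\infty}(\Omega)}\bigl(1+\|f\|_{W^{k_1,\infty}(\Omega)}\bigr)^{\alpha_2}.
\]
Since $f\in\mathcal X_1$, Assumption~\ref{assump:X-bound} gives $\|f\|_{W^{k_1,\infty}}\le U$ whenever $k_1\le n_1$, so the prefactor is a harmless constant. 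The second bracket changes only the first argument, with $g_m$ frozen. Applying Assumption~\ref{asspde} with $i_*=1$ gives
\[
L_1\|f-f_{\bar m}\|_{W^{k_1,\infty}(\Omega)}\bigl(1+\|g_m\|_{W^{k_2,\infty}(\Omega)}\bigr)^{\alpha_1}.
\]
Here $g_m$ is not in $\mathcal X_2$, which is why $\mathcal G$ must be defined on the larger space. We therefore need a stability bound on $\|g_m\|_{W^{k_2,\infty}}$, and this is the source of the factor $(1+Sm^{2k_2}(\log m)^d)^{\alpha_1}$.

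Two one-dimensional Chebyshev facts, tensorized over $d$ coordinates, should supply everything.

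(a) \emph{Stability.} For $p\in\overline Q_N$, Markov's inequality applied coordinatewise gives $\|\partial^\gamma p\|_\infty\le C N^{2|\gamma|}\|p\|_\infty$. The Lebesgue constant of tensor Chebyshev interpolation at Chebyshev points is $O((\log N)^d)$. Together these give
\[
\|\mathcal P_N\mathcal D_N h\|_{W^{k,\infty}}\le C N^{2k}(\log N)^d\|h\|_{L^\infty}.
\]
With $\|g\|_\infty\le S$, this yields $\|g_m\|_{W^{k_2,\infty}}\le C S m^{2k_2}(\log m)^d$.

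(b) \emph{Approximation.} For $h\in W^{n,\infty}$, take $p^*$ to be a best (or Jackson-type tensor) polynomial approximation of degree $N$ with simultaneous derivative control, $\|\partial^\gamma(h-p^*)\|_\infty\le C N^{-(n-|\gamma|)}\|h\|_{W^{n,\infty}}$. Since $\mathcal P_N$ reproduces $p^*$, we can write
\[
h-\mathcal P_N\mathcal D_N h=(h-p^*)-\mathcal P_N\mathcal D_N(h-p^*).
\]
The first term is bounded directly. For the second, combining (a) with $\|h-p^*\|_\infty\le CN^{-n}U$ gives
\[
\|h-\mathcal P_N\mathcal D_Nh\|_{W^{k,\infty}}\le C U N^{-n+2k}(\log N)^d.
\]
This matches the $M_i^{-(n_i-2k_i)/d_i}$ rate stated in Corollary~\ref{cor:more-balanced-main-final-functionL}.

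For the displayed form, which shows $\bar m^{-n_1}$, the intended reading is either $k_i=0$, or the $N^{2k}$ loss absorbed into the convention of the statement. I would prove the general $N^{-n+2k}$ bound and note that it reduces to the stated one when $k_i=0$.

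Substituting (a) and (b) into the two telescoped terms gives the proposition. The labels $L_1,L_2$ attach exactly as written, after a relabelling consistent with the statement's ordering. The main obstacle is the derivative-stable Chebyshev estimate (a)–(b) in $d$ dimensions with simultaneous derivative approximation. I would handle it by tensorizing one-dimensional Jackson operators, namely de la Vallée Poussin means in $\cos\theta$, which commute with the coordinate structure and give the $N^{-(n-|\gamma|)}$ derivative bounds. The remaining steps are routine bookkeeping.
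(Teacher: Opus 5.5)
Your proposal is correct and follows the paper's proof essentially step for step. Both use the same one-input-at-a-time telescoping through $\mathcal G(f,g_m)$ and apply Assumption~\ref{asspde} to each bracket. Both take the interpolation error from Proposition~\ref{prop:pseudo-spectral}, built from a Jackson approximant, exact reproduction by $\mathcal P_N\mathcal D_N$, the Lebesgue constant and Markov. Both control the background factor with the stability bound $\|g_m\|_{W^{k_2,\infty}}\le CSm^{2k_2}(\log m)^d$ from Lemma~\ref{lem:lebesgue}.

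Your reading of the displayed rate also matches the paper's. Its proof explicitly invokes Assumption~\ref{asspde} with $k_i=0$ and Proposition~\ref{prop:pseudo-spectral} with $k=0$, so $\bar m^{-n_1}$ and $m^{-n_2}$ are the $k_i=0$ case of your $N^{2k-n}$ bound. The paper simply cites Lemma~\ref{lem:jackson} for the simultaneous Jackson estimate you propose to build by hand, and that estimate is not needed at $k=0$. Finally, your brackets already carry $L_1$ and $L_2$ in the statement's order, so no relabelling is required.
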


\subsubsection{Step 2}
\label{sec:step2}
Next, we estimate the error arising from the approximation of the
\(g\)-dependent coefficient.

\begin{proposition}[Approximation of the \(g\)-dependent coefficient]
\label{prop:step2-g-coefficient}
Assume that Assumption~\ref{asspde} holds with the
\(W^{\ell,\infty}(\Omega)\)-norm on the left-hand side, where
\(\ell\in\mathbb N\). Assume also that \(J\) is large enough. Fix
\(f_{\bar m}\), and define
\(F_m^{f_{\bar m},\vx}(\vz):=
\mathcal G(f_{\bar m},\mathcal P_m\vz)(\vx)\) for
\(\vz\in[-S,S]^{m_*}\). Set
\[
A_{m,\bar m}^{(g)}
:=
 L_1 m^{2k_2}(\log m)^d
\left(
1+S\bar m^{2k_1}(\log \bar m)^d
\right)^{\alpha_2},
\]
and
\begin{align}
B_{m,\bar m}^{(g)}
:=
1
&+
L_1S m^{2k_2}(\log m)^d
+
L_2S\bar m^{2k_1}(\log \bar m)^d
\left(
1+S m^{2k_2}(\log m)^d
\right)^{\alpha_1}.
\end{align}
Then, for any \(J\in\mathbb N\), there exist points
\(\vz_1,\ldots,\vz_J\in[-S,S]^{m_*}\) and branch networks
\(\mathcal B_s\in
\mathcal F_{\mathrm{NN}}(m_*,1,L_B,p_B,K_B,\kappa_B,1)\),
\(s=1,\ldots,J\), such that, with
\[
\mathfrak d_s(F_m^{f_{\bar m}},\vx)
:=
F_m^{f_{\bar m},\vx}(\vz_s)
=
\mathcal G(f_{\bar m},\mathcal P_m\vz_s)(\vx),
\]
we have
\begin{align}
&\left\|
\mathcal G(f_{\bar m},g_m)
-
\sum_{s=1}^{J}
\mathfrak d_s(F_m^{f_{\bar m}},\cdot)
\mathcal B_s(\mathcal D_m g)
\right\|_{W^{\ell,\infty}(\Omega)}
\le
C S A_{m,\bar m}^{(g)}\sqrt{m_*}\,J^{-1/m_*}.
\label{eq:step2-Well-bound}
\end{align}
Moreover, the branch networks satisfy
\(\|\sum_{s=1}^{J}\mathcal B_s\|_{L^\infty([-S,S]^{m_*})}\le 2\) and
\(\mathcal B_s\ge0\). Furthermore, they can be chosen with
\(p_B=\mathcal O(1)\),
\[
L_B
=
\mathcal O\!\left(m_*^2\log m_*+m_*\log J\right),
\qquad
K_B
=
\mathcal O\!\left(m_*^2\log m_*+m_*\log J\right),
\]
and
\[
\kappa_B
=
\mathcal O\!\left(
\sqrt{m_*}B_{m,\bar m}^{(g)}J^{1+\frac1{m_*}}
\right).
\]
The implicit constants are independent of \(m\), \(m_*\), \(J\), and
\(f_{\bar m}\).
\end{proposition}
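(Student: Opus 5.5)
The plan is a partition-of-unity (PU) interpolation in the finite-dimensional variable $\vz=\mathcal D_m g\in[-S,S]^{m_*}$. I would first show that $F_m^{f_{\bar m},\vx}$ is Lipschitz in $\vz$, with values measured in $W^{\ell,\infty}(\Omega)$ as functions of $\vx$. For $\vz,\vz'\in[-S,S]^{m_*}$, Assumption~\ref{asspde} in the second argument gives
\[
\|\mathcal G(f_{\bar m},\mathcal P_m\vz)-\mathcal G(f_{\bar m},\mathcal P_m\vz')\|_{W^{\ell,\infty}}\le L\,\|\mathcal P_m(\vz-\vz')\|_{W^{k_2,\infty}}\bigl(1+\|f_{\bar m}\|_{W^{k_1,\infty}}\bigr)^{\alpha_2}.
\]
Two estimates on Chebyshev interpolation then control the right-hand side:
\begin{itemize}
\item The Lebesgue constant bound $\|\mathcal P_m\vz\|_{L^\infty}\lesssim(\log m)^d\|\vz\|_\infty$.
\item The Markov-type inverse inequality $\|\mathcal P_m\vz\|_{W^{k,\infty}}\lesssim m^{2k}(\log m)^d\|\vz\|_\infty$.
\end{itemize}
These give $\|\mathcal P_m(\vz-\vz')\|_{W^{k_2,\infty}}\lesssim m^{2k_2}(\log m)^d\|\vz-\vz'\|_\infty$ and $\|f_{\bar m}\|_{W^{k_1,\infty}}\lesssim S\bar m^{2k_1}(\log\bar m)^d$. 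So $\vz\mapsto F_m^{f_{\bar m},\cdot}(\vz)$ is Lipschitz with constant $\lesssim A^{(g)}_{m,\bar m}$ in $\|\cdot\|_\infty$, uniformly in $f_{\bar m}$. The same computation with $\vz'=0$ bounds $\|F\|$ by $B^{(g)}_{m,\bar m}$ up to constants, which is what later enters $\kappa_B$.

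Next I would build the PU. Set $N:=\lfloor J^{1/m_*}\rfloor$, so that $(N+1)^{m_*}\le J$ after adjusting constants. Take a uniform grid of $[-S,S]^{m_*}$ with spacing $h\asymp S/N$ and nodes $\vz_s$; unused indices $s$ get $\mathcal B_s\equiv0$. Let $\phi_s(\vz)=\prod_{j=1}^{m_*}\psi\bigl((z_j-z_{s,j})/h\bigr)$, where $\psi$ is the one-dimensional hat function. These are nonnegative and sum to $1$. Then
\[
\sum_s F(\vz_s)\phi_s(\vz)-F(\vz)=\sum_s\phi_s(\vz)\bigl(F(\vz_s)-F(\vz)\bigr).
\]
Only nodes with $\|\vz-\vz_s\|_\infty\le h$ contribute, so in $W^{\ell,\infty}(\Omega)$ this is at most $A^{(g)}h\lesssim SA^{(g)}J^{-1/m_*}$. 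Because $\vx$ enters only through the coefficients $F(\vz_s)$, the $W^{\ell,\infty}$ norm passes through the sum for free. The extra factor $\sqrt{m_*}$ in the statement arises if the Lipschitz estimate is made in the Euclidean norm, or leaves room for the network error below.

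The branch networks $\mathcal B_s$ then have to realize, or approximate, the products $\phi_s$ of $m_*$ hat functions. A hat function is exactly realizable by ReLU. The $m_*$-fold product is approximated by a chain of approximate multiplications using Yarotsky's squaring construction, at depth $O(\log(1/\eta))$ per multiplication with $O(1)$ width. Each factor lies in $[0,1]$, so errors accumulate at most additively, giving total error $m_*\eta$. I would take $\eta\asymp J^{-1-1/m_*}/m_*$. Then $\sum_s|\mathcal B_s-\phi_s|\le J m_*\eta$, which after multiplying by $\max_s\|F(\vz_s)\|\lesssim B^{(g)}$ stays below the main term. Together with the depth needed to compute hat functions along a serialized width-$O(1)$ chain, this yields $L_B,K_B=O(m_*^2\log m_*+m_*\log J)$. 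Nonnegativity is arranged by composing with a final ReLU, and clipping via $\min(\cdot,1)$ keeps $\sum_s\mathcal B_s\le1+J m_*\eta\le2$. The weight bound $\kappa_B$ comes from the scalings $1/h\sim J^{1/m_*}$ and the accuracy-driven constants.

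The main obstacle is the network realization of the $m_*$-dimensional product with width $O(1)$ and the stated parameter bounds, while keeping the accumulated error uniformly summable over $J$ units, since $J$ is exponential in $m_*$. I would first try product approximation by sequential Yarotsky multiplication with a careful error budget. If depth bookkeeping fails, the fallback is to replace the products by $\max\bigl(0,1-\|\vz-\vz_s\|_\infty/h\bigr)$-type bumps, which ReLU realizes exactly. That fallback needs a normalization argument, since these bumps form only an approximate PU, and this is where the $\sqrt{m_*}$ and $\le2$ slack would be used.
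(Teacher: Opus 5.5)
Your overall route is the paper's. You first establish a Lipschitz estimate for $\vz\mapsto\mathcal G(f_{\bar m},\mathcal P_m\vz)$ from Assumption~\ref{asspde} and the Lebesgue and Markov bounds, as in Lemma~\ref{lem:Lip-coefficient-g}. You then use a hat-function partition of unity on a uniform grid of $[-S,S]^{m_*}$ whose networks do not depend on $F$, so that $\partial_{\vx}^\gamma$ acts only on the coefficients $F(\vz_s)$. The paper argues scalar-wise for each $F^{f_{\bar m},\vx}_{m,\gamma}$ and imports the network realization from \cite[Theorem~5]{liu2024neural}. Your $W^{\ell,\infty}(\Omega)$-valued formulation is equivalent, and you are right that the $\ell^\infty$-Lipschitz bound makes the factor $\sqrt{m_*}$ unnecessary. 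Parts of your network realization, however, do not work as written.

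\textbf{The width claim.} The ``serialized width-$O(1)$ chain'' is not available in the paper's network model.
\begin{itemize}
\item The input enters only the first layer. A network of hidden width $p<m_*$ therefore depends on $\vz$ only through $W_1\vz$, and is constant along some $v\in\ker W_1$ with $\|v\|_\infty=1$.
\item At a node $\vz_s$ at distance at least $h$ from the boundary, $\phi_s(\vz_s)=1$ but $\phi_s(\vz_s+hv)=0$. So no such network approximates $\phi_s$ within $1/2$, let alone within $m_*\eta$.
\item The coordinates or hat values not yet multiplied have to be carried forward, which costs width of order $m_*$.
\end{itemize}
With hidden width $O(m_*)$ your construction does work. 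For example, compute all hats in one layer and form the product in a binary tree of approximate multiplications; depth and parameter counts stay within the stated $L_B,K_B$. The paper asserts $p_B=\mathcal O(1)$ only through the citation, so this is a defect of the statement rather than of your route. What can actually be proved is $p_B=\mathcal O(m_*)$. This changes the later pseudo-dimension bound only by a factor $m_*^2$.

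\textbf{Two bookkeeping fixes.}
\begin{itemize}
\item Your choice $\eta\asymp J^{-1-1/m_*}/m_*$ leaves a network error $\max_s\|F(\vz_s)\|\,Jm_*\eta\lesssim B^{(g)}_{m,\bar m}J^{-1/m_*}$. This is not bounded by $CSA^{(g)}_{m,\bar m}\sqrt{m_*}\,J^{-1/m_*}$ uniformly in $m,\bar m$; take for instance $k_1>0$ and $\bar m\gg m$. You need $\eta\asymp SA^{(g)}_{m,\bar m}/(B^{(g)}_{m,\bar m}m_*J^{1+1/m_*})$, which is exactly how $B^{(g)}_{m,\bar m}$ enters $\kappa_B$ through the paper's $\delta_B$.
\item Comparing with $\vz'=0$ bounds $\|F\|$ by $\|\mathcal G(f_{\bar m},0)\|+SA^{(g)}_{m,\bar m}$. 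This puts the power $\alpha_2$ on the $m^{2k_2}$ term and is not $\lesssim B^{(g)}_{m,\bar m}$ when $\alpha_2>0$. To obtain $B^{(g)}_{m,\bar m}$, telescope $f_{\bar m}\to0$ first and then $\vz\to0$, as in Lemma~\ref{lem:Lip-coefficient-g}.
\end{itemize}
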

Proposition \ref{prop:step2-g-coefficient} is proved in Appendix \ref{proof:step2}.

\subsubsection{Step 3}

For the third term, we further approximate the dependence of
\(
c_s^{f_{\bar m}}(\vx)
:=
\mathfrak d_s(F_m^{f_{\bar m},\vx})
\)
on \(f_{\bar m}\). 

\begin{proposition}[Approximation of the \(f\)-dependent coefficient]
\label{prop:step3-f-coefficient}
Assume that Assumption~\ref{asspde} holds with the
\(W^{\ell,\infty}(\Omega)\)-norm on the left-hand side, where
\(\ell\in\mathbb N\). Assume that \(H\) is large enough. Fix
\(s=1,\ldots,J\), and define
\(\bar F_{\bar m,s}^{\vx}(\vw):=
\mathfrak d_s(F_m^{\mathcal P_{\bar m}\vw,\vx})\) for
\(\vw\in[-S,S]^{\bar m_*}\). Set
\[
A_{\bar m,m}^{(f)}
:=
 L_2
\bar m^{2k_1}(\log \bar m)^d
\left(
1+S m^{2k_2}(\log m)^d
\right)^{\alpha_1},
\]
and
\begin{align}
\bar B_{\bar m,m}^{(f)}
:=
1
&+
L_2 S\bar m^{2k_1}(\log \bar m)^d
\left(
1+S m^{2k_2}(\log m)^d
\right)^{\alpha_1}
+
L_1 S m^{2k_2}(\log m)^d.
\end{align}
Then, for any \(H\in\mathbb N\), there exist points
\(\vw_1,\ldots,\vw_H\in[-S,S]^{\bar m_*}\) and branch networks
\(\mathcal E_h\in
\mathcal F_{\mathrm{NN}}(\bar m_*,1,L_E,p_E,K_E,\kappa_E,1)\),
\(h=1,\ldots,H\), such that, with
\[
e_{h,s}(\vx):=\bar F_{\bar m,s}^{\vx}(\vw_h)
=
\mathcal G(\mathcal P_{\bar m}\vw_h,\mathcal P_m\vz_s)(\vx),
\]
we have
\begin{align}
&
\left\|
c_s^{f_{\bar m}}
-
\sum_{h=1}^{H}
e_{h,s}\mathcal E_h(\mathcal D_{\bar m}f)
\right\|_{W^{\ell,\infty}(\Omega)}
\le
C S A_{\bar m,m}^{(f)}
\sqrt{\bar m_*}\,H^{-1/\bar m_*}.
\label{eq:step3-Well-bound}
\end{align}
Moreover, the branch networks satisfy
\(\|\sum_{h=1}^{H}\mathcal E_h\|_{L^\infty([-S,S]^{\bar m_*})}\le2\) and
\(\mathcal E_h\ge0\). The networks \(\mathcal E_h\) can be chosen with
\(p_E=\mathcal O(1)\),
\[
L_E=
\mathcal O\!\left(\bar m_*^2\log \bar m_*+\bar m_*\log H\right),
\qquad
K_E=
\mathcal O\!\left(\bar m_*^2\log \bar m_*+\bar m_*\log H\right),
\]
and
\[
\kappa_E
=
\mathcal O\!\left(
\sqrt{\bar m_*}\,
\bar B_{\bar m,m}^{(f)}
H^{1+\frac1{\bar m_*}}
\right).
\]
The implicit constants are independent of \(m,\bar m,H,s\), and \(f\).
\end{proposition}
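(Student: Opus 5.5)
The plan mirrors the proof of Proposition~\ref{prop:step2-g-coefficient}, with the roles of the two inputs exchanged. Fix $s$ and $\vx$. The map
\[
\vw\mapsto \bar F^{\vx}_{\bar m,s}(\vw)=\mathcal G(\mathcal P_{\bar m}\vw,\mathcal P_m\vz_s)(\vx)
\]
is a function on the cube $[-S,S]^{\bar m_*}$. We approximate it, uniformly in $\vx$ and in the $W^{\ell,\infty}(\Omega)$ sense, by a piecewise-constant-type quasi-interpolant built on a partition of unity. Concretely, we take a uniform grid on $[-S,S]^{\bar m_*}$ with $K$ points per direction, so that $H=K^{\bar m_*}$. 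The grid nodes are the points $\vw_h$. On this grid we use a continuous piecewise-linear (hat-type) partition of unity $\{\phi_h\}_{h=1}^H$ with $\phi_h\ge0$, $\sum_h\phi_h=1$, and each $\phi_h$ supported in a cube of side $\mathcal O(S/K)$ around $\vw_h$. The candidate approximant is $\sum_h \bar F^{\vx}_{\bar m,s}(\vw_h)\phi_h(\vw)$, evaluated at $\vw=\mathcal D_{\bar m}f$, so that $e_{h,s}(\vx)=\mathcal G(\mathcal P_{\bar m}\vw_h,\mathcal P_m\vz_s)(\vx)$ depends only on $\vx$.

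The error bound follows from a Lipschitz estimate in $\vw$. Because $\sum_h\phi_h=1$, the error at $\vw=\mathcal D_{\bar m}f$ satisfies
\[
\Bigl\|c_s^{f_{\bar m}}-\sum_h e_{h,s}\phi_h(\vw)\Bigr\|_{W^{\ell,\infty}}\le\sum_h\phi_h(\vw)\bigl\|\bar F_{\bar m,s}(\vw)-\bar F_{\bar m,s}(\vw_h)\bigr\|_{W^{\ell,\infty}}.
\]
Only nodes with $|\vw-\vw_h|_\infty\lesssim S/K$ contribute. Assumption~\ref{asspde}, applied in the first argument with the second argument fixed at $\mathcal P_m\vz_s$, bounds each summand by
\[
L_2\|\mathcal P_{\bar m}(\vw-\vw_h)\|_{W^{k_1,\infty}}\bigl(1+\|\mathcal P_m\vz_s\|_{W^{k_2,\infty}}\bigr)^{\alpha_1}.
\]
We then use the Chebyshev interpolation stability estimate from Step~1 (Markov inequality and Lebesgue constant): for grid data $\vv$,
\[
\|\mathcal P_N\vv\|_{W^{k,\infty}}\lesssim N^{2k}(\log N)^d\|\vv\|_\infty .
\]
This gives the factor $\bar m^{2k_1}(\log\bar m)^d$ multiplying $S/K$, and the factor $(1+Sm^{2k_2}(\log m)^d)^{\alpha_1}$ from $\|\vz_s\|_\infty\le S$. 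Since $K=H^{1/\bar m_*}$, this yields $CSA^{(f)}_{\bar m,m}H^{-1/\bar m_*}$. The extra $\sqrt{\bar m_*}$ in the statement leaves room for measuring distances in the $\ell^2$ norm or for the ReLU error below. The bound is uniform in $s$ and $f$ because it never uses the structure of $\vz_s$ beyond $\|\vz_s\|_\infty\le S$.

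Next we realize the partition of unity by ReLU networks $\mathcal E_h\approx\phi_h$. Each tensor hat $\phi_h(\vw)=\prod_{j=1}^{\bar m_*}\psi(K(w_j-w_{h,j})/S)$ is a product of $\bar m_*$ one-dimensional hats. Each hat is exactly representable by a ReLU network of width $\mathcal O(1)$. The $\bar m_*$-fold product is computed by a chain of approximate multiplications (Yarotsky-type squaring). Each multiplication to accuracy $\eta$ costs depth $\mathcal O(\log(1/\eta))$ at width $\mathcal O(1)$. Choosing $\eta\asymp H^{-1-1/\bar m_*}/\bar m_*$ and summing over the $\bar m_*$ factors gives depth and size $\mathcal O(\bar m_*^2\log\bar m_*+\bar m_*\log H)$.

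We then correct the realization to recover the structural properties. Composing with ReLU clipping keeps $\mathcal E_h\ge0$. The perturbed sum satisfies $\sum_h\mathcal E_h\le 1+H\eta\le2$, using that at most $2^{\bar m_*}$ hats are nonzero at any point together with the product error. The weight bound $\kappa_E$ comes from the scaling $K/S$ and the accuracy $\eta$. We would, however, rescale the inputs to avoid large weights, which is where the $H^{1+1/\bar m_*}$ and $\bar B^{(f)}$ factors in $\kappa_E$ are absorbed. Finally, the additional error $\sum_h|e_{h,s}|\,|\mathcal E_h-\phi_h|$ is bounded using Assumption~\ref{asspde} and $|e_{h,s}|\lesssim\bar B^{(f)}_{\bar m,m}$ in $W^{\ell,\infty}$. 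Summed over $h$, this is at most $H\eta\bar B$, which is dominated by the main term by the choice of $\eta$.

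The main obstacle is this last step. We must keep the network error from the approximate products below $SA^{(f)}H^{-1/\bar m_*}$ after summing over all $H$ nodes, while keeping depth only linear in $\log H$ and preserving nonnegativity and the bound $\sum_h\mathcal E_h\le2$. We would first mimic exactly the construction used for $\mathcal B_s$ in Proposition~\ref{prop:step2-g-coefficient}, since the statement's architecture bounds are identical in form with $m_*\to\bar m_*$ and $J\to H$. This reduces the work to checking the Lipschitz constant $A^{(f)}$ and the coefficient bound $\bar B^{(f)}$.
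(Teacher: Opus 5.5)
Your proposal is correct and is essentially the paper's argument. The paper omits this proof as a repetition of Step~2: Lipschitz and sup bounds for $\vw\mapsto\mathcal G(\mathcal P_{\bar m}\vw,\mathcal P_m\vz_s)$ in $W^{\ell,\infty}(\Omega)$, obtained from Assumption~\ref{asspde} and Chebyshev stability, followed by the hat-function partition-of-unity construction of Lemma~\ref{lem:NN-approx-coefficient-g}, whose nodes and networks are independent of $\vx$, $\gamma$ and $s$.

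One bookkeeping fix is needed. The product accuracy must carry the extra factor $SA^{(f)}_{\bar m,m}/\bar B^{(f)}_{\bar m,m}$, as the paper's $\delta=\varepsilon/(2m_*N^{m_*}B)$ does. The reason is that $\bar B^{(f)}_{\bar m,m}/(SA^{(f)}_{\bar m,m})$ is unbounded in $m$ when, e.g., $\alpha_1=0$, so with your $\eta$ the term $H\eta\bar B^{(f)}_{\bar m,m}$ need not lie below $SA^{(f)}_{\bar m,m}\sqrt{\bar m_*}H^{-1/\bar m_*}$. The fix only adds $\log\bigl(\bar B^{(f)}_{\bar m,m}/(SA^{(f)}_{\bar m,m})\bigr)$ to each factor's depth, which the paper likewise absorbs into the stated bounds. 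You should also clip $\mathcal E_h$ above at $1$ to meet $M_E=1$.
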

Proposition \ref{prop:step3-f-coefficient} is proved in Appendix \ref{proof:step3}.

\subsubsection{Step 4}

For the last term, we approximate the remaining spatial coefficient functions
\(e_{h,s}(\vx)\) by trunk networks. Moreover, the trunk networks satisfy the local-overlap bounds
\[
\sup_{\vx\in\Omega}
\sum_{p=1}^{P}
|\mathcal T_p(\vx)|
\le C,
\]
and, when \(\ell=1\),
\[
\sup_{\vx\in\Omega}
\sum_{p=1}^{P}
\sum_{r=1}^{d}
|\partial_{x_r}\mathcal T_p(\vx)|
\le
C P^{1/d}.
\]

\begin{proposition}[Trunk approximation of the spatial coefficients]
\label{prop:step4-trunk-coefficient}
Assume that Assumption~\ref{assump:G-output-regularity} holds. Let
\(\ell\in\{0,1\}\) and assume that \(n_3>\ell\). For every
\(h=1,\ldots,H\), \(s=1,\ldots,J\), and \(P\ge2\), there exist coefficients
\(e_{h,p,s}\in\mathbb R\) and trunk networks
\(\mathcal T_p\in
\mathcal F_{\mathrm{NN}}(d,1,L_T,p_T,K_T,\kappa_T,M_T)\),
\(p=1,\ldots,P\), such that
\begin{align}
\left\|
e_{h,s}
-
\sum_{p=1}^{P}
e_{h,p,s}\mathcal T_p
\right\|_{W^{\ell,\infty}(\Omega)}
\le
C R_{\bar m,m}^{(n_3)}
P^{-\frac{n_3-\ell}{d}},
\label{eq:trunk-approx-Well}
\end{align}
and \(|e_{h,p,s}|\le C R_{\bar m,m}^{(n_3)}\) for
\(p=1,\ldots,P\), where
\[
R_{\bar m,m}^{(n_3)}
:=
\left(
1+S\bar m^{2\beta_1}(\log \bar m)^d
\right)
\left(
1+S m^{2\beta_2}(\log m)^d
\right).
\]

Moreover, the trunk networks can be chosen to satisfy the local-overlap bounds
\[
\sup_{\vx\in\Omega}
\sum_{p=1}^{P}
|\mathcal T_p(\vx)|
\le C,\quad \sup_{\vx\in\Omega}
\sum_{p=1}^{P}
\sum_{r=1}^{d}
|\partial_{x_r}\mathcal T_p(\vx)|
\le
C P^{1/d}.
\]
The trunk networks can be chosen with
\(L_T\le C\log P\), \(p_T\le C\), \(K_T\le C\log P\),
\(\kappa_T\le C P^{(n_3+d-\ell)/d}\), and \(M_T\le C\). Here \(C>0\) is
independent of \(h,s,P,m,\bar m\).
\end{proposition}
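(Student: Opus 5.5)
Write $u:=e_{h,s}=\mathcal G(\mathcal P_{\bar m}\vw_h,\mathcal P_m\vz_s)$ and treat it as one target function on $\Omega=[-1,1]^d$ with controlled $W^{n_3,\infty}$-norm. I would then approximate it by a ReLU partition of unity combined with local Taylor polynomials. The key design choice is to build the trunk functions $\mathcal T_p$ from $u$-independent bump/monomial products on a uniform grid, so that only the scalar coefficients $e_{h,p,s}$ depend on $(h,s)$. This is exactly what the shared trunk requires.

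\textbf{Regularity of the target.} Assumption~\ref{assump:G-output-regularity} gives
\[
\|u\|_{W^{n_3,\infty}(\Omega)}\le C\bigl(1+\|\mathcal P_{\bar m}\vw_h\|_{W^{\beta_1,\infty}}\bigr)\bigl(1+\|\mathcal P_m\vz_s\|_{W^{\beta_2,\infty}}\bigr).
\]
Since $\vw_h\in[-S,S]^{\bar m_*}$ and $\vz_s\in[-S,S]^{m_*}$, I use the same Chebyshev stability estimate as in Step~1 (Proposition~\ref{prop:step1-approximation}): a Markov-type inequality for derivatives together with the logarithmic Lebesgue constant of tensor Chebyshev interpolation. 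It gives
\[
\|\mathcal P_N\vv\|_{W^{\beta,\infty}}\le C S N^{2\beta}(\log N)^d .
\]
Hence $\|u\|_{W^{n_3,\infty}}\le C R^{(n_3)}_{\bar m,m}$, uniformly in $h,s$.

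\textbf{Fixed-basis local approximation.} Choose $K\asymp P^{1/d}$ and the grid $\{\vk/K\}$. Let $\phi_{\vk}$ be the tensor products of one-dimensional ReLU hat functions; they form a partition of unity on $\Omega$ with at most $2^d$ overlaps. The candidate trunk functions are $\phi_{\vk}(\vx)(\vx-\vk/K)^{\valpha}$ for $|\valpha|\le n_3-1$, giving $P\asymp (K+1)^d\binom{n_3-1+d}{d}$ functions in total. The coefficients are $e_{h,p,s}=\partial^{\valpha}\bar u_{\vk}/\valpha!$, where $\bar u_{\vk}$ is the averaged Taylor polynomial of $u$ on the patch around $\vk/K$. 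Then $|e_{h,p,s}|\le C\|u\|_{W^{n_3,\infty}}$, which gives the stated coefficient bound.

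By Bramble--Hilbert, $\|u-\bar u_{\vk}\|_{W^{j,\infty}(\text{patch})}\le CK^{-(n_3-j)}|u|_{W^{n_3,\infty}}$. Write the error as $\sum_{\vk}\phi_{\vk}(u-\bar u_{\vk})$.
\begin{itemize}
\item In $L^\infty$ it is bounded by $K^{-n_3}$ times the seminorm.
\item For the first derivative, I use $|\nabla\phi_{\vk}|\le CK$ and finite overlap to get $K\cdot K^{-n_3}+K^{-(n_3-1)}$.
\end{itemize}
In both cases the result is $CR^{(n_3)}_{\bar m,m}P^{-(n_3-\ell)/d}$. The same two facts, finite overlap and $|\phi_{\vk}|\le1$ on supports of size $K^{-1}$, give $\sum_p|\mathcal T_p|\le C$ and $\sum_p|\nabla\mathcal T_p|\le CK=CP^{1/d}$.

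\textbf{ReLU realization (the main obstacle).} The products $\phi_{\vk}\cdot(\vx-\vk/K)^{\valpha}$ are not exactly ReLU networks. I would realize them with Yarotsky-type approximate multiplication networks of depth $O(\log(1/\epsilon))$, choosing $\epsilon=P^{-(n_3+1)/d}$ so that $\log(1/\epsilon)=O(\log P)$. The approximate products must be accurate in $W^{1,\infty}$, not only in $L^\infty$. I would invoke the Sobolev-norm product estimates of \cite{guhring2020error,yang2023nearly}, which bound the derivative error of the squaring network by $O(\epsilon)$ up to scaling.

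I would also multiply by a cutoff so that each realized $\mathcal T_p$ keeps the support of $\phi_{\vk}$; this preserves the overlap bounds. Rescaling by $K$ inside the network is the source of the weight bound $\kappa_T\le CP^{(n_3+d-\ell)/d}$: it comes from the factor $K^{n_3}$ that compensates monomial sizes, together with the grid scaling.

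Finally, summing the realization errors over at most $2^d\binom{n_3-1+d}{d}$ active terms, with coefficients of size $R^{(n_3)}_{\bar m,m}$, costs only an additional $CR^{(n_3)}_{\bar m,m}P^{-(n_3-\ell)/d}$. The delicate point is keeping the derivative error from blowing up through the $K$-scaling. Choosing $\epsilon$ one power of $K$ smaller than needed for $L^\infty$ should handle this.
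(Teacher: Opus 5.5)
Your proposal is correct and is essentially the paper's proof. The paper first bounds $\|e_{h,s}\|_{W^{n_3,\infty}(\Omega)}\le C R^{(n_3)}_{\bar m,m}$ using Assumption~\ref{assump:G-output-regularity} together with the Chebyshev stability estimate (Lemma~\ref{lem:spatial-regularity-ehs}). It then applies Proposition~\ref{prop:relu-sobolev-local-basis-Well}, which is exactly your target-independent construction: a partition of unity with averaged Taylor polynomials and Bramble--Hilbert, realized by $W^{1,\infty}$-accurate ReLU multiplication networks.

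The remaining differences are cosmetic: you use hat functions instead of trapezoidal cutoffs, and you localize the realization error to the $O(1)$ active terms. The paper gets support preservation from $\widetilde{\times}_{M,\varepsilon}(0,y)=0$ rather than an extra cutoff, and simply sums the error over all terms with a correspondingly smaller $\delta$.
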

Proposition \ref{prop:step4-trunk-coefficient} is proved in Appendix \ref{proof:step4}.

\subsubsection{Proof of Theorem~\ref{thm:main-final-functionL}}
\begin{proof}[Proof of Theorem~\ref{thm:main-final-functionL}]
We prove the result by combining the four approximation steps.

First, define
\(f_{\bar m}:=\mathcal P_{\bar m}\mathcal D_{\bar m}f\) and
\(g_m:=\mathcal P_m\mathcal D_m g\). By
Proposition~\ref{prop:step1-approximation}, we have
\begin{align}
&\|\mathcal G(f,g)-\mathcal G(f_{\bar m},g_m)\|_{W^{\ell,\infty}(\Omega)}
\le
C\Bigl[
L_2\bar m^{-n_1}(\log \bar m)^d
\left(
1+S m^{2k_2}(\log m)^d
\right)^{\alpha_1}
+
L_1m^{-n_2}(\log m)^d
\Bigr].
\label{eq:proof-step1}
\end{align}
Indeed, the first term corresponds to replacing \(f\) by \(f_{\bar m}\), while
the background second input is \(g_m\). The factor
\(\|g_m\|_{W^{k_2,\infty}}\) is controlled by the Sobolev stability estimate
\(\|g_m\|_{W^{k_2,\infty}(\Omega)}
\le C S m^{2k_2}(\log m)^d\).

Next, for fixed \(f_{\bar m}\) and \(\vx\in\Omega\), define
\(F_m^{f_{\bar m},\vx}(\vz)
:=\mathcal G(f_{\bar m},\mathcal P_m\vz)(\vx)\) for
\(\vz\in[-S,S]^{m_*}\). By
Proposition~\ref{prop:step2-g-coefficient}, there exist points
\(\vz_s\in[-S,S]^{m_*}\), \(s=1,\ldots,J\), and branch networks
\[
\mathcal B_s\in
\mathcal F_{\mathrm{NN}}(m_*,1,L_B,p_B,K_B,\kappa_B,1),
\qquad
s=1,\ldots,J,
\]
such that, with
\(c_s^{f_{\bar m}}(\vx):=\mathfrak d_s(F_m^{f_{\bar m},\vx})
=F_m^{f_{\bar m},\vx}(\vz_s)\), we have
\begin{align}
\left\|
\mathcal G(f_{\bar m},g_m)
-
\sum_{s=1}^{J}
c_s^{f_{\bar m}}\mathcal B_s(\mathcal D_m g)
\right\|_{W^{\ell,\infty}(\Omega)}
\le
C S A_{m,\bar m}^{(g)}\sqrt{m_*}\,J^{-1/m_*}.
\label{eq:proof-step2}
\end{align}
The network-size bounds for \(\mathcal B_s\) are precisely those stated in the
theorem. Moreover,
\[
\left\|
\sum_{s=1}^{J}\mathcal B_s
\right\|_{L^\infty([-S,S]^{m_*})}
\le 2,
\qquad
\mathcal B_s\ge0.
\]

We then approximate the dependence of \(c_s^{f_{\bar m}}\) on \(f_{\bar m}\).
For each \(s=1,\ldots,J\), Proposition~\ref{prop:step3-f-coefficient} gives
points \(\vw_h\in[-S,S]^{\bar m_*}\), \(h=1,\ldots,H\), and branch networks
\[
\mathcal E_h\in
\mathcal F_{\mathrm{NN}}(\bar m_*,1,L_E,p_E,K_E,\kappa_E,1),
\qquad
h=1,\ldots,H,
\]
such that
\begin{align}
\left\|
c_s^{f_{\bar m}}
-
\sum_{h=1}^{H}
e_{h,s}\mathcal E_h(\mathcal D_{\bar m}f)
\right\|_{W^{\ell,\infty}(\Omega)}
\le
C S A_{\bar m,m}^{(f)}
\sqrt{\bar m_*}\,H^{-1/\bar m_*}.
\end{align}
Using the nonnegativity of \(\mathcal B_s\) and the bound
\(\|\sum_{s=1}^{J}\mathcal B_s\|_{L^\infty([-S,S]^{m_*})}\le2\), we obtain
\begin{align}
&\left\|
\sum_{s=1}^{J}
\left[
c_s^{f_{\bar m}}
-
\sum_{h=1}^{H}
e_{h,s}\mathcal E_h(\mathcal D_{\bar m}f)
\right]
\mathcal B_s(\mathcal D_m g)
\right\|_{W^{\ell,\infty}(\Omega)}
\le
C S A_{\bar m,m}^{(f)}
\sqrt{\bar m_*}\,H^{-1/\bar m_*}.
\label{eq:proof-step3}
\end{align}
Here we used that \(\mathcal B_s(\mathcal D_m g)\) is independent of \(\vx\),
so the \(W^{\ell,\infty}\)-derivatives act only on the coefficient functions.
The corresponding bounds for \(L_E,p_E,K_E,\kappa_E\) are those in
Proposition~\ref{prop:step3-f-coefficient}. Moreover,
\[
\left\|
\sum_{h=1}^{H}\mathcal E_h
\right\|_{L^\infty([-S,S]^{\bar m_*})}
\le 2,
\qquad
\mathcal E_h\ge0.
\]

It remains to approximate the spatial coefficient functions \(e_{h,s}(\vx)\).
By Proposition~\ref{prop:step4-trunk-coefficient}, for each \(h=1,\ldots,H\)
and \(s=1,\ldots,J\), there exist coefficients \(e_{h,p,s}\in\mathbb R\) and
trunk networks
\[
\mathcal T_p\in
\mathcal F_{\mathrm{NN}}(d,1,L_T,p_T,K_T,\kappa_T,M_T),
\qquad p=1,\ldots,P,
\]
such that
\begin{align}
\left\|
e_{h,s}
-
\sum_{p=1}^{P}
e_{h,p,s}\mathcal T_p
\right\|_{W^{\ell,\infty}(\Omega)}
\le
C R_{\bar m,m}^{(n_3)}P^{-\frac{n_3-\ell}{d}},
\end{align}
and \(|e_{h,p,s}|\le C R_{\bar m,m}^{(n_3)}\). Therefore, using
\(\|\sum_{s=1}^{J}\mathcal B_s\|_{L^\infty([-S,S]^{m_*})}\le2\),
\(\|\sum_{h=1}^{H}\mathcal E_h\|_{L^\infty([-S,S]^{\bar m_*})}\le2\), and
\(\mathcal B_s,\mathcal E_h\ge0\), we obtain
\begin{align}
&\left\|
\sum_{s=1}^{J}\sum_{h=1}^{H}
\left[
e_{h,s}
-
\sum_{p=1}^{P}
e_{h,p,s}\mathcal T_p
\right]
\mathcal E_h(\mathcal D_{\bar m}f)
\mathcal B_s(\mathcal D_m g)
\right\|_{W^{\ell,\infty}(\Omega)}
\le
C R_{\bar m,m}^{(n_3)}P^{-\frac{n_3-\ell}{d}}
\left\|
\sum_{s=1}^{J}\sum_{h=1}^{H}
\mathcal E_h(\mathcal D_{\bar m}f)
\mathcal B_s(\mathcal D_m g)
\right\|_{L^\infty(\Omega)}
\notag\\
&\qquad\le
C R_{\bar m,m}^{(n_3)}P^{-\frac{n_3-\ell}{d}}
\left\|
\sum_{s=1}^{J}
\mathcal B_s(\mathcal D_m g)
\sum_{h=1}^{H}
\mathcal E_h(\mathcal D_{\bar m}f)
\right\|_{L^\infty(\Omega)}
\le
C R_{\bar m,m}^{(n_3)}P^{-\frac{n_3-\ell}{d}}.
\label{eq:proof-step4}
\end{align}
Here again the branch factors are independent of \(\vx\), and therefore the
\(W^{\ell,\infty}\)-derivatives act only on the trunk approximation error. The
stated bounds for \(L_T,p_T,K_T,\kappa_T,M_T\) also follow from
Proposition~\ref{prop:step4-trunk-coefficient}.

Finally, by the triangle inequality, the total error is bounded by the sum of
\eqref{eq:proof-step1}, \eqref{eq:proof-step2}, \eqref{eq:proof-step3}, and
\eqref{eq:proof-step4}. This yields \eqref{eq:main-final-error-functionL}.

We also record the output bound of the resulting neural operator. By
Proposition~\ref{prop:step4-trunk-coefficient}, for each \(h,s\),
\[
\sup_{\vx\in\Omega}
\sum_{|\gamma|\le \ell}
\left|
\partial^\gamma
\left(
\sum_{p=1}^{P}
e_{h,p,s}\mathcal T_p(\vx)
\right)
\right|
\le
C R_{\bar m,m}^{(n_3)}P^{\ell/d}.
\]
On the other hand, the partition-of-unity-type bounds for the branch networks
give
\[
\sup_{\vu\in[-S,S]^{m_*}}
\sum_{s=1}^{J}
|\mathcal B_s(\vu)|
\le C,
\qquad
\sup_{\vw\in[-S,S]^{\bar m_*}}
\sum_{h=1}^{H}
|\mathcal E_h(\vw)|
\le C .
\]
Since \(\mathcal E_h(\mathcal D_{\bar m}f)\) and
\(\mathcal B_s(\mathcal D_mg)\) are independent of the output variable \(\vx\),
the derivative \(\partial^\gamma\) only acts on the trunk expansion. Therefore,
\begin{align*}
&\sup_{\vx\in\Omega}
\sum_{|\gamma|\le \ell}
\left|
\partial^\gamma
\left[
\sum_{s=1}^{J}\sum_{h=1}^{H}\sum_{p=1}^{P}
e_{h,p,s}
\mathcal E_h(\mathcal D_{\bar m}f)
\mathcal B_s(\mathcal D_mg)
\mathcal T_p(\vx)
\right]
\right|
\\
&\qquad\le
\sum_{s=1}^{J}
|\mathcal B_s(\mathcal D_mg)|
\sum_{h=1}^{H}
|\mathcal E_h(\mathcal D_{\bar m}f)|
\sup_{\vx\in\Omega}
\sum_{|\gamma|\le \ell}
\left|
\partial^\gamma
\left(
\sum_{p=1}^{P}
e_{h,p,s}\mathcal T_p(\vx)
\right)
\right|
\\
&\qquad\le
C R_{\bar m,m}^{(n_3)}P^{\ell/d}
\left(
\sum_{s=1}^{J}
|\mathcal B_s(\mathcal D_mg)|
\right)
\left(
\sum_{h=1}^{H}
|\mathcal E_h(\mathcal D_{\bar m}f)|
\right)
\\
&\qquad\le
C R_{\bar m,m}^{(n_3)}P^{\ell/d}.
\end{align*}
Taking the supremum over \((f,g)\in\mathcal X_1\times\mathcal X_2\) gives
\eqref{eq:NN-output-bound-Well}. The proof is complete.
\end{proof}

\subsection{Proof of Corollary \ref{cor:balanced-main-final-functionL}}
\begin{proof}[Proof of Corollary \ref{cor:balanced-main-final-functionL}]
By Theorem~\ref{thm:main-final-functionL} with \(d_i=d\), \(k_i=0\),
\(\alpha_i=0\), and \(\beta_i=0\), together with the network-size bounds in the
construction, the approximation error satisfies
\begin{align}
E_{\mathrm{total}}
\lesssim\;&
\bar m_*^{-n_1/d}(\log \bar m_*)^d
+
m_*^{-n_2/d}(\log m_*)^d
+
S L_1\sqrt{m_*}(\log m_*)^dJ^{-1/m_*}
\notag\\
&+
S L_2\sqrt{\bar m_*}(\log \bar m_*)^dH^{-1/\bar m_*}
+
\left(1+S(\log \bar m_*)^d\right)
\left(1+S(\log m_*)^d\right)
P^{-\frac{n_3-\ell}{d}} .
\label{eq:cor-balance-start}
\end{align}

We first balance the discretization and branch approximation errors in each
input variable. Balancing
\(m_*^{-n_2/d}(\log m_*)^d\) with
\(\sqrt{m_*}(\log m_*)^dJ^{-1/m_*}\) gives
\(J^{1/m_*}\asymp m_*^{n_2/d+1/2}\), or equivalently
\[
\log J
\asymp
\left(\frac{n_2}{d}+\frac12\right)m_*\log m_* .
\]
Similarly, balancing
\(\bar m_*^{-n_1/d}(\log \bar m_*)^d\) with
\(\sqrt{\bar m_*}(\log \bar m_*)^dH^{-1/\bar m_*}\) gives
\(H^{1/\bar m_*}\asymp \bar m_*^{n_1/d+1/2}\), or equivalently
\[
\log H
\asymp
\left(\frac{n_1}{d}+\frac12\right)\bar m_*\log \bar m_* .
\]

Let \(0<\varepsilon<1\) be the target approximation accuracy. We choose
\(m_*\) and \(\bar m_*\) such that
\(m_*^{-n_2/d}(\log m_*)^d\asymp\varepsilon\) and
\(\bar m_*^{-n_1/d}(\log \bar m_*)^d\asymp\varepsilon\). Thus, up to
logarithmic factors,
\(m_*\asymp\varepsilon^{-d/n_2}\) and
\(\bar m_*\asymp\varepsilon^{-d/n_1}\). With the balanced choices of \(J\) and
\(H\) above, the first four terms in \eqref{eq:cor-balance-start} are bounded by
\(C\varepsilon\).

It remains to choose \(P\) so that the trunk approximation term is also of order
\(\varepsilon\). Since
\((\log m_*)^d\asymp(\log \bar m_*)^d\asymp(\log(1/\varepsilon))^d\), up to
constants, it suffices to impose
\[
\left(1+S\left(\log\frac1\varepsilon\right)^d\right)^2
P^{-\frac{n_3-\ell}{d}}
\lesssim
\varepsilon .
\]
For instance, we may take
\[
P
\asymp
\varepsilon^{-\frac{d}{n_3-\ell}}
\left(1+S\left(\log\frac1\varepsilon\right)^d\right)^{\frac{2d}{n_3-\ell}}.
\]
Then all five terms in \eqref{eq:cor-balance-start} are bounded by
\(C\varepsilon\), and hence \(E_{\mathrm{total}}\lesssim\varepsilon\).

We now express \(\varepsilon\) in terms of the total number of parameters. For
the shared architecture
\[
\mathcal{G}_{\boldsymbol\theta}(f,g)(\vx)
=
\sum_{s=1}^J\sum_{h=1}^H\sum_{p=1}^P
e_{h,p,s}\,
\mathcal E_h(\mathcal D_{\bar m}f)\,
\mathcal B_s(\mathcal D_m g)\,
\mathcal T_p(\vx),
\]
there are \(H\) networks of type \(\mathcal E\), \(J\) networks of type
\(\mathcal B\), and \(P\) networks of type \(\mathcal T\), together with
\(JHP\) scalar coefficients. Therefore,
\[
N_{\mathrm{tot}}
\lesssim
H K_E+J K_B+P K_T+JHP .
\]
The last term counts the coefficients \(e_{h,p,s}\).

From the choices above,
\[
\log J
\asymp
m_*\log m_*
\lesssim
\varepsilon^{-d/n_2}
\left(\log\frac1\varepsilon\right)^C,
\qquad
\log H
\asymp
\bar m_*\log \bar m_*
\lesssim
\varepsilon^{-d/n_1}
\left(\log\frac1\varepsilon\right)^C .
\]
Moreover, \(\log P\lesssim \log(1/\varepsilon)+
\log(1+S(\log(1/\varepsilon))^d)\), which is of lower order compared with
\(\log J+\log H\). Hence
\[
\log N_{\mathrm{tot}}
\lesssim
\left(
\varepsilon^{-d/n_1}
+
\varepsilon^{-d/n_2}
\right)
\left(\log\frac1\varepsilon\right)^C
\lesssim
\varepsilon^{-a}
\left(\log\frac1\varepsilon\right)^C ,
\]
where \(a:=\max\{d/n_1,d/n_2\}\).

Inverting the above relation gives
\[
\varepsilon
\lesssim
\left(
\frac{\log N_{\mathrm{tot}}}{\log\log N_{\mathrm{tot}}}
\right)^{-1/a}
(\log\log N_{\mathrm{tot}})^C .
\]
Moreover,
\[
\left(1+S(\log \bar m_*)^d\right)
\left(1+S(\log m_*)^d\right)
\lesssim
C(\log\log N_{\mathrm{tot}})^{2d}.
\]
Using this non-sharp but uniform logarithmic upper bound, we obtain
\[
E_{\mathrm{total}}
\lesssim
\left(
\frac{\log N_{\mathrm{tot}}}{\log\log N_{\mathrm{tot}}}
\right)^{-1/a}
(\log\log N_{\mathrm{tot}})^{2d}.
\]
Since \(1/a=(\max\{d/n_1,d/n_2\})^{-1}=\min\{n_1,n_2\}/d\), we conclude that
\[
\left\|
\mathcal G(f,g)
-
\mathcal G_{\boldsymbol\theta}(f,g)
\right\|_{W^{\ell,\infty}(\Omega)}
\le
C
\left(
\frac{\log N_{\mathrm{tot}}}{\log\log N_{\mathrm{tot}}}
\right)^{
-\frac{1}{\max\left\{\frac d{n_1},\frac d{n_2}\right\}}
}
(\log\log N_{\mathrm{tot}})^{2d}.
\]
This completes the proof.
\end{proof}

\subsection{Proof of Theorem \ref{thm:gene-sobolev}}
\label{proof:thm:gene-sobolev}
The
overall strategy of proofs follows the empirical-process argument in
\cite{liu2024neural}. The main difference is that we study Sobolev training,
where the loss involves both the operator output and its derivatives with
respect to the spatial variable. In addition, our networks use the ReLU
activation. Although ReLU is Lipschitz continuous, its derivative is the
Heaviside step function, which is not Lipschitz continuous. Therefore, the
covering-number estimate for the derivative classes cannot be obtained directly
from the perturbation argument used in \cite{chen2022nonparametric} and
\cite{liu2024neural}, which relies on Lipschitz continuity of the activation
function and its derivatives.

To overcome this difficulty, we use a uniform empirical covering argument. More
precisely, we regard the derivative-augmented neural operator class as a
real-valued function class on finite samples and bound its uniform empirical
covering number through pseudo-dimension estimates. The required empirical
covering estimate follows from \cite[Theorem~12.2]{anthony1999neural}. This
allows us to control the stochastic and covering-number terms even in the
presence of derivative observations.

After choosing the network size so that the stochastic and covering-number
terms are dominated by the approximation term, the resulting generalization
error becomes approximation-limited.

\begin{proof}[Proof of Theorem~\ref{thm:gene-sobolev}]
We first introduce the notation
\(\partial^\gamma \mathcal G(f,g)(\vz):=
\partial_{\vz}^{\gamma}\mathcal G(f,g)(\vz)\) for \(|\gamma|\le \ell\).
For any neural operator \(\mathcal H\), define the empirical Sobolev norm by
\[
\|\mathcal H\|_{n,\ell}^2
:=
\frac{1}{n_fn_gn_z}
\sum_{i=1}^{n_f}
\sum_{j=1}^{n_g}
\sum_{k=1}^{n_z}
\sum_{|\gamma|\le \ell}
\left|
\partial^\gamma
\mathcal H
(\mathcal D_{\bar m}f_i,\mathcal D_mg_{i,j})(\vz_{i,j,k})
\right|^2 .
\]

Since the test samples \(g_j\sim\mu_g\) and \(\vz_{j,k}\sim\mu_z\) are i.i.d.
and independent of the training set \(\mathcal S\), the expected batch test
error is equal to the population Sobolev prediction error:
\begin{align}
&\mathbb E_{\mathcal S}
\mathbb E_{f\sim\mu_f}
\mathbb E_{\{g_j\}_{j=1}^{n_g}}
\mathbb E_{\{\vz_{j,k}\}_{j,k}}
\Biggl[
\frac1{n_gn_z}
\sum_{j=1}^{n_g}
\sum_{k=1}^{n_z}
\sum_{|\gamma|\le \ell}
\left|
\partial^\gamma\mathcal G(f,g_j)(\vz_{j,k})
-
\partial^\gamma
\widehat{\mathcal G}_{\mathcal S}
(\mathcal D_{\bar m}f,\mathcal D_mg_j)(\vz_{j,k})
\right|^2
\Biggr]
\notag\\
&=
\mathbb E_{\mathcal S}
\mathbb E_{f\sim\mu_f}
\mathbb E_{g\sim\mu_g}
\mathbb E_{\vz\sim\mu_z}
\sum_{|\gamma|\le \ell}
\left|
\partial^\gamma\mathcal G(f,g)(\vz)
-
\partial^\gamma
\widehat{\mathcal G}_{\mathcal S}
(\mathcal D_{\bar m}f,\mathcal D_mg)(\vz)
\right|^2 .
\end{align}
Denote the left-hand side by \(\mathcal E_{\rm gen}\). We decompose it as
\(\mathcal E_{\rm gen}={\rm T}_1+{\rm T}_2\), where
\[
{\rm T}_1
:=
2\,
\mathbb E_{\mathcal S}
\left[
\|\widehat{\mathcal G}_{\mathcal S}-\mathcal G\|_{n,\ell}^2
\right],
\qquad
{\rm T}_2
:=
\mathcal E_{\rm gen}
-
2\,
\mathbb E_{\mathcal S}
\left[
\|\widehat{\mathcal G}_{\mathcal S}-\mathcal G\|_{n,\ell}^2
\right].
\]
We estimate these two terms separately.

\medskip
\noindent\textbf{Bounding \({\rm T}_1\).}
For the Sobolev training data, we have
\(y_{i,j,k}^{(\gamma)}
=
\partial^\gamma\mathcal G(f_i,g_{i,j})(\vz_{i,j,k})
+
\varepsilon_{i,j,k}^{(\gamma)}\) for \(|\gamma|\le \ell\). Recall that
\({\rm T}_1
=
2\mathbb E_{\mathcal S}
[
\|\widehat{\mathcal G}_{\mathcal S}-\mathcal G\|_{n,\ell}^2
]\), that is,
\begin{align}
{\rm T}_1
=
2\mathbb E_{\mathcal S}
\Biggl[
\frac{1}{n_fn_gn_z}
\sum_{i=1}^{n_f}
\sum_{j=1}^{n_g}
\sum_{k=1}^{n_z}
\sum_{|\gamma|\le \ell}
\Bigl|
&
\partial^\gamma
\widehat{\mathcal G}_{\mathcal S}
(\mathcal D_{\bar m}f_i,\mathcal D_mg_{i,j})(\vz_{i,j,k})
-
\partial^\gamma
\mathcal G(f_i,g_{i,j})(\vz_{i,j,k})
\Bigr|^2
\Biggr].
\end{align}
Using
\(\partial^\gamma
\mathcal G(f_i,g_{i,j})(\vz_{i,j,k})
=
y_{i,j,k}^{(\gamma)}
-
\varepsilon_{i,j,k}^{(\gamma)}\), we obtain
\begin{align}
{\rm T}_1
=&
2\mathbb E_{\mathcal S}
\Biggl[
\frac{1}{n_fn_gn_z}
\sum_{i,j,k}
\sum_{|\gamma|\le \ell}
\Bigl|
\partial^\gamma
\widehat{\mathcal G}_{\mathcal S}
(\mathcal D_{\bar m}f_i,\mathcal D_mg_{i,j})(\vz_{i,j,k})
-
y_{i,j,k}^{(\gamma)}
+
\varepsilon_{i,j,k}^{(\gamma)}
\Bigr|^2
\Biggr]
\notag\\
=&
2\mathbb E_{\mathcal S}
\Biggl[
\frac{1}{n_fn_gn_z}
\sum_{i,j,k}
\sum_{|\gamma|\le \ell}
\Bigl(
\partial^\gamma
\widehat{\mathcal G}_{\mathcal S}
(\mathcal D_{\bar m}f_i,\mathcal D_mg_{i,j})(\vz_{i,j,k})
-
y_{i,j,k}^{(\gamma)}
\Bigr)^2
\Biggr]
\notag\\
&+
4\mathbb E_{\mathcal S}
\Biggl[
\frac{1}{n_fn_gn_z}
\sum_{i,j,k}
\sum_{|\gamma|\le \ell}
\Bigl(
\partial^\gamma
\widehat{\mathcal G}_{\mathcal S}
(\mathcal D_{\bar m}f_i,\mathcal D_mg_{i,j})(\vz_{i,j,k})
-
y_{i,j,k}^{(\gamma)}
\Bigr)
\varepsilon_{i,j,k}^{(\gamma)}
\Biggr]
\notag\\
&+
2\mathbb E_{\mathcal S}
\Biggl[
\frac{1}{n_fn_gn_z}
\sum_{i,j,k}
\sum_{|\gamma|\le \ell}
\bigl(\varepsilon_{i,j,k}^{(\gamma)}\bigr)^2
\Biggr].
\label{eq:T1-sobolev-expand-1}
\end{align}
By the definition of the empirical risk minimizer
\(\widehat{\mathcal G}_{\mathcal S}\), the first term on the right-hand side is
bounded by
\begin{align}
&2
\mathbb E_{\mathcal S}
\inf_{\mathcal G_{\rm NN}\in\mathcal F_G}
\Biggl[
\frac{1}{n_fn_gn_z}
\sum_{i,j,k}
\sum_{|\gamma|\le \ell}
\Bigl(
\partial^\gamma
\mathcal G_{\rm NN}
(\mathcal D_{\bar m}f_i,\mathcal D_mg_{i,j})(\vz_{i,j,k})
-
y_{i,j,k}^{(\gamma)}
\Bigr)^2
\Biggr].
\end{align}
Therefore,
\begin{align}
{\rm T}_1
\le&
2
\inf_{\mathcal G_{\rm NN}\in\mathcal F_G}
\mathbb E_{\mathcal S}
\Biggl[
\frac{1}{n_fn_gn_z}
\sum_{i,j,k}
\sum_{|\gamma|\le \ell}
\Bigl(
\partial^\gamma
\mathcal G_{\rm NN}
(\mathcal D_{\bar m}f_i,\mathcal D_mg_{i,j})(\vz_{i,j,k})
-
y_{i,j,k}^{(\gamma)}
\Bigr)^2
\Biggr]
\notag\\
&+
4\mathbb E_{\mathcal S}
\Biggl[
\frac{1}{n_fn_gn_z}
\sum_{i,j,k}
\sum_{|\gamma|\le \ell}
\Bigl(
\partial^\gamma
\widehat{\mathcal G}_{\mathcal S}
(\mathcal D_{\bar m}f_i,\mathcal D_mg_{i,j})(\vz_{i,j,k})
-
y_{i,j,k}^{(\gamma)}
\Bigr)
\varepsilon_{i,j,k}^{(\gamma)}
\Biggr]
\notag\\
&+
2\mathbb E_{\mathcal S}
\Biggl[
\frac{1}{n_fn_gn_z}
\sum_{i,j,k}
\sum_{|\gamma|\le \ell}
\bigl(\varepsilon_{i,j,k}^{(\gamma)}\bigr)^2
\Biggr].
\label{eq:T1-sobolev-expand-2}
\end{align}
Using
\(y_{i,j,k}^{(\gamma)}
=
\partial^\gamma\mathcal G(f_i,g_{i,j})(\vz_{i,j,k})
+
\varepsilon_{i,j,k}^{(\gamma)}\), we rewrite the first term in
\eqref{eq:T1-sobolev-expand-2}. Subtracting and adding the noise square term
gives
\begin{align}
{\rm T}_1
\le&
2
\inf_{\mathcal G_{\rm NN}\in\mathcal F_G}
\mathbb E_{\mathcal S}
\Biggl[
\frac{1}{n_fn_gn_z}
\sum_{i,j,k}
\sum_{|\gamma|\le \ell}
\Bigl\{
\Bigl(
\partial^\gamma
\mathcal G_{\rm NN}
-
\partial^\gamma
\mathcal G
-
\varepsilon^{(\gamma)}
\Bigr)^2
-
\bigl(\varepsilon^{(\gamma)}\bigr)^2
\Bigr\}
\Biggr]
\notag\\
&+
4\mathbb E_{\mathcal S}
\Biggl[
\frac{1}{n_fn_gn_z}
\sum_{i,j,k}
\sum_{|\gamma|\le \ell}
\Bigl(
\partial^\gamma
\widehat{\mathcal G}_{\mathcal S}
-
\partial^\gamma
\mathcal G
\Bigr)
(\mathcal D_{\bar m}f_i,\mathcal D_mg_{i,j})(\vz_{i,j,k})
\varepsilon_{i,j,k}^{(\gamma)}
\Biggr].
\label{eq:T1-sobolev-expand-3}
\end{align}
Here, in the first line, \(\partial^\gamma\mathcal G_{\rm NN}\),
\(\partial^\gamma\mathcal G\), and \(\varepsilon^{(\gamma)}\) are evaluated at
\((\mathcal D_{\bar m}f_i,\mathcal D_mg_{i,j},\vz_{i,j,k})\). Expanding the
square in the first term of \eqref{eq:T1-sobolev-expand-3}, and using that the
noises are mean-zero and independent of the samples, the cross term vanishes
after taking expectation. Hence
\begin{align}
{\rm T}_1
\le&
2
\inf_{\mathcal G_{\rm NN}\in\mathcal F_G}
\mathbb E_{\mathcal S}
\Biggl[
\frac{1}{n_fn_gn_z}
\sum_{i,j,k}
\sum_{|\gamma|\le \ell}
\Bigl|
\partial^\gamma
\mathcal G_{\rm NN}
(\mathcal D_{\bar m}f_i,\mathcal D_mg_{i,j})(\vz_{i,j,k})
-
\partial^\gamma
\mathcal G(f_i,g_{i,j})(\vz_{i,j,k})
\Bigr|^2
\Biggr]
\notag\\
&+
4\mathbb E_{\mathcal S}
\Biggl[
\frac{1}{n_fn_gn_z}
\sum_{i,j,k}
\sum_{|\gamma|\le \ell}
\Bigl(
\partial^\gamma
\widehat{\mathcal G}_{\mathcal S}
-
\partial^\gamma
\mathcal G
\Bigr)
(\mathcal D_{\bar m}f_i,\mathcal D_mg_{i,j})(\vz_{i,j,k})
\varepsilon_{i,j,k}^{(\gamma)}
\Biggr].
\label{eq:T1-sobolev-1}
\end{align}
This gives the desired decomposition of \({\rm T}_1\).

The first term in \eqref{eq:T1-sobolev-1} is controlled by the Sobolev
approximation result in Corollary~\ref{cor:balanced-main-final-functionL}.
Under the balanced choice of \(\bar m,m,J,H,P\), we have
\begin{align}
&2\inf_{\mathcal G_{\rm NN}\in\mathcal F_G}
\mathbb E_{\mathcal S}
\left[
\|\mathcal G_{\rm NN}-\mathcal G\|_{n,\ell}^2
\right]\le
2C_1
\left(
\frac{\log H}{\log\log H}
\right)^{
-\frac{2}{\max\{\frac d{n_1},\frac d{n_2}\}}
}
(\log\log H)^{4d}.
\label{eq:T1-sobolev-approx}
\end{align}

We next bound the noise-correlation term in \eqref{eq:T1-sobolev-1}. Since
standard ReLU networks do not admit a uniform \(W^{1,\infty}\)-covering number
in general, we use a finite-sample empirical Sobolev covering instead. Let
\(N:=n_fn_gn_z\). For any finite sample set
\(\mathcal A=\{(f_a,g_a,\vz_a)\}_{a=1}^{N_{\mathcal A}}\) with
\(N_{\mathcal A}\le 2N\), define the empirical Sobolev sup-metric by
\[
d_{\mathcal A,\ell}(\mathcal H_1,\mathcal H_2)
:=
\max_{1\le a\le N_{\mathcal A}}
\sum_{|\gamma|\le \ell}
\left|
\partial^\gamma
(\mathcal H_1-\mathcal H_2)
(\mathcal D_{\bar m}f_a,\mathcal D_m g_a)(\vz_a)
\right|.
\]
Let \(\mathcal N(\eta,\mathcal F_G,d_{\mathcal A,\ell})\) denote the
corresponding covering number. Define the uniform empirical covering number
over at most \(2N\) sample points by
\[
\mathcal N_{\rm emp}^{(2N)}(\eta,\mathcal F_G)
:=
\sup_{\mathcal A:\,N_{\mathcal A}\le 2N}
\mathcal N
\left(
\eta,
\mathcal F_G,
d_{\mathcal A,\ell}
\right).
\]
For simplicity, we write
\(\mathcal N_{\rm emp}(\eta,\mathcal F_G):=
\mathcal N_{\rm emp}^{(2N)}(\eta,\mathcal F_G)\) throughout the proof.

This metric controls the empirical Sobolev norm used in the loss. Indeed, if
\(d_{\mathcal S,\ell}(\mathcal H_1,\mathcal H_2)\le \eta\), then
\(\|\mathcal H_1-\mathcal H_2\|_{n,\ell}\le\eta\). This follows because, at each
sample point,
\[
\sum_{|\gamma|\le\ell}
\left|
\partial^\gamma
(\mathcal H_1-\mathcal H_2)
(\mathcal D_{\bar m}f_i,\mathcal D_mg_{i,j})(\vz_{i,j,k})
\right|
\le \eta,
\]
and hence the corresponding sum of squares is bounded by \(\eta^2\). Moreover,
if \(\mathcal S'\) is an independent ghost sample, then \(\mathcal S\cup
\mathcal S'\) contains at most \(2N\) sample points. Hence
\[
\mathcal N
\left(
\eta,
\mathcal F_G,
d_{\mathcal S\cup\mathcal S',\ell}
\right)
\le
\mathcal N_{\rm emp}(\eta,\mathcal F_G).
\]
Thus the same deterministic covering number
\(\mathcal N_{\rm emp}(\eta,\mathcal F_G)\) can be used in both the one-sample
and two-sample empirical-process estimates.

The following lemma gives an upper bound of the second term in \eqref{eq:T1-sobolev-1} (see a proof in Appendix \ref{proof:lem:T1-sobolev-noise}).
\begin{lemma}[Noise correlation bound for Sobolev training]
\label{lem:T1-sobolev-noise}
Assume that, for each \(|\gamma|\le \ell\), the noise variables
\(\varepsilon_{i,j,k}^{(\gamma)}\) are independent, mean-zero, and
sub-Gaussian with variance proxy bounded by \(\sigma^2\). Then, for any
\(\eta>0\),
\begin{align}
&\mathbb E_{\mathcal S}
\Biggl[
\frac{1}{n_fn_gn_z}
\sum_{i=1}^{n_f}
\sum_{j=1}^{n_g}
\sum_{k=1}^{n_z}
\sum_{|\gamma|\le \ell}
\left(
\partial^\gamma
\widehat{\mathcal G}_{\mathcal S}
-
\partial^\gamma\mathcal G
\right)
(\mathcal D_{\bar m}f_i,\mathcal D_mg_{i,j})(\vz_{i,j,k})
\varepsilon_{i,j,k}^{(\gamma)}
\Biggr]
\notag\\
&\qquad\le
2\sigma
\left(
\sqrt{
\mathbb E_{\mathcal S}
\left[
\|\widehat{\mathcal G}_{\mathcal S}-\mathcal G\|_{n,\ell}^2
\right]
}
+\eta
\right)
\sqrt{
\frac{
4\log\mathcal N_{\rm emp}(\eta,\mathcal F_G)+6
}{
n_fn_gn_z
}
}
+\sigma\eta .
\label{eq:T1-sobolev-noise}
\end{align}
\end{lemma}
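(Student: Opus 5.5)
The plan is the standard covering-plus-sub-Gaussian maximal inequality argument, carried out with the empirical Sobolev metric \(d_{\mathcal S,\ell}\) so that no Lipschitz property of the ReLU derivative is needed. We work conditionally on the inputs \(\{f_i,g_{i,j},\vz_{i,j,k}\}\), so that only the noise is random. Fix \(\eta>0\) and let \(\{\mathcal H_1,\ldots,\mathcal H_{\mathcal N}\}\) be an \(\eta\)-cover of \(\mathcal F_G\) in the metric \(d_{\mathcal S,\ell}\) with \(\mathcal N\le\mathcal N_{\rm emp}(\eta,\mathcal F_G)\); since \(\mathcal S\) contains at most \(N\le 2N\) points, this bound holds uniformly in the realization. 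Choose an index \(\hat a\), depending on \(\mathcal S\) including the noise, with \(d_{\mathcal S,\ell}(\widehat{\mathcal G}_{\mathcal S},\mathcal H_{\hat a})\le\eta\). Write \(N=n_fn_gn_z\) and decompose the left-hand side of \eqref{eq:T1-sobolev-noise} as
\[
\frac1N\sum_{i,j,k,\gamma}\partial^\gamma(\widehat{\mathcal G}_{\mathcal S}-\mathcal H_{\hat a})\,\varepsilon^{(\gamma)}_{i,j,k}
+\frac1N\sum_{i,j,k,\gamma}\partial^\gamma(\mathcal H_{\hat a}-\mathcal G)\,\varepsilon^{(\gamma)}_{i,j,k}.
\]

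For the first sum, the pointwise bound \(\sum_\gamma|\partial^\gamma(\widehat{\mathcal G}_{\mathcal S}-\mathcal H_{\hat a})|\le\eta\) gives a bound of \(\eta\cdot\frac1N\sum_{i,j,k}\max_\gamma|\varepsilon^{(\gamma)}_{i,j,k}|\). Its expectation is of order \(\sigma\eta\), which accounts for the \(\sigma\eta\) term. The constant here is loose, since \(\mathbb E|\varepsilon|\le\sigma\) only up to a factor, and the finitely many \(\gamma\) with \(|\gamma|\le\ell\) are absorbed into constants as the paper does elsewhere.

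For the second sum, define \(\xi_a:=\sum_{i,j,k,\gamma}\partial^\gamma(\mathcal H_a-\mathcal G)\,\varepsilon^{(\gamma)}_{i,j,k}\big/\bigl(\sqrt N\,\|\mathcal H_a-\mathcal G\|_{n,\ell}\bigr)\). Conditional on the inputs, each \(\xi_a\) is sub-Gaussian with variance proxy \(\sigma^2\), because it is a normalized linear combination of independent sub-Gaussian noises with coefficient vector of unit \(\ell^2\) norm. The standard maximal inequality then gives \(\mathbb E\max_a\xi_a^2\le 2\sigma^2(4\log\mathcal N+6)/\ldots\); the constants \(4\) and \(6\) come from the usual estimate \(\mathbb E\max_a\xi_a^2\le 4\sigma^2(\log\mathcal N+\ldots)\) via the exponential moment of \(\xi^2\). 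The second sum is therefore at most \(\frac{1}{\sqrt N}\|\mathcal H_{\hat a}-\mathcal G\|_{n,\ell}\max_a|\xi_a|\).

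Using \(\|\mathcal H_{\hat a}-\mathcal G\|_{n,\ell}\le\|\widehat{\mathcal G}_{\mathcal S}-\mathcal G\|_{n,\ell}+\eta\), which follows because \(d_{\mathcal S,\ell}\le\eta\) implies \(\|\cdot\|_{n,\ell}\le\eta\) as noted before the lemma, and then Cauchy--Schwarz in \(\mathbb E_{\mathcal S}\), we obtain
\[
\Bigl(\sqrt{\mathbb E\|\widehat{\mathcal G}_{\mathcal S}-\mathcal G\|_{n,\ell}^2}+\eta\Bigr)\sqrt{\mathbb E\max_a\xi_a^2/N}.
\]
Inserting the maximal bound yields the first term of \eqref{eq:T1-sobolev-noise}, with the factor \(2\sigma\) arising from \(\sqrt{4\sigma^2}\).

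The main obstacle is that \(\hat a\) depends on the noise. This is handled by taking the maximum over the fixed, input-measurable cover before taking expectations. It is exactly why we need a cover that is deterministic given the inputs and bounded uniformly by \(\mathcal N_{\rm emp}\), rather than a parameter-perturbation cover, which fails for \(\partial\mathcal G_{\rm NN}\) under ReLU.
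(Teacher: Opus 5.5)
Your proposal is correct and follows essentially the paper's proof, which conditions on the inputs and applies the finite-net plus sub-Gaussian maximal-inequality argument of \cite[Lemma~3]{liu2024neural} to an $\eta$-cover in the empirical Sobolev metric $d_{\mathcal S,\ell}$, bounding its size by $\mathcal N_{\rm emp}(\eta,\mathcal F_G)$; you spell out exactly those steps. One constant does not match the statement verbatim: for $\ell=1$ your first sum is bounded using $\mathbb E\max_{|\gamma|\le 1}|\varepsilon^{(\gamma)}_{i,j,k}|\le\sqrt{d+1}\,\sigma$, which gives $\sqrt{d+1}\,\sigma\eta$ rather than $\sigma\eta$, whereas for $\ell=0$ you get $\sigma\eta$ exactly because $\mathbb E|\varepsilon|\le\sigma$ holds with no loss, and since the paper's cited scalar argument glosses over the same point and the factor is absorbed into the constant when bounding ${\rm T}_1$, this is harmless.
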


Applying Lemma~\ref{lem:T1-sobolev-noise} to the second term in
\eqref{eq:T1-sobolev-1}, and combining it with
\eqref{eq:T1-sobolev-approx}, we obtain
\begin{align}
{\rm T}_1
\le\;&
2C_1
\left(
\frac{\log H}{\log\log H}
\right)^{
-\frac{2}{\max\{\frac d{n_1},\frac d{n_2}\}}
}
(\log\log H)^{4d}
\notag\\
&+
8\sigma
\left(
\sqrt{
\mathbb E_{\mathcal S}
\left[
\|\widehat{\mathcal G}_{\mathcal S}-\mathcal G\|_{n,\ell}^2
\right]
}
+\eta
\right)
\sqrt{
\frac{
4\log
\mathcal N_{\rm emp}(\eta,\mathcal F_G)
+6
}{
n_fn_gn_z
}
}
+
4\sigma\eta .
\label{eq:T1-sobolev-bound-1}
\end{align}

Let
\[
\Gamma
:=
\sqrt{
\mathbb E_{\mathcal S}
\left[
\|\widehat{\mathcal G}_{\mathcal S}-\mathcal G\|_{n,\ell}^2
\right]
},
\qquad
\Delta_\eta
:=
\sqrt{
\frac{
4\log
\mathcal N_{\rm emp}(\eta,\mathcal F_G)
+6
}{
n_fn_gn_z
}
}.
\]
Since \({\rm T}_1=2\Gamma^2\), inequality
\eqref{eq:T1-sobolev-bound-1} implies
\[
\Gamma^2
\le
C_1
\left(
\frac{\log H}{\log\log H}
\right)^{
-\frac{2}{\max\{\frac d{n_1},\frac d{n_2}\}}
}
(\log\log H)^{4d}
+
4\sigma(\Gamma+\eta)\Delta_\eta
+
2\sigma\eta .
\]
Equivalently, \(\Gamma^2\le a+2b\Gamma\), where
\[
a
:=
C_1
\left(
\frac{\log H}{\log\log H}
\right)^{
-\frac{2}{\max\{\frac d{n_1},\frac d{n_2}\}}
}
(\log\log H)^{4d}
+
4\sigma\eta\Delta_\eta
+
2\sigma\eta,
\]
and \(b:=2\sigma\Delta_\eta\). Using the elementary implication
\(\Gamma^2\le a+2b\Gamma \Rightarrow \Gamma^2\le 2a+4b^2\), we obtain
\[
\Gamma^2
\le
C
\left(
\frac{\log H}{\log\log H}
\right)^{
-\frac{2}{\max\{\frac d{n_1},\frac d{n_2}\}}
}
(\log\log H)^{4d}
+
C\sigma\eta
+
C\sigma\eta\Delta_\eta
+
C\sigma^2\Delta_\eta^2 .
\]
Since \({\rm T}_1=2\Gamma^2\), we conclude that
\begin{align}
{\rm T}_1
\le\;&
C
\left(
\frac{\log H}{\log\log H}
\right)^{
-\frac{2}{\max\{\frac d{n_1},\frac d{n_2}\}}
}
(\log\log H)^{4d}
+
C\sigma\eta
\notag\\
&+
C\sigma\eta
\sqrt{
\frac{
\log
\mathcal N_{\rm emp}(\eta,\mathcal F_G)
+1
}{
n_fn_gn_z
}
}
+
C\sigma^2
\frac{
\log
\mathcal N_{\rm emp}(\eta,\mathcal F_G)
+1
}{
n_fn_gn_z
}.
\label{eq:T1-sobolev-bound}
\end{align}

\medskip
\noindent\textbf{Bounding \({\rm T}_2\).}
We next bound the deviation term \({\rm T}_2\). Recall that
\({\rm T}_2
=
\mathcal E_{\rm gen}
-
2\mathbb E_{\mathcal S}
[
\|\widehat{\mathcal G}_{\mathcal S}-\mathcal G\|_{n,\ell}^2
]\). Thus \({\rm T}_2\) measures the discrepancy between the population Sobolev
prediction error and the empirical Sobolev prediction error. The following
lemma is the Sobolev analogue of \cite[Lemma~4]{liu2024neural} (see a proof in Appendix \ref{proof:lem:T2-sobolev}).

\begin{lemma}[Uniform deviation bound for the Sobolev error]
\label{lem:T2-sobolev}
For any \(\eta>0\),
\begin{align}
{\rm T}_2
\le
\frac{
C\bigl(R_{\bar m,m}^{(n_3)}\bigr)^2P^{2\ell/d}
}{n_f}
\log
\mathcal N_{\rm emp}
\left(
\frac{\eta}{C R_{\bar m,m}^{(n_3)}P^{\ell/d}},
\mathcal F_G
\right)
+
C\eta .
\label{eq:T2-sobolev-bound}
\end{align}Here $C>0$ is independent of $n_f, \bar m, m$ and $\eta$.
\end{lemma}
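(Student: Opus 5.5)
I will follow the ghost-sample strategy of \cite[Lemma~4]{liu2024neural}, but carry it out only at the level of the outer samples \(f_i\), since these are the only independent units. First, for each \(\mathcal H\in\mathcal F_G\), define the per-outer-sample loss
\[
\Phi_{\mathcal H}(f,\{g_j\},\{\vz_{j,k}\})
:=
\frac{1}{n_gn_z}\sum_{j,k}\sum_{|\gamma|\le\ell}
\bigl|\partial^\gamma(\mathcal H-\mathcal G)(\mathcal D_{\bar m}f,\mathcal D_mg_j)(\vz_{j,k})\bigr|^2 .
\]
The blocks \((f_i,\{g_{i,j}\},\{\vz_{i,j,k}\})\), \(i=1,\ldots,n_f\), are i.i.d. Moreover, \(\mathcal E_{\rm gen}=\mathbb E_{\mathcal S}\mathbb E\,\Phi_{\widehat{\mathcal G}_{\mathcal S}}\) for an independent block, by the batch-test identity already recorded, and \(\|\mathcal H-\mathcal G\|_{n,\ell}^2=\frac1{n_f}\sum_i\Phi_{\mathcal H}(\text{block}_i)\). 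So \({\rm T}_2\) is the expectation of \(\mathbb E\Phi-2\widehat{\mathbb E}_{n_f}\Phi\) evaluated at a data-dependent \(\mathcal H\). I bound it by its supremum over \(\mathcal H\in\mathcal F_G\).

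Second, I will establish a uniform bound \(0\le\Phi_{\mathcal H}\le B^2\) with \(B:=CR^{(n_3)}_{\bar m,m}P^{\ell/d}\). For the network part this is \eqref{eq:NN-output-bound-Well}, which holds for every element of the class: the partition-of-unity bounds on \(\mathcal E_h,\mathcal B_s\), together with \(|e_{h,p,s}|\lesssim R^{(n_3)}\) and the trunk overlap bounds, give it. For the target part it follows from Assumption~\ref{assump:G-output-regularity}. This is the step where I must check that the bound really holds uniformly over the class \(\mathcal F_G\), not just for the constructed approximant. If it does not, I will insert a clipping or note that the class constraints (\(M_B,M_E\le1\), the coefficient bound \(\kappa_e\), the trunk constraints) enforce it.

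Third, I will run the standard argument. Using \(\mathbb E\Phi-2\widehat{\mathbb E}\Phi=2(\mathbb E\Phi-\widehat{\mathbb E}\Phi)-\mathbb E\Phi\), I introduce a ghost sample \(\mathcal S'\) of i.i.d. blocks and symmetrize. The variance term is absorbed using \(\mathrm{Var}\,\Phi\le B^2\mathbb E\Phi\), which is the Bernstein-type self-bounding trick that produces the fast \(1/n_f\) rate instead of \(1/\sqrt{n_f}\).

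Fourth, I discretize. I take an \(\eta/B\)-cover of \(\mathcal F_G\) in the metric \(d_{\mathcal S\cup\mathcal S',\ell}\); by construction this contains at most \(2N\) points, so its cardinality is at most \(\mathcal N_{\rm emp}(\eta/B,\mathcal F_G)\). Since \(|a^2-b^2|\le|a-b|(|a|+|b|)\le 2B|a-b|\), replacing \(\mathcal H\) by a cover element changes each \(\Phi\) by at most \(C\eta\), which yields the \(C\eta\) term. For a finite class, a union bound with Bernstein's inequality at the \(n_f\)-block level, integrated over the tail, gives \(CB^2\log\mathcal N/n_f\). Combining these steps gives \eqref{eq:T2-sobolev-bound}. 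The inner sample counts \(n_g,n_z\) enter only through the size of the point set, and the uniform covering number \(\mathcal N_{\rm emp}\) absorbs that.
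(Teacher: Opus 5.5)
Your proposal follows the same route as the paper, whose proof of Lemma~\ref{lem:T2-sobolev} is only a pointer to the ghost-sample and finite-covering argument of \cite[Lemma~4]{liu2024neural}; your block-level version is what that pointer unpacks to, namely i.i.d.\ outer blocks, the envelope $B=CR_{\bar m,m}^{(n_3)}P^{\ell/d}$, the self-bounding variance $\mathrm{Var}\,\Phi_{\mathcal H}\le B^2\,\mathbb E\Phi_{\mathcal H}$ giving the $1/n_f$ rate, and an $\eta/B$-cover on the at most $2N$ points of $\mathcal S\cup\mathcal S'$. On the step you flagged, the paper uses the same envelope without justification, but it comes from \eqref{eq:NN-output-bound-Well}, which holds only for the constructed approximant, and neither of your fallbacks recovers it: the constraints defining $\mathcal F_G$ ($M_E=M_B=1$, $|e_{h,p,s}|\le\kappa_e$, weight bounds on $\mathcal T_p$) give only an envelope of order $\kappa_e HJP$ (times a weight-dependent factor for the spatial derivatives), and clipping the output does not control its derivatives; the sound fix is to impose $\sup\sum_{|\gamma|\le\ell}|\partial^\gamma\mathcal H|\le CR_{\bar m,m}^{(n_3)}P^{\ell/d}$ as a constraint in $\mathcal F_G$, which costs nothing because the approximant of Theorem~\ref{thm:main-final-functionL} satisfies it and the later pseudo-dimension bounds only improve on a subclass.
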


Using \(n_gn_z\ge1\), we obtain
\begin{align}
\mathcal E_{\rm gen}
\le\;&
C
\left(
\frac{\log H}{\log\log H}
\right)^{
-\frac{2}{\max\{\frac d{n_1},\frac d{n_2}\}}
}
(\log\log H)^{4d}
+
C\eta
\notag\\
&+
C\sigma\eta
\sqrt{
\frac{
\log
\mathcal N_{\rm emp}(\eta,\mathcal F_G)
+1
}{
n_fn_gn_z
}
}
+
\frac{
C\left(1+\bigl(R_{\bar m,m}^{(n_3)}\bigr)^2P^{2\ell/d}\right)
}{n_f}
\log
\mathcal N_{\rm emp}
\left(
\frac{\eta}{C R_{\bar m,m}^{(n_3)}P^{\ell/d}},
\mathcal F_G
\right).
\label{eq:sobolev-total-before-covering}
\end{align}
Indeed, since \(n_gn_z\ge1\), the term of order
\((n_fn_gn_z)^{-1}\log\mathcal N_{\rm emp}(\eta,\mathcal F_G)\) is dominated by
the outer-sampling term of order
\(n_f^{-1}\log
\mathcal N_{\rm emp}(\eta/(C R_{\bar m,m}^{(n_3)}P^{\ell/d}),\mathcal F_G)\),
after increasing the constant.

\medskip
\noindent\textbf{Covering number estimate.}
We next estimate the uniform covering number of \(\mathcal F_G\).

\begin{remark}
In the next lemma, we apply the empirical covering number estimate from
\cite{anthony1999neural}. That result is stated for real-valued function
classes. Although the elements of \(\mathcal F_G\) are neural operators, in the
present finite-dimensional setting each operator can be viewed as a real-valued
function of the variables \((\mathcal D_{\bar m}f,\mathcal D_mg,\vx)\). Indeed,
after discretizing the input functions by \(\mathcal D_{\bar m}f\) and
\(\mathcal D_mg\), each \(\mathcal H\in\mathcal F_G\) induces the real-valued map
\((f,g,\vx)\mapsto
\mathcal H(\mathcal D_{\bar m}f,\mathcal D_mg)(\vx)\) on a finite-dimensional
input space. Therefore, the standard empirical covering number estimate for
real-valued function classes can be applied to this induced class.

The covering estimate also involves the pseudo-dimension of the corresponding
function class. To keep the presentation concise, we do not recall the
definition of pseudo-dimension here. Instead, we use existing pseudo-dimension
estimates for neural networks and their derivative classes. More details can be
found in
\cite{abu1989vapnik,bartlett1998almost,anthony1999neural,yang2022approximation,yang2025deeponet}.
\end{remark}

\begin{definition}[Uniform empirical covering number]
\label{def:uniform-emp-cover}
Let \(\mathcal F\) be a class of real-valued functions on a set
\(\mathcal A\). For any \(n\in\mathbb N\) and any sample
\(\mathcal A_n=\{a_1,\ldots,a_n\}\subset \mathcal A\), define the empirical
\(L^\infty\)-metric by
\[
d_{\mathcal A_n}(f,g)
:=
\max_{1\le i\le n}|f(a_i)-g(a_i)|,
\qquad f,g\in\mathcal F.
\]
Let
\(\mathcal N(\varepsilon,\mathcal F,d_{\mathcal A_n})\) denote the covering
number of \(\mathcal F\) with respect to \(d_{\mathcal A_n}\). The uniform
empirical covering number of \(\mathcal F\) over all \(n\)-point samples is
defined by
\[
\mathcal N(\varepsilon,\mathcal F,n)
:=
\sup_{\mathcal A_n\subset\mathcal A:\,|\mathcal A_n|=n}
\mathcal N
\left(
\varepsilon,
\mathcal F,
d_{\mathcal A_n}
\right).
\]
\end{definition}

\begin{lemma}[{\cite[Theorem~12.2]{anthony1999neural}}]
\label{lem:cover-pdim}
Let \(\mathcal F\) be a class of real-valued functions on a set
\(\mathcal A\). Assume that \(|f(a)|\le B\) for all
\(f\in\mathcal F\) and \(a\in\mathcal A\). Let
\(V:=\operatorname{Pdim}(\mathcal F)\) be the pseudo-dimension of
\(\mathcal F\). Then, for any \(\varepsilon>0\) and any \(n\ge V\), the uniform
empirical covering number in Definition~\ref{def:uniform-emp-cover} satisfies
\[
\mathcal N(\varepsilon,\mathcal F,n)
\le
\left(
\frac{2enB}{\varepsilon V}
\right)^V .
\]
\end{lemma}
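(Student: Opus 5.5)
The plan is to bound covering numbers by packing numbers, quantize the function values to a finite grid, and then count the quantized value patterns with a Sauer--Shelah-type lemma controlled by the pseudo-dimension. This is the route of \cite[Ch.~12]{anthony1999neural}. Fix an $n$-point sample $\mathcal A_n=\{a_1,\ldots,a_n\}$ and identify each $f\in\mathcal F$ with its restriction vector $(f(a_1),\ldots,f(a_n))\in[-B,B]^n$. A maximal $\varepsilon$-separated subset of $\mathcal F$ with respect to $d_{\mathcal A_n}$ is automatically an $\varepsilon$-cover. Hence $\mathcal N(\varepsilon,\mathcal F,d_{\mathcal A_n})\le \mathcal M(\varepsilon,\mathcal F,d_{\mathcal A_n})$, and it suffices to bound the size of any $\varepsilon$-separated family $\{f_1,\ldots,f_M\}$ uniformly in $\mathcal A_n$.

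\textbf{Quantization.} Define $Q(t):=\lfloor (t+B)/\delta\rfloor$ with grid spacing $\delta$ comparable to $\varepsilon$; the natural choice is $\delta=\varepsilon/2$, so that $Q$ takes at most $K\asymp 2B/\varepsilon$ values. If $|f_p(a_i)-f_q(a_i)|>\varepsilon$ at some $a_i$, then $Q\circ f_p$ and $Q\circ f_q$ differ at $a_i$, and in fact by at least two levels. Consequently the map $f_p\mapsto Q\circ f_p|_{\mathcal A_n}$ is injective on the separated family and lands in a "2-separated" subset of $\{0,\ldots,K\}^n$. Next, I check that quantization does not increase pseudo-dimension. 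Because $Q$ is nondecreasing, any witness levels $r_i$ that pseudo-shatter a set for $Q\circ\mathcal F$ translate into witnesses $-B+r_i\delta$ that pseudo-shatter the same set for $\mathcal F$. Therefore $\operatorname{Pdim}(Q\circ\mathcal F)\le V$.

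\textbf{Combinatorial count (main obstacle).} The key step is a generalized Sauer lemma: a family $\mathcal H\subset\{0,\ldots,K\}^n$ whose pseudo-dimension is at most $V$ has at most $\sum_{i=0}^{V}\binom{n}{i}K^i$ elements. Some separation hypothesis (here the 2-level separation provided by the quantization) is needed, because the naive multi-valued Sauer bound fails without it; this is why the spacing $\varepsilon/2$ is chosen. I would prove this by induction on $n$ in the Sauer/shifting style. Split $\mathcal H$ according to the value at the last coordinate. Project to the first $n-1$ coordinates. Observe that the projected vectors admitting several distinct last-coordinate values form a family whose pseudo-dimension is at most $V-1$, since adding the last point with a suitable witness would otherwise be pseudo-shattered. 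Each such vector can be charged with a factor at most $K$. The recursion $\Phi(n,V)\le\Phi(n-1,V)+K\,\Phi(n-1,V-1)$ then yields the binomial sum. Getting the witness bookkeeping and the role of the separation right is where I expect the real work to be.

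\textbf{Conclusion.} For $n\ge V$, the standard estimate $\sum_{i\le V}\binom ni K^i\le (enK/V)^V$ applies. Substituting $K\le 2B/\varepsilon$ (absorbing the harmless rounding into the constant form stated) gives $\mathcal M(\varepsilon,\mathcal F,d_{\mathcal A_n})\le (2enB/(\varepsilon V))^V$. This bound is independent of the sample $\mathcal A_n$, so taking the supremum over all $n$-point samples gives the bound on $\mathcal N(\varepsilon,\mathcal F,n)$ in Lemma~\ref{lem:cover-pdim}.
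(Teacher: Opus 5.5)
Your outline (packing, quantization, a Sauer-type count in terms of the pseudo-dimension, then the binomial estimate) is the standard route behind the cited Anthony--Bartlett result, which the paper invokes without proof. However, the step you identify as the core fails as stated. Write $\Phi(n,V):=\sum_{i\le V}\binom{n}{i}K^i$, and for $u$ in the projected family $H'$ (onto $a_1,\ldots,a_{n-1}$) let $S(u)$ be the set of last-coordinate values of its extensions in $H$. The family $U$ of those $u$ with $|S(u)|\ge 2$ need \emph{not} have pseudo-dimension at most $V-1$, because different $u$ may need different witnesses at $a_n$.

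For instance, take $n=2$, $K=2$, $H=\{(0,0),(0,1),(1,1),(1,2)\}$. This $H$ is a chain in the coordinatewise order, so $\operatorname{Pdim}(H)=1$. Yet $u=0$ and $u=1$ both have two extensions, so $U=\{0,1\}$ pseudo-shatters $a_1$ and $|U|=2>\Phi(1,0)=1$. Doubling all entries gives a $2$-separated example, so separation does not rescue this.

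The correct bookkeeping splits by thresholds. For $j=1,\ldots,K$ put $H_j:=\{u\in H':\min S(u)<j\le\max S(u)\}$. Each $H_j$ has pseudo-dimension at most $V-1$: a set pseudo-shattered by $H_j$, together with $a_n$ and witness $j$, is pseudo-shattered by $H$. Moreover $|S(u)|-1\le\max S(u)-\min S(u)=\#\{j:u\in H_j\}$. Hence
\[
|H|\le |H'|+\sum_{j=1}^{K}|H_j|\le \Phi(n-1,V)+K\,\Phi(n-1,V-1).
\]
This uses no separation at all. The bound $|H|\le\Phi(n,V)$ holds for every $H\subset\{0,\ldots,K\}^n$ with pseudo-dimension at most $V$, so your claim that the multi-valued bound fails without separation is mistaken.

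This matters for the constant. With $\delta=\varepsilon/2$ on $[-B,B]$, the quantizer takes $\lfloor 4B/\varepsilon\rfloor+1$ values. So $K\approx 4B/\varepsilon$, not $2B/\varepsilon$, and your argument yields $(4enB/(\varepsilon V))^V$. The extra factor $2^V$ is not harmless rounding. Quantize at $\delta=\varepsilon$ instead: $\varepsilon$-separation already makes the quantized patterns distinct, and $K=\lfloor 2B/\varepsilon\rfloor\le 2B/\varepsilon$ gives exactly the stated bound.

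A shortcut avoids the multi-valued induction altogether. The value $\lfloor (f(a)+B)/\varepsilon\rfloor$ is determined by the indicators $\mathbf{1}[f(a)\ge -B+j\varepsilon]$, $j=1,\ldots,K$. The class $(a,r)\mapsto\mathbf{1}[f(a)\ge r]$ has VC dimension $\operatorname{Pdim}(\mathcal F)$. The classical Sauer lemma on these $nK$ points then gives $(enK/V)^V$ directly.

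Finally, the estimate $\Phi(n,V)\le(enK/V)^V$ needs $K\ge1$, i.e.\ $\varepsilon\le 2B$. For $\varepsilon>2enB/V$ the stated right-hand side is below $1$ while the covering number is at least $1$. So the lemma as stated ("for any $\varepsilon>0$") implicitly needs a restriction on $\varepsilon$, which your proof should make explicit.
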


For each multi-index \(\gamma\) with \(|\gamma|\le \ell\), define the scalar
derivative class
\[
\mathcal F_G^\gamma
:=
\left\{
(f,g,\vx)\mapsto
\partial^\gamma
\mathcal H(\mathcal D_{\bar m}f,\mathcal D_mg)(\vx)
:
\mathcal H\in\mathcal F_G
\right\}.
\]
Let \(q_\ell:=\#\{\gamma:|\gamma|\le \ell\}\). Since
\(\ell\in\{0,1\}\), we have \(q_0=1\) and \(q_1=d+1\).

Fix a finite sample set
\(\mathcal A=\{(f_a,g_a,\vx_a)\}_{a=1}^{N_{\mathcal A}}\). For each
\(|\gamma|\le\ell\), let \(\mathcal C_\gamma\) be an
\(\eta/(2q_\ell)\)-cover of \(\mathcal F_G^\gamma\) with respect to the
empirical \(L^\infty\)-metric
\[
d_{\mathcal A,\infty}(F_1,F_2)
:=
\max_{1\le a\le N_{\mathcal A}}
|F_1(f_a,g_a,\vx_a)-F_2(f_a,g_a,\vx_a)|.
\]

We now construct an \(\eta\)-cover of \(\mathcal F_G\) with respect to
\(d_{\mathcal A,\ell}\). For each \(\mathcal H\in\mathcal F_G\) and each
\(|\gamma|\le\ell\), choose \(F_{\gamma,\mathcal H}\in\mathcal C_\gamma\) such
that
\(d_{\mathcal A,\infty}(\partial^\gamma \mathcal H,F_{\gamma,\mathcal H})
\le \eta/(2q_\ell)\). This assigns to \(\mathcal H\) a tuple
\((F_{\gamma,\mathcal H})_{|\gamma|\le\ell}\in
\prod_{|\gamma|\le\ell}\mathcal C_\gamma\). The tuples induce a partition of
\(\mathcal F_G\). For each nonempty cell of this partition, choose one
representative \(\mathcal H_\eta\in\mathcal F_G\). The collection of all such
representatives is denoted by \(\mathcal C\). Clearly,
\[
|\mathcal C|
\le
\prod_{|\gamma|\le\ell}|\mathcal C_\gamma|.
\]
We claim that \(\mathcal C\) is an \(\eta\)-cover of \(\mathcal F_G\) with
respect to \(d_{\mathcal A,\ell}\). Indeed, let \(\mathcal H\in\mathcal F_G\),
and let \(\mathcal H_\eta\in\mathcal C\) be the representative chosen from the
same cell as \(\mathcal H\). Then, for each \(|\gamma|\le\ell\),
\[
d_{\mathcal A,\infty}
\left(
\partial^\gamma \mathcal H,
\partial^\gamma \mathcal H_\eta
\right)
\le
d_{\mathcal A,\infty}
\left(
\partial^\gamma \mathcal H,
F_{\gamma,\mathcal H}
\right)
+
d_{\mathcal A,\infty}
\left(
F_{\gamma,\mathcal H},
\partial^\gamma \mathcal H_\eta
\right)
\le
\frac{\eta}{q_\ell}.
\]
Therefore, for every \(a=1,\ldots,N_{\mathcal A}\),
\[
\sum_{|\gamma|\le\ell}
\left|
\partial^\gamma
(\mathcal H-\mathcal H_\eta)
(\mathcal D_{\bar m}f_a,\mathcal D_mg_a)(\vx_a)
\right|
\le
\eta .
\]
Hence \(d_{\mathcal A,\ell}(\mathcal H,\mathcal H_\eta)\le\eta\).
Consequently,
\begin{align}
\mathcal N
\left(
\eta,\mathcal F_G,d_{\mathcal A,\ell}
\right)
\le
\prod_{|\gamma|\le\ell}
\mathcal N
\left(
\frac{\eta}{2q_\ell},
\mathcal F_G^\gamma,
N_{\mathcal A}
\right).
\label{eq:cover-split-derivatives}
\end{align}
Taking the supremum over all sample sets \(\mathcal A\) with
\(N_{\mathcal A}\le 2n_fn_gn_z\), we obtain
\begin{align}
\mathcal N_{\rm emp}(\eta,\mathcal F_G)
\le
\prod_{|\gamma|\le\ell}
\mathcal N
\left(
\frac{\eta}{2q_\ell},
\mathcal F_G^\gamma,
2n_fn_gn_z
\right).
\label{eq:emp-cover-split-derivatives}
\end{align}
Since \(q_\ell\) depends only on \(d\) and \(\ell\), the factor \(2q_\ell\) can
be absorbed into the constants in the logarithmic covering estimates.

\begin{lemma}[Pseudo-dimension bound for the derivative classes]
\label{lem:pdim-derivative-class}
Let \(\mathcal F_G^\gamma\) be defined as above. For any multi-index
\(\gamma\) with \(|\gamma|\le 1\), we have
\[
\operatorname{Pdim}(\mathcal F_G^\gamma)
\le
C JHP\left(
L_B^2p_B^2
+
L_E^2p_E^2
+
L_T^2p_T^2
\right),
\]
where \(C>0\) is independent of the weight magnitudes of the neural networks.
\end{lemma}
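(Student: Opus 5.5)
The plan is to view each element of \(\mathcal F_G^\gamma\) as a single piecewise-polynomial function of the concatenated real input \((\vu,\vw,\vx)\in\mathbb R^{m_*+\bar m_*+d}\), where \(\vu=\mathcal D_mg\) and \(\vw=\mathcal D_{\bar m}f\). We then bound its pseudo-dimension by the standard sign-pattern counting argument for piecewise-polynomial networks (Bartlett--Maiorov--Meir / Warren, as in \cite{bartlett1998almost,anthony1999neural,yang2025deeponet}). Since \(\mathcal E_h(\vw)\) and \(\mathcal B_s(\vu)\) do not depend on \(\vx\), we have
\[
\partial^\gamma\mathcal H(\vw,\vu)(\vx)=\sum_{s,h,p}e_{h,p,s}\,\mathcal E_h(\vw)\,\mathcal B_s(\vu)\,\partial^\gamma\mathcal T_p(\vx).
\]
For \(|\gamma|=1\), \(\partial_{x_r}\mathcal T_p\) is a ReLU network in which each activation \(\sigma\) is accompanied by its indicator \(\sigma'=\mathbf 1_{\{\cdot>0\}}\), obtained by chain rule through the layers. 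So \(\partial_{x_r}\mathcal T_p\) is computed by a network of depth \(\mathcal O(L_T)\) and width \(\mathcal O(p_T)\) using activations in \(\{\sigma,\mathbf 1_{\{\cdot>0\}}\}\) plus products. On each cell of a partition of parameter space, it is a polynomial in the parameters of degree \(\mathcal O(L_T)\).

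\textbf{Step 1 (reduction to sign patterns).} By definition, \(\operatorname{Pdim}(\mathcal F_G^\gamma)\) is bounded by the VC dimension of \(\{(\vw,\vu,\vx,t)\mapsto\mathbf 1\{\partial^\gamma\mathcal H(\vw,\vu)(\vx)-t>0\}\}\). Fix \(n\) points \((\vw_a,\vu_a,\vx_a,t_a)\). Regard the parameter vector \(\vtheta\) as collecting all weights of the \(H\) networks \(\mathcal E_h\), the \(J\) networks \(\mathcal B_s\), the \(P\) networks \(\mathcal T_p\), and the \(JHP\) coefficients \(e_{h,p,s}\). Its dimension is
\[
W\lesssim HL_Ep_E^2+JL_Bp_B^2+PL_Tp_T^2+JHP\le 4\,JHP\,(L_E^2p_E^2+L_B^2p_B^2+L_T^2p_T^2).
\]
It suffices to count the sign patterns \(\Pi(n)\) in \(\vtheta\), and then find the largest \(n\) with \(\Pi(n)\ge 2^n\).

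\textbf{Step 2 (layer-wise refinement).} Proceed layer by layer as in \cite{bartlett1998almost}. The subnetworks are run in parallel, with depth \(\bar L:=\max\{L_E,L_B,L_T\}\) and total width \(\lesssim Hp_E+Jp_B+Pp_T\). At layer \(j\), on each current cell, every pre-activation at every sample point is a polynomial in \(\vtheta\) of degree \(\le j\). Warren's theorem then refines the partition by a factor \(\le(2e\,n\,U_j\,j/W_j)^{W_j}\), where \(U_j\) is the number of units at layer \(j\) and \(W_j\le W\). The same partition simultaneously fixes \(\sigma\) and \(\mathbf 1_{\{\cdot>0\}}\), because both depend only on the sign of the pre-activation. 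This is the point where the non-Lipschitz derivative causes no trouble, and it is why the weight magnitudes never enter.

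After all layers, the output \(\partial^\gamma\mathcal H-t\) is a polynomial in \(\vtheta\) on each cell, of degree \(\lesssim L_E+L_B+L_T+1\) (the triple product adds degrees). One more application of Warren gives
\[
\log_2\Pi(n)\lesssim \sum_j W\log(nU\bar L).
\]

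\textbf{Step 3 (solving for \(n\)).} Solving \(2^n\le\Pi(n)\) yields \(\operatorname{Pdim}\lesssim W\bar L\log(U\bar L)\). This is bounded by the crude estimate \(C\,JHP(L_B^2p_B^2+L_E^2p_E^2+L_T^2p_T^2)\). The logarithmic factor is absorbed using \(\log(U\bar L)\lesssim \bar L\,p\) together with the generous prefactor \(JHP\). The absorption fails only if one sum, say \(H\), is far larger than \(JP\) and \(\log H\gtrsim L_Ep_E\). In that case we instead group terms as in \cite{yang2025deeponet}.

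\textbf{Main obstacle.} I expect the main obstacle to be this bookkeeping: correctly handling the derivative network, whose indicator activations must be tracked on the same partition, and making sure the product of the three subnetwork families and the \(JHP\) coefficients only inflates the degree additively and the parameter count linearly. Otherwise the final bound could pick up a \(\log\) factor that is not covered by the stated form.
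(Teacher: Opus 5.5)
Your overall route is the paper's. Its proof simply invokes the sign-pattern counting of \cite[Proposition~5]{yang2025deeponet} for the two branch families, the trunk family and the $JHP$ coefficients, remarking only that ReLU trunks give quadratic rather than cubic depth dependence. Your Steps~1--2 spell this out correctly: the indicators in $\partial_{x_r}\mathcal T_p$ are frozen on the same partition as the forward signs, and the final polynomial has additive degree $L_E+L_B+L_T+4$.

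\textbf{The gap is in Step~3.} The counting honestly gives $\operatorname{Pdim}(\mathcal F_G^\gamma)\lesssim \bar L\,W\log(U\bar L)$. The estimate ``$\log(U\bar L)\lesssim \bar L p$'' does not dispose of this logarithm: it trades it for an extra power of depth and makes the bound cubic in $L$. Your description of when absorption fails is also wrong. Already for $J=H=P=1$ and equal depths $L$, your argument yields $L^2p^2\log(pL)$, and there is no $JHP$ slack to absorb the logarithm. Grouping terms as in \cite{yang2025deeponet} does not remove a logarithm either. So Step~3, as written, does not deliver the log-free bound.

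\textbf{How to repair it.} Absorb the logarithm block by block into the slack each block has against the target:
the $H$-block $HL_Ep_E^2\,\bar L\log(U\bar L)$ requires $\log(U\bar L)\lesssim JP$;
the $J$-block requires $\log(U\bar L)\lesssim HP$;
the $P$-block requires $\log(U\bar L)\lesssim JH$;
the coefficient block $JHP\,\bar L\log(U\bar L)$ requires $\log(U\bar L)\lesssim\bar L$.

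Two further points need care.
\begin{itemize}
\item Running all families to the common depth $\bar L$ creates cross terms such as $HL_Ep_E^2L_T$. These are bounded by $H(L_E^2p_E^2+L_T^2p_T^2)$ only when the widths are $\mathcal O(1)$. You can avoid them by noting that each family's sign partition depends only on its own parameter block, so the partitions multiply.
\item Your count of $W$ tacitly assumes the first-layer weights $p_E\bar m_*$ and $p_Bm_*$ are dominated by $L_Ep_E^2$ and $L_Bp_B^2$.
\end{itemize}

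All of these conditions hold for the architecture of Theorem~\ref{thm:gene-sobolev}, where the lemma is used.
\begin{itemize}
\item The widths are $\mathcal O(1)$.
\item The construction has $L_E\gtrsim\bar m_*\log H$ and $L_B\gtrsim m_*\log J$. Hence $L_E\ge\bar m_*$, $L_B\ge m_*$, and $\bar L\gtrsim\log U$.
\item $H=n_f^c$, while $\log J\asymp m_*\log m_*$ with $m_*\to\infty$. So both $J$ and $H$ eventually exceed $\log(U\bar L)$, which is only polylogarithmic in $n_f$.
\end{itemize}

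With these conditions stated explicitly, your argument proves the lemma where it is applied. The paper's one-line proof leaves the same regime dependence implicit.
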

Lemma \ref{lem:pdim-derivative-class} is proved in Appendix \ref{proof:lem:pdim-derivative-class}.

\noindent\textbf{Covering number estimate and choice of parameters.}
We first estimate the uniform empirical covering number under the balanced
choice of the architecture parameters. By \eqref{eq:emp-cover-split-derivatives}
and Lemma~\ref{lem:cover-pdim}, for any \(\eta>0\),
\begin{align}
\log
\mathcal N_{\rm emp}(\eta,\mathcal F_G)
&\le
C
\sum_{|\gamma|\le\ell}
\operatorname{Pdim}(\mathcal F_G^\gamma)
\log
\left(
\frac{
C n_fn_gn_z B_\ell
}{
\eta
}
\right),
\label{eq:cover-choice-unscaled}
\end{align}
where \(B_\ell:=C R_{\bar m,m}^{(n_3)}P^{\ell/d}\). Similarly,
\begin{align}
&\log
\mathcal N_{\rm emp}
\left(
\frac{\eta}{C R_{\bar m,m}^{(n_3)}P^{\ell/d}},
\mathcal F_G
\right)
\le
C V_\ell
\log
\left(
\frac{
C n_fn_gn_z
\bigl(R_{\bar m,m}^{(n_3)}P^{\ell/d}\bigr)^2
}{
\eta
}
\right),
\label{eq:cover-choice-scaled}
\end{align}
where \(V_\ell:=
\sum_{|\gamma|\le \ell}
\operatorname{Pdim}(\mathcal F_G^\gamma)\). By
Lemma~\ref{lem:pdim-derivative-class},
\[
V_\ell
\le
C JHP
\left(
L_B^2p_B^2
+
L_E^2p_E^2
+
L_T^2p_T^2
\right).
\]

We now choose the architecture parameters. The discretization levels
\(m_*,\bar m_*\) and the branch ranks \(J,H\) are chosen according to the
balanced construction in Corollary~\ref{cor:balanced-main-final-functionL}.
The trunk rank \(P\) is chosen so that the trunk approximation error is of the
same order as the branch approximation error. By the balancing argument in
Corollary~\ref{cor:balanced-main-final-functionL}, this gives
\(P=(\log H)^{C+o(1)}=H^{o(1)}\) for some constant \(C>0\). Hence every
polynomial factor involving \(P\) is of order \(H^{o(1)}\).

With these choices, the squared Sobolev approximation error is bounded by
\[
C
\left(
\frac{\log H}{\log\log H}
\right)^{
-\frac{2}{\max\{\frac d{n_1},\frac d{n_2}\}}
}
(\log\log H)^{4d}.
\]
Moreover, up to logarithmic factors,
\(m_*
\asymp
(\log H/\log\log H)^{d/n_2}\) and
\(\bar m_*
\asymp
(\log H/\log\log H)^{d/n_1}\), while
\(\log J\asymp m_*\log m_*\) and
\(\log H\asymp \bar m_*\log \bar m_*\). Therefore,
\(J=H^{o(1)}\) and \(P=H^{o(1)}\). Since
\(p_B,p_E,p_T=\mathcal O(1)\), and \(L_B,L_E,L_T\) grow at most polynomially in
\(\log H\), we obtain
\[
V_\ell
\le
C JHP
\left(
L_B^2p_B^2
+
L_E^2p_E^2
+
L_T^2p_T^2
\right)
\le
H^{1+o(1)}.
\]
Furthermore, \(R_{\bar m,m}^{(n_3)}
=
\mathcal O((\log\log H)^{2d})\), and hence
\(R_{\bar m,m}^{(n_3)}P^{\ell/d}=H^{o(1)}\).

Taking \(\eta=H^{-1}\) in \eqref{eq:cover-choice-scaled}, we obtain
\begin{align}
&\log
\mathcal N_{\rm emp}
\left(
\frac{\eta}{C R_{\bar m,m}^{(n_3)}P^{\ell/d}},
\mathcal F_G
\right)
\le
C H^{1+o(1)}
\log(n_fn_gn_zH).
\label{eq:cover-scaled-H}
\end{align}
Similarly,
\begin{align}
\log
\mathcal N_{\rm emp}(\eta,\mathcal F_G)
\le
C H^{1+o(1)}
\log(n_fn_gn_zH).
\label{eq:cover-unscaled-H}
\end{align}

Substituting \eqref{eq:cover-scaled-H} and \eqref{eq:cover-unscaled-H} into
\eqref{eq:sobolev-total-before-covering}, and using \(\eta=H^{-1}\), gives
\begin{align}
\mathcal E_{\rm gen}
\le\;&
C
\left(
\frac{\log H}{\log\log H}
\right)^{
-\frac{2}{\max\{\frac d{n_1},\frac d{n_2}\}}
}
(\log\log H)^{4d}
+
CH^{-1}
\notag\\
&+
C\sigma H^{-1}
\sqrt{
\frac{
H^{1+o(1)}
\log(n_fn_gn_zH)
}{
n_fn_gn_z
}
}
+
C
\frac{
H^{1+o(1)}
\log(n_fn_gn_zH)
}{n_f}.
\label{eq:sobolev-rate-before-H-choice}
\end{align}
Here we used
\(1+\bigl(R_{\bar m,m}^{(n_3)}\bigr)^2P^{2\ell/d}=H^{o(1)}\).

We now choose \(H=n_f^c\) with a sufficiently small constant \(c>0\). More
precisely, assume \(\log(n_gn_z)\le n_f^\delta\) for some sufficiently small
\(\delta>0\), and choose \(c>0\) such that \(c+\delta<1\). Then the stochastic
terms in \eqref{eq:sobolev-rate-before-H-choice} are dominated by the
approximation term for sufficiently large \(n_f\). Hence
\[
\mathcal E_{\rm gen}
\le
C
\left(
\frac{\log H}{\log\log H}
\right)^{
-\frac{2}{\max\{\frac d{n_1},\frac d{n_2}\}}
}
(\log\log H)^{4d}.
\]
Finally, since \(H=n_f^c\), we have
\(\log H\asymp \log n_f\) and
\(\log\log H\asymp \log\log n_f\). Therefore,
\[
\mathcal E_{\rm gen}
\le
C
\left(
\frac{\log n_f}{\log\log n_f}
\right)^{
-\frac{2}{\max\{\frac d{n_1},\frac d{n_2}\}}
}
(\log\log n_f)^{4d}.
\]
Equivalently,
\[
\mathcal E_{\rm gen}
\le
C
\left(
\frac{\log n_f}{\log\log n_f}
\right)^{
-\frac{2\min\{n_1,n_2\}}{d}
}
(\log\log n_f)^{4d}.
\]
This completes the proof.
\end{proof}
\section{Conclusion}

In this paper, we studied scaling laws for multi-input neural operator learning
and quantified how the complexity of each input space affects the final
approximation and generalization rates. Our results show that the dimension,
regularity, and Lipschitz sensitivity associated with each input function enter
explicitly into the final error bound. In particular, in the balanced regime,
the rate is governed by a harmonic-type interaction among the input dimensions
and regularities. This provides a theoretical explanation of how different
inputs contribute to the overall learning complexity and also gives guidance
for allocating the network capacity across different branch subnetworks.

There are several directions for future work. First, the architecture studied in
this paper uses a construction in which the trunk network is introduced at the
final stage. This is different from the standard DeepONet
architecture~\cite{lu2019deeponet}, where the trunk network is constructed at
the beginning and then combined with the branch networks. If one instead
constructs the trunk network at an earlier stage, the approximation procedure
and the corresponding error analysis would be substantially different.
Comparing these two types of constructions may help identify which architecture
is more suitable for different classes of multi-input operator-learning
problems.

Second, the discretization procedure in this paper is based on global Chebyshev
interpolation. Due to the Markov--Bernstein inequality, the Sobolev stability of
this reconstruction introduces a loss of order \(2k_i\), leading to the effective
smoothness \(n_i-2k_i\) in the final rate. Ideally, this loss could be reduced
from \(2k_i\) to \(k_i\) by using more stable local sampling and reconstruction
procedures, such as B-spline quasi-interpolation. For instance, one-dimensional
spline approximation estimates of this type can be found in
\cite[Theorem~6.20]{schumaker2007spline}. Developing a multi-dimensional
sample-based reconstruction scheme with improved Sobolev stability would lead
to sharper rates for multi-input operator learning.

Third, the generalization analysis in this work is carried out under a paired
hierarchical sampling setting. In this setting, for each outer sample \(f_i\),
one observes a collection of inner samples \(g_{i,j}\) and spatial points
\(\vz_{i,j,k}\). As a consequence, the dominant statistical error is mainly
controlled by the number of outer samples \(n_f\), rather than by \(n_g\) alone.
This reflects the fact that the samples sharing the same \(f_i\) are not fully
independent at the outer level. In many practical scientific computing tasks,
however, one may sample many input functions \(f\) and \(g\) simultaneously and
form different pairings between them. In such settings, the effective sample
size and the resulting generalization error may depend on the joint sampling
and pairing structure of the inputs. Understanding generalization error for
multi-input neural operators under such non-nested or partially paired sampling
schemes is an important direction for future work. Understanding generalization error for multi-input neural operators under such
non-nested or partially paired sampling schemes is an important direction for
future work \cite{pinelis1994optimum,caponnetto2007optimal}.

\section*{Acknowledgement}
W.Z.\ acknowledges support from the National Science Foundation under awards DMS-2502900 and DMS-2540370, and from the Air Force Office of Scientific Research under Grant No. FA9550-25-1-0079. W.L.\ acknowledges support from the National Science Foundation under the NSF DMS 2145167 and the U.S. Department of Energy under the DOE SC0024348. H.L. \ acknowledges support from HKRGC ECS 22302123, HKRGC GRF 12301925 and Guangdong and Hong Kong Universities ``1+1+1'' Joint Research Collaboration Scheme 2025A0505000007.
Z.Z.\ acknowledges support from the U.S. Department of Energy under the DE-SC0025440.

\bibliographystyle{abbrv}
\bibliography{sample_author_initials}

\begin{thebibliography}{10}

\bibitem{abu1989vapnik}
Y.~Abu-Mostafa.
\newblock The {Vapnik-Chervonenkis} dimension: Information versus complexity in learning.
\newblock {\em Neural Computation}, 1(3):312--317, 1989.

\bibitem{anthony1999neural}
M.~Anthony, P.~Bartlett, et~al.
\newblock {\em Neural network learning: Theoretical foundations}, volume~9.
\newblock cambridge university press Cambridge, 1999.

\bibitem{back2002universal}
A.~D. Back and T.~Chen.
\newblock Universal approximation of multiple nonlinear operators by neural networks.
\newblock {\em Neural Computation}, 14(11):2561--2566, 2002.

\bibitem{bagby2002multivariate}
T.~Bagby, L.~Bos, and N.~Levenberg.
\newblock Multivariate simultaneous approximation.
\newblock {\em Constructive approximation}, 18(4):569--577, 2002.

\bibitem{bartlett1998almost}
P.~Bartlett, V.~Maiorov, and R.~Meir.
\newblock Almost linear {VC} dimension bounds for piecewise polynomial networks.
\newblock {\em Advances in neural information processing systems}, 11, 1998.

\bibitem{brenner2008mathematical}
S.~Brenner, L.~Scott, and L.~Scott.
\newblock {\em The mathematical theory of finite element methods}, volume~3.
\newblock Springer, 2008.

\bibitem{caponnetto2007optimal}
A.~Caponnetto and E.~De~Vito.
\newblock Optimal rates for the regularized least-squares algorithm.
\newblock {\em Foundations of Computational mathematics}, 7(3):331--368, 2007.

\bibitem{chen2022nonparametric}
M.~Chen, H.~Jiang, W.~Liao, and T.~Zhao.
\newblock Nonparametric regression on low-dimensional manifolds using deep {ReLU} networks: Function approximation and statistical recovery.
\newblock {\em Information and Inference: A Journal of the IMA}, 11(4):1203--1253, 2022.

\bibitem{chen1993approximations}
T.~Chen and H.~Chen.
\newblock Approximations of continuous functionals by neural networks with application to dynamic systems.
\newblock {\em IEEE Transactions on Neural networks}, 4(6):910--918, 1993.

\bibitem{chen1995universal}
T.~Chen and H.~Chen.
\newblock Universal approximation to nonlinear operators by neural networks with arbitrary activation functions and its application to dynamical systems.
\newblock {\em IEEE transactions on neural networks}, 6(4):911--917, 1995.

\bibitem{czarnecki2017sobolev}
W.~Czarnecki, S.~Osindero, M.~Jaderberg, G.~Swirszcz, and R.~Pascanu.
\newblock Sobolev training for neural networks.
\newblock {\em Advances in neural information processing systems}, 30, 2017.

\bibitem{devore1993constructive}
R.~A. DeVore and G.~G. Lorentz.
\newblock {\em Constructive approximation}, volume 303.
\newblock Springer Science \& Business Media, 1993.

\bibitem{dong2022w}
H.~Dong and Z.~Li.
\newblock On the {$W^{2,p}$} estimate for oblique derivative problem in lipschitz domains.
\newblock {\em International Mathematics Research Notices}, 2022(5):3602--3635, 2022.

\bibitem{gilbarg1998elliptic}
D.~Gilbarg, N.~S. Trudinger, D.~Gilbarg, and N.~Trudinger.
\newblock {\em Elliptic partial differential equations of second order}, volume~2.
\newblock Springer, 1998.

\bibitem{goswami2022physics}
S.~Goswami, M.~Yin, Y.~Yu, and G.~Karniadakis.
\newblock A physics-informed variational {DeepONet} for predicting crack path in quasi-brittle materials.
\newblock {\em Computer Methods in Applied Mechanics and Engineering}, 391:114587, 2022.

\bibitem{guhring2020error}
I.~G{\"u}hring, G.~Kutyniok, and P.~Petersen.
\newblock Error bounds for approximations with deep {{ReLU}} neural networks in ${W}^{s, p}$ norms.
\newblock {\em Analysis and Applications}, 18(05):803--859, 2020.

\bibitem{guhring2021approximation}
I.~G{\"u}hring and M.~Raslan.
\newblock Approximation rates for neural networks with encodable weights in smoothness spaces.
\newblock {\em Neural Networks}, 134:107--130, 2021.

\bibitem{hao2026multiscale}
W.~Hao, R.~P. Li, Y.~Xi, T.~Xu, and Y.~Yang.
\newblock Multiscale neural networks for approximating green’s functions.
\newblock {\em SIAM Journal on Scientific Computing}, 48(2):C240--C270, 2026.

\bibitem{he2024mgno}
J.~He, X.~Liu, and J.~Xu.
\newblock Mgno: Efficient parameterization of linear operators via multigrid.
\newblock In {\em International Conference on Learning Representations}, volume 2024, pages 53409--53428, 2024.

\bibitem{hill2026geometric}
S.~Hill and F.~X.-F. Ye.
\newblock Geometric regularization of autoencoders via observed stochastic dynamics.
\newblock {\em arXiv preprint arXiv:2604.16282}, 2026.

\bibitem{hu2025hybrid}
J.~Hu and P.~Jin.
\newblock A hybrid iterative method based on mionet for pdes: Theory and numerical examples.
\newblock {\em Mathematics of Computation}, 2025.

\bibitem{jin2022mionet}
P.~Jin, S.~Meng, and L.~Lu.
\newblock Mionet: Learning multiple-input operators via tensor product.
\newblock {\em SIAM Journal on Scientific Computing}, 44(6):A3490--A3514, 2022.

\bibitem{kovachki2021universal}
N.~Kovachki, S.~Lanthaler, and S.~Mishra.
\newblock On universal approximation and error bounds for fourier neural operators.
\newblock {\em Journal of Machine Learning Research}, 22(290):1--76, 2021.

\bibitem{lanthaler2023operator}
S.~Lanthaler.
\newblock Operator learning with pca-net: upper and lower complexity bounds.
\newblock {\em Journal of Machine Learning Research}, 24(318):1--67, 2023.

\bibitem{lanthaler2022error}
S.~Lanthaler, S.~Mishra, and G.~Karniadakis.
\newblock Error estimates for {DeepONet}s: A deep learning framework in infinite dimensions.
\newblock {\em Transactions of Mathematics and Its Applications}, 6(1):tnac001, 2022.

\bibitem{li2026sparse}
J.~Li, S.~Huang, H.~Feng, D.-X. Zhou, and G.~Kutyniok.
\newblock Sparse-aware neural networks for nonlinear functionals: Mitigating the exponential dependence on dimension.
\newblock {\em arXiv preprint arXiv:2604.06774}, 2026.

\bibitem{li2023fourier}
Z.~Li, D.~Z. Huang, B.~Liu, and A.~Anandkumar.
\newblock Fourier neural operator with learned deformations for pdes on general geometries.
\newblock {\em Journal of Machine Learning Research}, 24(388):1--26, 2023.

\bibitem{li2020fourier}
Z.~Li, N.~Kovachki, K.~Azizzadenesheli, B.~Liu, K.~Bhattacharya, A.~Stuart, and A.~Anandkumar.
\newblock Fourier neural operator for parametric partial differential equations.
\newblock {\em arXiv preprint arXiv:2010.08895}, 2020.

\bibitem{liu2025generalization}
H.~Liu, B.~Dahal, R.~Lai, and W.~Liao.
\newblock Generalization error guaranteed auto-encoder-based nonlinear model reduction for operator learning.
\newblock {\em Applied and Computational Harmonic Analysis}, 74:101717, 2025.

\bibitem{liu2022deep}
H.~Liu, H.~Yang, M.~Chen, T.~Zhao, and W.~Liao.
\newblock Deep nonparametric estimation of operators between infinite dimensional spaces.
\newblock {\em Journal of Machine Learning Research}, 25(24):1--67, 2024.

\bibitem{liu2024neural}
H.~Liu, Z.~Zhang, W.~Liao, and H.~Schaeffer.
\newblock Neural scaling laws of deep {ReLU} and deep operator network: A theoretical study.
\newblock {\em arXiv preprint arXiv:2410.00357}, 2024.

\bibitem{lu2019deeponet}
L.~Lu, P.~Jin, and G.~E. Karniadakis.
\newblock Deeponet: Learning nonlinear operators for identifying differential equations based on the universal approximation theorem of operators.
\newblock {\em arXiv preprint arXiv:1910.03193}, 2019.

\bibitem{marcati2023exponential}
C.~Marcati and C.~Schwab.
\newblock Exponential convergence of deep operator networks for elliptic partial differential equations.
\newblock {\em SIAM Journal on Numerical Analysis}, 61(3):1513--1545, 2023.

\bibitem{mhaskar1997neural}
H.~N. Mhaskar and N.~Hahm.
\newblock Neural networks for functional approximation and system identification.
\newblock {\em Neural Computation}, 9(1):143--159, 1997.

\bibitem{opschoor2020deep}
J.~A.~A. Opschoor, P.~C. Petersen, and C.~Schwab.
\newblock Deep {ReLU} networks and high-order finite element methods.
\newblock {\em Analysis and Applications}, 18(05):715--770, 2020.

\bibitem{pinelis1994optimum}
I.~Pinelis.
\newblock Optimum bounds for the distributions of martingales in banach spaces.
\newblock {\em The Annals of Probability}, pages 1679--1706, 1994.

\bibitem{schumaker2007spline}
L.~Schumaker.
\newblock {\em Spline functions: basic theory}.
\newblock Cambridge university press, 2007.

\bibitem{schwab2026deep}
C.~Schwab, A.~Stein, and J.~Zech.
\newblock Deep operator network approximation rates for lipschitz operators.
\newblock {\em Analysis and Applications}, 24(01):199--239, 2026.

\bibitem{shi2025nonlinear}
Z.~Shi, J.~Fan, L.~Song, D.-X. Zhou, and J.~A. Suykens.
\newblock Nonlinear functional regression by functional deep neural network with kernel embedding.
\newblock {\em Journal of Machine Learning Research}, 26(284):1--49, 2025.

\bibitem{siegel2023optimal}
J.~W. Siegel.
\newblock Optimal approximation rates for deep {ReLU} neural networks on sobolev and besov spaces.
\newblock {\em Journal of Machine Learning Research}, 24(357):1--52, 2023.

\bibitem{song2023approximation}
L.~Song, Y.~Liu, J.~Fan, and D.~Zhou.
\newblock Approximation of smooth functionals using deep {ReLU} networks.
\newblock {\em Neural Networks}, 166:424--436, 2023.

\bibitem{srinivas2018knowledge}
S.~Srinivas and F.~Fleuret.
\newblock Knowledge transfer with jacobian matching.
\newblock In {\em International conference on machine learning}, pages 4723--4731. PMLR, 2018.

\bibitem{vlassis2021sobolev}
N.~Vlassis and W.~Sun.
\newblock Sobolev training of thermodynamic-informed neural networks for interpretable elasto-plasticity models with level set hardening.
\newblock {\em Computer Methods in Applied Mechanics and Engineering}, 377:113695, 2021.

\bibitem{vlassis2020geometric}
N.~N. Vlassis, R.~Ma, and W.~Sun.
\newblock Geometric deep learning for computational mechanics part i: Anisotropic hyperelasticity.
\newblock {\em Computer Methods in Applied Mechanics and Engineering}, 371:113299, 2020.

\bibitem{wang2021learning}
S.~Wang, H.~Wang, and P.~Perdikaris.
\newblock Learning the solution operator of parametric partial differential equations with physics-informed {DeepONet}s.
\newblock {\em Science advances}, 7(40):eabi8605, 2021.

\bibitem{weihs2026generalization}
A.~Weihs and H.~Schaeffer.
\newblock Generalization bounds and statistical guarantees for multi-task and multiple operator learning with mno networks.
\newblock {\em arXiv preprint arXiv:2604.01961}, 2026.

\bibitem{weihs2026multiple}
A.~Weihs and H.~Schaeffer.
\newblock Multiple neural operators achieve near-optimal rates for multi-task learning.
\newblock {\em arXiv preprint arXiv:2605.22724}, 2026.

\bibitem{weihs2025deep}
A.~Weihs, J.~Sun, Z.~Zhang, and H.~Schaeffer.
\newblock A deep learning framework for multi-operator learning: Architectures and approximation theory.
\newblock {\em arXiv preprint arXiv:2510.25379}, 2025.

\bibitem{yang2026efficient}
J.-Q. Yang and L.~Shi.
\newblock Efficient approximation for encoder--decoder neural operators via variation spaces.
\newblock {\em arXiv preprint arXiv:2606.01244}, 2026.

\bibitem{yang2025deeponet}
Y.~Yang.
\newblock {DeepONet} for solving nonlinear partial differential equations with physics-informed training.
\newblock {\em Neural Networks}, page 108490, 2025.

\bibitem{yang2025deep}
Y.~Yang and J.~He.
\newblock Deep neural networks with general activations: Super-convergence in sobolev norms.
\newblock {\em arXiv preprint arXiv:2508.05141}, 2025.

\bibitem{yang2023nearlys}
Y.~Yang, Y.~Wu, H.~Yang, and Y.~Xiang.
\newblock Nearly optimal approximation rates for deep super {{ReLU}} networks on {Sobolev} spaces.
\newblock {\em arXiv preprint arXiv:2310.10766}, 2023.

\bibitem{yang2022approximation}
Y.~Yang and Y.~Xiang.
\newblock Approximation of functionals by neural network without curse of dimensionality.
\newblock {\em Journal of Machine Learning}, 1(4):342--372, 2022.

\bibitem{yang2023nearly}
Y.~Yang, H.~Yang, and Y.~Xiang.
\newblock Nearly optimal {VC}-dimension and pseudo-dimension bounds for deep neural network derivatives.
\newblock In {\em Thirty-seventh Conference on Neural Information Processing Systems}, 2023.

\bibitem{yang2026spherical}
Z.~Yang, S.~Huang, H.~Feng, and D.-X. Zhou.
\newblock Spherical analysis of learning nonlinear functionals.
\newblock {\em Constructive Approximation}, pages 1--29, 2026.

\bibitem{yarotsky2017error}
D.~Yarotsky.
\newblock Error bounds for approximations with deep {{ReLU}} networks.
\newblock {\em Neural Networks}, 94:103--114, 2017.

\bibitem{yarotsky2020phase}
D.~Yarotsky and A.~Zhevnerchuk.
\newblock The phase diagram of approximation rates for deep neural networks.
\newblock {\em Advances in neural information processing systems}, 33:13005--13015, 2020.

\end{thebibliography}

\appendix
\section{Proofs for Approximation Error}

This section contains the detailed proof of the approximation error estimate.
The construction consists of four steps. Step~1, which concerns the
discretization of the input functions, has already been discussed in the main
text. We therefore focus here on the remaining steps, especially the coefficient
approximations in Steps~2 and~3 and the trunk-network approximation in Step~4.
We begin by recalling two preliminary results: the pseudo-spectral projection
estimate associated with the tensorized Chebyshev discretization, and a ReLU
approximation theorem for Sobolev functions in Sobolev norms.
\subsection{Step 0}
\subsubsection{Pseudo-spectral projection}

As discussed above, a key step in the construction is to discretize the input functions \(f\) and \(g\). In practical operator learning, the input functions are usually accessed only through finitely many pointwise samples. Therefore, although basis expansions provide a clean mathematical formulation, it is important to develop a discretization procedure based directly on sampling, in the same spirit as DeepONet~\cite{chen1993approximations,lu2019deeponet}. Pseudo-spectral projection based on spectral bases has been used in \cite{kovachki2021universal,yang2025deeponet}. However, Fourier-type spectral projections are naturally suited to periodic domains. For non-periodic problems, one typically needs to introduce extensions outside the physical domain, which is not always natural. On the other hand, the construction in \cite{liu2024neural} is designed for Lipschitz-continuous input spaces, while in this paper we work with Sobolev input classes. Therefore, we introduce a sampling-based discretization--reconstruction pair using Chebyshev interpolation.

For \(N\in\mathbb{N}\), define the tensorized Chebyshev grid by
\begin{align}
    \vx_{k_1,\dots,k_d}
    =
    \left(
    \cos\left(\frac{(k_1+\frac12)\pi}{N+1}\right),
    \dots,
    \cos\left(\frac{(k_d+\frac12)\pi}{N+1}\right)
    \right),
    \qquad
    0\le k_1,\dots,k_d\le N.
    \label{eq:cheb-grid}
\end{align}
The total number of grid points is \((N+1)^d\).

Let \(Q_N\) denote the space of polynomials on \(\mathbb{R}^d\) of total degree at most \(N\), and let \(\overline{Q}_N\) denote the space of polynomials on \(\mathbb{R}^d\) such that each variable has degree at most \(N\). Clearly,
\[
Q_N\subset \overline{Q}_N.
\]

We define the sampling operator
\[
\mathcal{D}_N(g)
=
\bigl(g(\vx_{k_1,\dots,k_d})\bigr)_{0\le k_1,\dots,k_d\le N}
\in \mathbb{R}^{(N+1)^d},
\]
and let
\[
\mathcal{P}_N:\mathbb{R}^{(N+1)^d}\to \overline{Q}_N
\]
be the multi-dimensional Chebyshev interpolation operator associated with the grid \eqref{eq:cheb-grid}. Then \(\mathcal{P}_N\circ \mathcal{D}_N g\) is the tensor-product Chebyshev interpolant of \(g\).

To derive the approximation estimate, we first recall several standard lemmas.

\begin{lemma}[Jackson's inequality~\cite{bagby2002multivariate}]
\label{lem:jackson}
Let \(\Omega = (-1,1)^d\subset\mathbb{R}^d\), and let \(g\in W^{m,\infty}(\Omega)\). Then for each integer \(N\), there exists a polynomial \(p_N^*\in Q_N\subset \overline{Q}_N\) such that for every integer \(0\le k\le \min\{m,N\}\),
\[
\|g-p_N^*\|_{W^{k,\infty}(\Omega)}
\le
\frac{C(d,m)}{N^{m-k}}\|g\|_{W^{m,\infty}(\Omega)},
\]
where \(C(d,m)\) depends only on \(d\) and \(m\).
\end{lemma}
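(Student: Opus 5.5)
The plan is to reduce the statement to the classical simultaneous Jackson theorem for trigonometric polynomials on the torus, and then to convert trigonometric polynomials into algebraic ones. The conversion costs only an exponentially small error.

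\textbf{Step 1: extension and periodization.} Let \(E\) be a Stein extension operator for the cube. It is bounded \(W^{m,\infty}((-1,1)^d)\to W^{m,\infty}(\mathbb R^d)\), because the cube is a Lipschitz domain and Stein's operator acts on all orders simultaneously. Multiply \(Eg\) by a fixed smooth cutoff equal to \(1\) on \([-1,1]^d\) and supported in \((-2,2)^d\). This gives \(\tilde g\) with \(\tilde g=g\) on \(\Omega\) and \(\|\tilde g\|_{W^{m,\infty}(\mathbb R^d)}\le C(d,m)\|g\|_{W^{m,\infty}(\Omega)}\). Rescale by \(t=\pi x/2\) and view \(h(t):=\tilde g(2t/\pi)\) as a \(2\pi\)-periodic function in each variable. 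Since the support lies strictly inside the period cell, \(h\in W^{m,\infty}(\mathbb T^d)\) with comparable norm.

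\textbf{Step 2: trigonometric simultaneous approximation.} Let \(V_n\) be the tensor-product de la Vallée Poussin operator. It maps into trigonometric polynomials of coordinate degree at most \(2n\), is uniformly bounded on \(L^\infty(\mathbb T^d)\), and reproduces trigonometric polynomials of degree \(\le n\). Because \(V_n\) is a convolution, it commutes with \(D^\alpha\). The one-variable Jackson estimate applied coordinatewise then gives
\[
\|D^\alpha(h-V_nh)\|_{L^\infty}=\|D^\alpha h-V_nD^\alpha h\|_{L^\infty}\le C\,n^{-(m-|\alpha|)}\|h\|_{W^{m,\infty}}
\]
for every \(|\alpha|\le m\) at once. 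A single object \(T:=V_nh\) therefore works for all \(k\le m\) simultaneously.

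\textbf{Step 3: from trigonometric to algebraic polynomials.} This is the step I expect to be the main obstacle, because the error must be controlled in every derivative up to order \(k\) while the total degree stays at most \(N\). Choose \(n=\lfloor cN\rfloor\) with a small absolute \(c=c(d)>0\). Since \(T\) is a trigonometric polynomial of degree \(\le 2n\), Bernstein's inequality gives \(\|D^\beta T\|_{L^\infty}\le (2n)^{|\beta|}\|T\|_{L^\infty}\le C(2n)^{|\beta|}\|h\|_{L^\infty}\). Now let \(p\) be the Taylor polynomial of \(T\) at \(0\) of total degree \(N\), and transform back to \(x\); this yields \(p_N^*\in Q_N\). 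For \(|\alpha|=k\le N\), the function \(D^\alpha(T-p)\) is the Taylor remainder of \(D^\alpha T\) of order \(N-k\). On the rescaled box of radius \(r=\pi/2\), this remainder is bounded by
\[
\sum_{j>N-k}\frac{(d\,r\,2n)^{j}(2n)^{k}}{j!}\|T\|_{L^\infty}.
\]
With \(n=cN\) and \(c\) small, Stirling's formula makes this at most \(C\,(2n)^k e^{-N}\|h\|\), which is \(\le C N^{-(m-k)}\|h\|\). The case of small \(N\) is absorbed into the constant; there one can take \(p_N^*=0\) together with \(k\le N\le N_0\). Scaling back to \(x\) changes constants only by factors \((\pi/2)^{k}\).

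\textbf{Step 4: conclusion.} Combine Steps 2 and 3 via the triangle inequality on \(\Omega\), where \(\tilde g=g\). This gives \(\|g-p_N^*\|_{W^{k,\infty}(\Omega)}\le C(d,m)N^{-(m-k)}\|g\|_{W^{m,\infty}(\Omega)}\) for all \(0\le k\le\min\{m,N\}\), with \(p_N^*\in Q_N\subset\overline Q_N\) independent of \(k\). The constant depends only on \(d\) and \(m\), since \(c\), the extension, the cutoff and the Jackson constants do.
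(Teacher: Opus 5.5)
Your proof is correct. It differs from the paper only in that the paper gives no argument: Lemma~\ref{lem:jackson} is imported directly from Bagby--Bos--Levenberg \cite{bagby2002multivariate}. Your proof is a self-contained replacement, in three moves:
\begin{enumerate}
\item Extend $g$ and periodize.
\item Apply the tensor-product de la Vall\'ee Poussin operator. It commutes with $D^\alpha$, so a single trigonometric polynomial $T$ handles every $|\alpha|\le m$. At the endpoint $|\alpha|=m$ the Jackson step reduces to uniform boundedness of the operator.
\item Truncate the Taylor series of $T$ at total degree $N$.
\end{enumerate}

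The third step checks out. Bernstein's inequality together with $\sum_{|\gamma|=j}1/\gamma!=d^j/j!$ gives exactly your tail $(2n)^k\sum_{j>N-k}(2ndr)^j/j!\,\|T\|_{L^\infty}$, with $r=\pi/2$ and $2ndr\le c\pi dN$.

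State one point explicitly: the exponential smallness comes from $k\le m$, not from $k\le N$. Because $k\le m$, the tail starts at $j\ge N-m+1\ge N/2$ once $N\ge 2m$. With $c\le(2e^2\pi d)^{-1}$ you then get a bound $C(d,m)N^m e^{-N}\|h\|$, which is indeed $\lesssim N^{-(m-k)}\|h\|$.

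As for what each route buys: the cited theorem is more general, since it is not tied to the cube. However, the paper does not spell out how $g\in W^{m,\infty}((-1,1)^d)$ is fitted to the reference's hypotheses. Your argument works directly under the $W^{m,\infty}$ hypothesis on the open cube and makes the extension step explicit. It also produces one polynomial $p_N^*$ valid simultaneously for all $0\le k\le\min\{m,N\}$, with constants depending only on $d$ and $m$. That is precisely the form in which the lemma is used in Proposition~\ref{prop:pseudo-spectral}, where the same $p_N^*$ is invoked for order $s$ and again for $k=0$.
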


\begin{lemma}[Markov--Bernstein inequality~\cite{devore1993constructive}]
\label{lem:markov}
For any polynomial \(p_N\in \overline{Q}_N\) and any multi-index \(\bm{\alpha}\), one has
\[
\|\partial^{\bm{\alpha}}p_N\|_{L^\infty(\Omega)}
\le
N^{2|\bm{\alpha}|}\|p_N\|_{L^\infty(\Omega)}.
\]
\end{lemma}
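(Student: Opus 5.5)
The plan is to reduce the multivariate statement to the classical one-dimensional Markov inequality and then iterate it one coordinate at a time. The one-dimensional result we rely on is the following: for every univariate polynomial \(q\) of degree at most \(N\),
\[
\|q'\|_{L^\infty([-1,1])}\le N^2\|q\|_{L^\infty([-1,1])}.
\]
We take this from \cite{devore1993constructive} and do not reprove it. The point that makes the iteration work is that \(\overline{Q}_N\) is closed under every partial derivative \(\partial_{x_i}\). Differentiating in \(x_i\) lowers the degree in \(x_i\) and leaves the degree in every other variable unchanged, so each variable still has degree at most \(N\).

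\textbf{Single partial derivative.} Fix \(p\in\overline{Q}_N\), a coordinate \(i\), and values \(x_j\in[-1,1]\) for all \(j\ne i\).
\begin{itemize}
\item The map \(t\mapsto p(x_1,\ldots,x_{i-1},t,x_{i+1},\ldots,x_d)\) is a univariate polynomial of degree at most \(N\).
\item Its derivative is \(\partial_{x_i}p\) restricted to that line.
\item The one-dimensional Markov inequality gives
\[
\sup_{t\in[-1,1]}|\partial_{x_i}p(\ldots,t,\ldots)|
\le N^2\sup_{t\in[-1,1]}|p(\ldots,t,\ldots)|
\le N^2\|p\|_{L^\infty(\Omega)}.
\]
\end{itemize}
Taking the supremum over the frozen variables yields \(\|\partial_{x_i}p\|_{L^\infty(\Omega)}\le N^2\|p\|_{L^\infty(\Omega)}\). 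Since polynomials are continuous, the essential supremum equals the pointwise supremum, so no measure-theoretic issues arise.

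\textbf{General multi-index.} Write \(\partial^{\bm\alpha}\) as a composition of \(|\bm\alpha|=\alpha_1+\cdots+\alpha_d\) first-order partial derivatives. Each intermediate polynomial stays in \(\overline{Q}_N\), so the single-derivative bound applies at every stage. By induction on \(|\bm\alpha|\), each application costs a factor \(N^2\), which gives \(\|\partial^{\bm\alpha}p\|_{L^\infty(\Omega)}\le N^{2|\bm\alpha|}\|p\|_{L^\infty(\Omega)}\).

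\textbf{Degenerate case.} If \(N=0\) and \(|\bm\alpha|\ge1\), then \(p\) is constant, so the left-hand side is \(0\) and the bound holds trivially (with the convention \(0^0=1\) when \(\bm\alpha=0\)).

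\textbf{Main obstacle.} The only nontrivial ingredient is the sharp one-dimensional constant \(N^2\), with no extra multiplicative factor. If a self-contained argument were wanted, I would combine two facts: Bernstein's inequality \(|q'(x)|\le N(1-x^2)^{-1/2}\|q\|_\infty\), which handles the interior of \([-1,1]\), and Schur's inequality \(\|\sqrt{1-x^2}\,r\|_\infty\ge \|r\|_\infty/(N)\) applied to \(r=q'\) of degree \(N-1\), which handles the region near the endpoints. Since the lemma is quoted from \cite{devore1993constructive}, it suffices to cite the univariate result and carry out the tensor iteration above.
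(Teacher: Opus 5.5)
Your argument is correct. $\overline{Q}_N$ is closed under each $\partial_{x_i}$, and restricting to axis-parallel lines gives univariate polynomials of degree at most $N$. Iterating the one-dimensional Markov inequality $\|q'\|_{L^\infty([-1,1])}\le N^2\|q\|_{L^\infty([-1,1])}$ then yields exactly the constant $N^{2|\alpha|}$.

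The paper gives no proof of its own and only cites \cite{devore1993constructive}. Your coordinatewise reduction is the standard argument behind that citation, and your optional Bernstein--Schur derivation of the univariate constant $N^2$ is also stated correctly.
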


\begin{lemma}[Sobolev stability of the Chebyshev interpolation operator]
\label{lem:lebesgue}
Let \(k\in\mathbb N_0\). The interpolation operator
\(\mathcal{P}_N\circ\mathcal{D}_N:C(\Omega)\to \overline Q_N\) is linear and
satisfies
\[
\|\mathcal{P}_N\circ\mathcal{D}_N u\|_{W^{k,\infty}(\Omega)}
\le
C N^{2k}(\log N)^d
\|u\|_{L^\infty(\Omega)}
\]
for all \(u\in C(\Omega)\), where \(C>0\) depends only on \(d\) and \(k\).
\end{lemma}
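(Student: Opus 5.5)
The plan is to combine the Markov--Bernstein inequality (Lemma~\ref{lem:markov}) with the classical Lebesgue-constant bound for tensorized Chebyshev interpolation. Write $I_N:=\mathcal P_N\circ\mathcal D_N$. Linearity is immediate: $\mathcal D_N$ is linear (point evaluation) and $\mathcal P_N$ is linear, since it is given by the Lagrange formula
\[
I_Nu(\vx)=\sum_{0\le k_1,\dots,k_d\le N}u(\vx_{k_1,\dots,k_d})\prod_{r=1}^d \ell_{k_r}(x_r),
\]
where $\ell_{k}$ are the one-dimensional Lagrange basis polynomials on the Chebyshev nodes $\cos((k+\tfrac12)\pi/(N+1))$, $k=0,\dots,N$.

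\textbf{Step 1 ($L^\infty$ stability).} I would first show that $\|I_Nu\|_{L^\infty(\Omega)}\le \Lambda_N^d\|u\|_{L^\infty(\Omega)}$, where $\Lambda_N:=\sup_{x\in[-1,1]}\sum_{k=0}^N|\ell_k(x)|$ is the one-dimensional Lebesgue constant. Bounding the Lagrange formula pointwise, the sum factorizes:
\[
|I_Nu(\vx)|\le\|u\|_{L^\infty}\prod_{r=1}^d\sum_{k=0}^N|\ell_k(x_r)|.
\]
For the Chebyshev zeros, the classical estimate (see e.g.\ \cite{devore1993constructive}) gives $\Lambda_N\le \tfrac{2}{\pi}\log(N+1)+1\le C\log N$ for $N\ge2$. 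Hence $\|I_Nu\|_{L^\infty}\le C^d(\log N)^d\|u\|_{L^\infty}$. This is the step I expect to need the most care: one must quote the sharp logarithmic Lebesgue constant correctly and check the tensor factorization. The small cases $N=0,1$ can be absorbed into the constant by interpreting $\log N$ as $\log(N+2)$, or by treating them separately.

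\textbf{Step 2 (derivatives).} Since $I_Nu\in\overline Q_N$, Lemma~\ref{lem:markov} gives, for every $|\bm\alpha|\le k$,
\[
\|\partial^{\bm\alpha}I_Nu\|_{L^\infty}\le N^{2|\bm\alpha|}\|I_Nu\|_{L^\infty}\le N^{2k}C(\log N)^d\|u\|_{L^\infty},
\]
using $N\ge1$. If one worries about the constant in the multivariate Markov inequality, apply the one-dimensional Markov inequality coordinatewise. For each fixed value of the other coordinates the polynomial has degree at most $N$ in $x_r$, so each $\partial_{x_r}$ costs a factor $N^2$, and iterating gives $N^{2|\bm\alpha|}$.

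\textbf{Step 3 (conclusion).} Taking the maximum over $|\bm\alpha|\le k$ in the $W^{k,\infty}$ norm yields the claim, with $C$ depending only on $d$ and $k$. This uses the fact that $u\in C(\Omega)$, so that point values are well defined.
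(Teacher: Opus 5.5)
Your proposal is correct and follows the same route the paper relies on. The paper does not prove this lemma; it quotes it as a standard fact. Your argument, however, bounds the tensorized Lagrange formula by $\Lambda_N^d\lesssim(\log N)^d$ via the classical Chebyshev Lebesgue constant and then applies the Markov--Bernstein inequality of Lemma~\ref{lem:markov} coordinatewise. That is exactly the combination the paper itself uses to derive \eqref{eq:PN-Lipschitz} in the proof of Proposition~\ref{prop:pseudo-spectral}. Your caveat about $N\le1$ is also well taken: for $N=1$ the stated right-hand side is zero, so the bound only holds with $\log N$ read as, e.g., $\log(N+2)$.
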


\begin{proposition}
\label{prop:pseudo-spectral}
Let \(g\in W^{m,\infty}(\Omega)\). Then for every integer \(0\le k\le \min\{m,N\}\),
\begin{align}
    \|g-\mathcal{P}_N\circ\mathcal{D}_N g\|_{W^{k,\infty}(\Omega)}
    \le
    C(d,m)\,N^{2k-m}(\log N)^d
    \|g\|_{W^{m,\infty}(\Omega)}.
    \label{eq:pseudo-spectral-error}
\end{align}
Moreover, the reconstruction operator
\(
\mathcal{P}_N:
(\mathbb{R}^{(N+1)^d},\|\cdot\|_\infty)
\to
(W^{k,\infty}(\Omega),\|\cdot\|_{W^{k,\infty}(\Omega)})
\)
is Lipschitz continuous with
\begin{align}
    \|\mathcal{P}_N\|_{\ell^\infty\to W^{k,\infty}(\Omega)}
    \lesssim
    N^{2k}(\log N)^d.
    \label{eq:PN-Lipschitz}
\end{align}
\end{proposition}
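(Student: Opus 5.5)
The plan is the standard Lebesgue-constant argument: compare the interpolant with a best polynomial approximation, using that interpolation reproduces polynomials. Fix \(g\in W^{m,\infty}(\Omega)\) and \(0\le k\le\min\{m,N\}\). Let \(p_N^*\in Q_N\subset\overline Q_N\) be the polynomial from Lemma~\ref{lem:jackson}. Since \(\mathcal P_N\circ\mathcal D_N\) is the tensor-product interpolation onto \(\overline Q_N\), it is a projection, so \(\mathcal P_N\mathcal D_N p_N^*=p_N^*\). By linearity,
\[
g-\mathcal P_N\mathcal D_N g=(g-p_N^*)-\mathcal P_N\mathcal D_N(g-p_N^*).
\]
We bound the two pieces separately in \(W^{k,\infty}(\Omega)\).

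For the first piece, Lemma~\ref{lem:jackson} gives directly
\[
\|g-p_N^*\|_{W^{k,\infty}}\le C N^{k-m}\|g\|_{W^{m,\infty}}.
\]
This is already dominated by \(N^{2k-m}(\log N)^d\).

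For the second piece, Lemma~\ref{lem:lebesgue} applies to the continuous function \(u=g-p_N^*\); note that \(g\) is continuous since \(m\ge k\ge0\) and a \(W^{m,\infty}\) function with \(m\ge1\) is Lipschitz, while for \(m=0\) the bound is trivial or \(C(\Omega)\) is implicitly assumed. This gives
\[
\|\mathcal P_N\mathcal D_N u\|_{W^{k,\infty}}\le CN^{2k}(\log N)^d\|u\|_{L^\infty}.
\]
Jackson with \(k=0\) then gives \(\|u\|_{L^\infty}\le CN^{-m}\|g\|_{W^{m,\infty}}\). Combining the two pieces yields \eqref{eq:pseudo-spectral-error}.

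For the Lipschitz bound \eqref{eq:PN-Lipschitz}, write \(\mathcal P_N\vv=\sum_{\mathbf k}v_{\mathbf k}\ell_{\mathbf k}\) with tensor Lagrange basis \(\ell_{\mathbf k}=\prod_j\ell_{k_j}(x_j)\). The \(L^\infty\) operator norm is the Lebesgue constant \(\sup_x\sum_{\mathbf k}|\ell_{\mathbf k}(x)|=\Lambda_N^d\lesssim(\log N)^d\), using the classical one-dimensional Chebyshev bound \(\Lambda_N\lesssim\log N\). Since \(\mathcal P_N\vv\in\overline Q_N\), Lemma~\ref{lem:markov} gives \(\|\partial^\alpha\mathcal P_N\vv\|_{L^\infty}\le N^{2|\alpha|}\|\mathcal P_N\vv\|_{L^\infty}\). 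Hence \(\|\mathcal P_N\vv\|_{W^{k,\infty}}\lesssim N^{2k}(\log N)^d\|\vv\|_\infty\). Linearity then upgrades this operator-norm bound to Lipschitz continuity. This is essentially the content of Lemma~\ref{lem:lebesgue} stated at the level of data vectors.

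The only delicate point I expect is the Markov step. The one-dimensional Markov inequality on \([-1,1]\) gives factor \(N^2\) per derivative in each variable, applied coordinatewise. So one must check that Lemma~\ref{lem:markov} is used for \(\overline Q_N\) rather than \(Q_N\), which it is as stated. The remainder is bookkeeping of constants depending on \(d,m,k\).
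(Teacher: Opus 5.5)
Your proposal is correct and follows essentially the same argument as the paper: the Jackson polynomial \(p_N^*\), reproduction of \(\overline Q_N\) by \(\mathcal P_N\circ\mathcal D_N\), the stability bound of Lemma~\ref{lem:lebesgue}, and Jackson again in \(L^\infty\). On the second piece the paper applies Markov and then the \(L^\infty\) form of Lemma~\ref{lem:lebesgue}, whereas you use its \(W^{k,\infty}\) form directly. For \eqref{eq:PN-Lipschitz} you compute the tensor-product Lebesgue constant \(\Lambda_N^d\) from the Lagrange basis; the paper instead extends the data vector to a continuous function with the same sup-norm and invokes Lemma~\ref{lem:lebesgue}. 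Both routes work, and your caveat that continuity of \(g\) must be assumed when \(m=0\) applies equally to the paper.
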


\begin{proof}
Fix \(N\in\mathbb{N}\), and let \(p_N^*\in Q_N\) be given by Lemma~\ref{lem:jackson}. For any multi-index \(\bm{\alpha}\) with \(|\bm{\alpha}|=s\le k\), we write
\begin{align*}
    \partial^{\bm{\alpha}}\bigl(g-\mathcal{P}_N\circ\mathcal{D}_N g\bigr)
    =
    \partial^{\bm{\alpha}}(g-p_N^*)
    +
    \partial^{\bm{\alpha}}\bigl(p_N^*-\mathcal{P}_N\circ\mathcal{D}_N g\bigr).
\end{align*}
Since \(p_N^*\in \overline{Q}_N\) and \(\mathcal{P}_N\circ\mathcal{D}_N p_N^*=p_N^*\), we have
\[
p_N^*-\mathcal{P}_N\circ\mathcal{D}_N g
=
\mathcal{P}_N\circ\mathcal{D}_N(p_N^*-g).
\]
Hence,
\begin{align*}
    \|\partial^{\bm{\alpha}}(g-\mathcal{P}_N\circ\mathcal{D}_N g)\|_{L^\infty}
    \le
    \underbrace{\|\partial^{\bm{\alpha}}(g-p_N^*)\|_{L^\infty}}_{=:I_1}
    +
    \underbrace{\|\partial^{\bm{\alpha}}(\mathcal{P}_N\circ\mathcal{D}_N(p_N^*-g))\|_{L^\infty}}_{=:I_2}.
\end{align*}

By Lemma~\ref{lem:jackson},
\[
I_1
\le
\frac{C(d,m)}{N^{m-s}}\|g\|_{W^{m,\infty}(\Omega)}.
\]

For \(I_2\), Lemma~\ref{lem:markov} and Lemma~\ref{lem:lebesgue} imply
\begin{align*}
    I_2
    &\le
    N^{2s}\|\mathcal{P}_N\circ\mathcal{D}_N(p_N^*-g)\|_{L^\infty} \le
    C\,N^{2s}(\log N)^d\|p_N^*-g\|_{L^\infty}.
\end{align*}
Applying Lemma~\ref{lem:jackson} again with \(k=0\), we get
\[
\|p_N^*-g\|_{L^\infty}
\le
C(d,m)N^{-m}\|g\|_{W^{m,\infty}(\Omega)}.
\]
Therefore,
\[
I_2
\le
C(d,m)\,N^{2s-m}(\log N)^d\|g\|_{W^{m,\infty}(\Omega)}.
\]

Combining the above estimates and taking the maximum over all \(|\bm{\alpha}|\le k\), we obtain
\[
\|g-\mathcal{P}_N\circ\mathcal{D}_N g\|_{W^{k,\infty}(\Omega)}
\le
C(d,m)\,N^{2k-m}(\log N)^d\|g\|_{W^{m,\infty}(\Omega)},
\]
which proves \eqref{eq:pseudo-spectral-error}.

To prove \eqref{eq:PN-Lipschitz}, let \(\boldsymbol{f}\in \mathbb{R}^{(N+1)^d}\). For any \(|\bm{\alpha}|\le k\), Lemma~\ref{lem:markov} gives
\[
\|\partial^{\bm{\alpha}}(\mathcal{P}_N\boldsymbol{f})\|_{L^\infty}
\le
N^{2|\bm{\alpha}|}\|\mathcal{P}_N\boldsymbol{f}\|_{L^\infty}
\le
N^{2k}\|\mathcal{P}_N\boldsymbol{f}\|_{L^\infty}.
\]
For any given \(\boldsymbol{f}\in \mathbb{R}^{(N+1)^d}\), we can construct a piecewise linear function that interpolates $\boldsymbol{f}$ while satisfying $\|f\|_{L^{\infty}}=\|\boldsymbol{f}\|_{\infty}$. Using Lemma~\ref{lem:lebesgue}, we further obtain
\[
\|\mathcal{P}_N\boldsymbol{f}\|_{L^\infty}
\le C(\log N)^d\|f\|_{L^{\infty}}=
C(\log N)^d\|\boldsymbol{f}\|_\infty.
\]
Hence,
\[
\|\partial^{\bm{\alpha}}(\mathcal{P}_N\boldsymbol{f})\|_{L^\infty}
\le
C\,N^{2k}(\log N)^d\|\boldsymbol{f}\|_\infty.
\]
Taking the maximum over all \(|\bm{\alpha}|\le k\) yields \eqref{eq:PN-Lipschitz}.
\end{proof}

\subsubsection{Neural network approximation in Sobolev spaces}

As discussed above, the final step of our construction requires approximating
Sobolev-regular coefficient functions by neural networks. This problem has been
extensively studied in approximation theory and learning theory; see, for example,
\cite{yarotsky2017error,yarotsky2020phase,guhring2020error,siegel2023optimal,
opschoor2020deep,yang2023nearly,yang2025deep}. Our argument follows the
standard ReLU approximation framework of \cite{yarotsky2017error}. The only
additional point needed for our later generalization analysis is an explicit
bookkeeping of the network complexity. In particular, we need bounds on the
depth, width, sparsity, and the magnitude of the weights and biases. For this
reason, we recall and slightly rederive the construction with all relevant
network-size parameters made explicit.

\begin{lemma}[Approximate multiplication in \(W^{\ell,\infty}\)]
\label{lem.prodY-W1}
Let \(M>0\), \(0<\varepsilon<1\), and \(\ell\in\{0,1\}\). There exists a ReLU
network \(\widetilde{\times}_{M,\varepsilon}:\mathbb R^2\to\mathbb R\) such
that
\[
\left\|
\widetilde{\times}_{M,\varepsilon}(x,y)-xy
\right\|_{W^{\ell,\infty}((-M,M)^2)}
\le \varepsilon .
\]
Moreover,
\[
\widetilde{\times}_{M,\varepsilon}(0,y)=0
\quad\text{for all }y\in(-M,M),
\]
and similarly \(\widetilde{\times}_{M,\varepsilon}(x,0)=0\) for all
\(x\in(-M,M)\).
The network has width bounded by a universal constant, depth at most
\(C(M)\log(\varepsilon^{-1})\), and at most
\(C(M)\log(\varepsilon^{-1})\) nonzero parameters. In addition, all weights and
biases are bounded by \(C(M)\varepsilon^{-1}\).
\end{lemma}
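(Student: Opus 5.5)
We will use Yarotsky's square construction together with the polarization identity
\[
xy=2M^2\left[\left(\frac{|x+y|}{2M}\right)^2-\left(\frac{|x|}{2M}\right)^2-\left(\frac{|y|}{2M}\right)^2\right],
\]
which holds because \((x+y)^2-x^2-y^2=2xy\). For \((x,y)\in(-M,M)^2\), each of the arguments \(|x+y|/(2M)\), \(|x|/(2M)\) and \(|y|/(2M)\) lies in \([0,1]\). The absolute value is produced exactly by ReLU, since \(|t|=\sigma(t)+\sigma(-t)\). It therefore suffices to build a ReLU network \(\tilde s_k\) that approximates \(s(t)=t^2\) on \([0,1]\) in \(W^{1,\infty}\). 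We then define
\[
\widetilde{\times}_{M,\varepsilon}(x,y):=2M^2\Bigl[\tilde s_k\bigl(\tfrac{|x+y|}{2M}\bigr)-\tilde s_k\bigl(\tfrac{|x|}{2M}\bigr)-\tilde s_k\bigl(\tfrac{|y|}{2M}\bigr)\Bigr].
\]
For \(\tilde s_k\) we take the standard sawtooth approximant \(\tilde s_k(t)=t-\sum_{j=1}^{k}4^{-j}g_j(t)\), where \(g_j\) is the \(j\)-fold composition of the hat function. This is exactly the piecewise linear interpolant of \(t^2\) on the uniform grid of mesh \(2^{-k}\). It satisfies \(\tilde s_k(0)=0\), has constant width, and has depth and parameter count \(\mathcal O(k)\). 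Its weights are \(\mathcal O(1)\) if the factors \(4^{-j}\) are applied by chaining rather than stored directly.

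The exact-zero property is then immediate. If \(x=0\), the first and third terms cancel because both equal \(\tilde s_k(|y|/(2M))\), and the middle term is \(\tilde s_k(0)=0\). The case \(y=0\) is symmetric.

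For the error, the interpolant satisfies \(\|\tilde s_k-s\|_{L^\infty[0,1]}\le 2^{-2k-2}\). On each cell it is the secant of \(t^2\), so its derivative differs from \(2t\) by at most the mesh size, giving \(\|\tilde s_k'-s'\|_{L^\infty}\le 2^{-k}\). The map \((x,y)\mapsto |x+y|/(2M)\) is Lipschitz with constant \(\mathcal O(1/M)\), and the same holds for \(|x|/(2M)\) and \(|y|/(2M)\). By the chain rule a.e. (the absolute value only introduces a null set of kinks), the \(W^{1,\infty}((-M,M)^2)\) error of \(\widetilde{\times}_{M,\varepsilon}\) is at most \(C M^2(2^{-2k}+M^{-1}2^{-k})\le C(M)2^{-k}\). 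Choosing \(k=\lceil\log_2(C(M)/\varepsilon)\rceil\) gives error at most \(\varepsilon\) for both \(\ell=0\) and \(\ell=1\), with depth and number of nonzero parameters at most \(C(M)\log(\varepsilon^{-1})\).

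For the weight bound, the scalings \(1/(2M)\), \(2M^2\) and the coefficients \(4^{-j}\) are all bounded by \(C(M)\). Even if the \(4^{-j}\) are written out directly rather than chained, the largest inverse is \(4^{k}\sim\varepsilon^{-2}\), so chaining is needed to stay within the required \(C(M)\varepsilon^{-1}\); with chaining the weights are in fact \(\mathcal O(C(M))\), comfortably below the bound.

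The main obstacle is the derivative estimate. The \(L^\infty\) error of \(\tilde s_k\) decays like \(4^{-k}\), but its derivative error decays only like \(2^{-k}\), so the choice of \(k\) must be driven by the derivative error. We also need to check that composing with \(|\cdot|\) does not spoil the \(W^{1,\infty}\) bound. This holds because \(\tilde s_k\) and \(s\) are both Lipschitz and agree in value at \(0\), so the error of the composition is controlled almost everywhere by the one-dimensional derivative error times the Lipschitz constant \(\mathcal O(1/M)\) of the inner maps.
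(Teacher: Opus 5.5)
Your proposal is correct and follows essentially the same route as the paper: the sawtooth approximant of the square is rescaled and inserted into the polarization identity $xy=\frac12[(x+y)^2-x^2-y^2]$, the zero property comes from $\tilde s_k(0)=0$, and $k\asymp\log(\varepsilon^{-1})$ is dictated by the $2^{-k}$ derivative error. Your explicit use of $|t|=\sigma(t)+\sigma(-t)$ is actually needed, since the paper's $\widetilde\psi_L(\tilde x)=\tilde x-\sum_j T_j(\tilde x)/2^{2j}$ tends to $2\tilde x+\tilde x^2$ rather than $\tilde x^2$ for $\tilde x<0$; by contrast, your remark that chaining is needed for the weight bound is mistaken, because the coefficients $4^{-j}\le 1$ can be stored directly when the lemma only bounds weight magnitudes from above.
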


\begin{proof}
For \(\ell=0\), this is the standard ReLU multiplication construction; see, for
example, Proposition~3 of \cite{yarotsky2017error}. The proof for \(\ell=1\) is
similar to the construction in \cite[Proposition~4]{yang2023nearly}. Since the
present argument requires explicit bounds on the network size and parameter
magnitudes, we include the details.

We first construct a ReLU approximation to the square function. For
\(\widetilde x\in[-1,1]\), define the tent map
\[
T_1(\widetilde x)
=
\begin{cases}
2|\widetilde x|, & |\widetilde x|\le \frac12,\\
2(1-|\widetilde x|), & \frac12<|\widetilde x|\le 1,
\end{cases}
\]
and set \(T_j=T_1\circ T_{j-1}\) for \(j\ge2\). For \(L\in\mathbb N_+\), define
\[
\widetilde\psi_L(\widetilde x)
=
\widetilde x
-
\sum_{j=1}^{L}\frac{T_j(\widetilde x)}{2^{2j}} .
\]
The function \(\widetilde\psi_L\) can be represented by a ReLU network with
width bounded by a universal constant and depth \(CL\). Moreover, the standard
sawtooth approximation estimate gives
\[
\left\|
\widetilde\psi_L-\widetilde x^2
\right\|_{W^{1,\infty}((-1,1))}
\le C2^{-L},
\qquad
\|\widetilde\psi_L\|_{W^{1,\infty}((-1,1))}\le C,
\]
and \(\widetilde\psi_L(0)=0\).

We rescale this approximation to \((-2M,2M)\) by setting
\(\psi_L(t):=(2M)^2\widetilde\psi_L(t/(2M))\). Then
\[
\left\|
\psi_L-t^2
\right\|_{W^{1,\infty}((-2M,2M))}
\le C(M)2^{-L},
\qquad
\psi_L(0)=0.
\]
Now define
\begin{equation}
\widetilde{\times}_{M,\varepsilon}(x,y)
=
\frac12
\left[
\psi_L(x+y)-\psi_L(x)-\psi_L(y)
\right].
\label{eq:mult-def}
\end{equation}
Since \(xy=\frac12((x+y)^2-x^2-y^2)\), we have
\[
\left\|
\widetilde{\times}_{M,\varepsilon}(x,y)-xy
\right\|_{W^{1,\infty}((-M,M)^2)}
\le C(M)2^{-L}.
\]
Choosing \(L\asymp\log(\varepsilon^{-1})\) gives the desired
\(W^{1,\infty}\)-estimate, and the \(L^\infty\)-estimate follows immediately.

The zero-preserving identities follow directly from the definition
\eqref{eq:mult-def} and \(\psi_L(0)=0\):
\[
\widetilde{\times}_{M,\varepsilon}(0,y)
=
\frac12[\psi_L(y)-\psi_L(0)-\psi_L(y)]
=0,
\]
and the identity at \(x=0\) is proved in the same way.
The stated bounds on width, depth, number of parameters, and weights follow
from the construction of \(\psi_L\) and the rescaling.
This completes the proof.
\end{proof}
\begin{proposition}[Local-basis ReLU approximation of Sobolev functions in
\(W^{\ell,\infty}\)]
\label{prop:relu-sobolev-local-basis-Well}
Let \(\Omega=[-1,1]^d\), \(n\in\mathbb N_+\), \(R>0\), and
\(\ell\in\{0,1\}\) with \(n>\ell\). Then there exists a constant
\(C=C(d,n,\ell)>0\) such that, for every \(P\ge2\), there exist ReLU networks
\(f_j:\mathbb R^d\to\mathbb R\), \(j=1,\ldots,P\), independent of \(f\), such
that for every \(f\in W^{n,\infty}(\Omega)\) with
\(\|f\|_{W^{n,\infty}(\Omega)}\le R\), there exist coefficients
\(a_j\in\mathbb R\), \(j=1,\ldots,P\), satisfying
\[
\left\|
f-\sum_{j=1}^{P}a_j f_j
\right\|_{W^{\ell,\infty}(\Omega)}
\le
C R P^{-(n-\ell)/d},
\qquad
|a_j|\le C R .
\]
Moreover, the basis networks can be chosen so that
\[
\sup_{\vx\in\Omega}
\sum_{j=1}^{P}|f_j(\vx)|
\le C,
\qquad
\sup_{\vx\in\Omega}
\sum_{j=1}^{P}\sum_{r=1}^{d}
|\partial_{x_r}f_j(\vx)|
\le
C P^{1/d}.
\]
Each \(f_j\) belongs to
\(\mathcal F_{\mathrm{NN}}
(d,1,L_{\rm loc},p_{\rm loc},K_{\rm loc},\kappa_{\rm loc},M_{\rm loc})\),
where
\[
L_{\rm loc}\le C\log P,\qquad
p_{\rm loc}\le C,\qquad
K_{\rm loc}\le C\log P,
\]
and
\(\kappa_{\rm loc}\le C P^{(n+d-\ell)/d}\), \(M_{\rm loc}\le C\).
\end{proposition}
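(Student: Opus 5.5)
We will follow the classical Yarotsky-type partition-of-unity construction, combined with local Taylor polynomials (averaged Taylor polynomials in the sense of Bramble--Hilbert, as in \cite{brenner2008mathematical,guhring2020error}). Let \(K\in\mathbb N\) be such that \(P\asymp (K+1)^d\binom{n-1+d}{d}\); it suffices to treat \(P\) of this form, since the remaining basis functions can be set to zero. We first build a uniform grid \(\{\vx_{\vm}\}\) on \(\Omega\) with mesh \(1/K\). We then take the tensor-product hat functions \(\phi_{\vm}(\vx)=\prod_{r=1}^d \psi(3K(x_r-\vx_{\vm,r}))\), where \(\psi\) is the one-dimensional trapezoid. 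These functions are nonnegative, form a partition of unity, have support in cubes of side \(\asymp 1/K\), and at most \(2^d\) of them overlap at any point. Their first derivatives are bounded by \(CK\).

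For each \(\vm\) we let \(p_{\vm}(\vx)=\sum_{|\valpha|<n} c_{\vm,\valpha}(\vx-\vx_{\vm})^{\valpha}\) be the averaged Taylor polynomial of \(f\) on the patch containing \(\operatorname{supp}\phi_{\vm}\). The Bramble--Hilbert lemma gives \(\|f-p_{\vm}\|_{W^{j,\infty}(\text{patch})}\le CRK^{-(n-j)}\) for \(j=0,1\). It also gives the coefficient bound \(|c_{\vm,\valpha}|\le CR\), and, after rescaling, \(|c_{\vm,\valpha}|K^{-|\valpha|}\le CR\). The basis functions are then taken to be
\[
f_j\;\approx\;\phi_{\vm}(\vx)\,\bigl(K(\vx-\vx_{\vm})\bigr)^{\valpha},
\qquad j\leftrightarrow(\vm,\valpha),
\]
with coefficients \(a_j=c_{\vm,\valpha}K^{-|\valpha|}\), so that \(|a_j|\le CR\). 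In exact arithmetic we have \(\sum_{\vm}\phi_{\vm}p_{\vm}-f=\sum_{\vm}\phi_{\vm}(p_{\vm}-f)\). The product rule, the bounded overlap, and \(|\nabla\phi_{\vm}|\le CK\) then give a \(W^{\ell,\infty}\) error of \(CRK^{-(n-\ell)}=CRP^{-(n-\ell)/d}\). The key point is that the \(K\cdot K^{-n}\) term coming from the derivative of the partition function is exactly of order \(K^{-(n-1)}\).

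Each \(f_j\) is realized as a ReLU network. The functions \(\phi_{\vm}\) are products of \(d\) piecewise linear factors, and the monomial \((K(\vx-\vx_{\vm}))^{\valpha}\) restricted to the support has bounded factors. We compose the \(d+|\valpha|-1\le C\) products using the approximate multiplication \(\widetilde{\times}_{M,\delta}\) of Lemma~\ref{lem.prodY-W1} with \(M=C\). Since \(\widetilde\times\) preserves zero, each approximate \(f_j\) still vanishes outside \(\operatorname{supp}\phi_{\vm}\). This keeps both the locality and the overlap bounds, which give \(\sum_j|f_j|\le C\) and \(\sum_j|\partial f_j|\le CK=CP^{1/d}\).

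The main obstacle is the \(W^{1,\infty}\) error of the products. The inner factors are functions of \(\vx\) with derivatives of order \(K\), so by the chain rule a product error \(\delta\) in \(W^{1,\infty}((-M,M)^2)\) becomes an error \(\lesssim \delta K\) in the \(\vx\)-derivative. We will therefore choose \(\delta\asymp K^{-(n-\ell)-1}=P^{-(n-\ell+1)/d}\). Summing over the \(O(1)\) overlapping basis functions, each with \(|a_j|\le CR\), then keeps the extra error at \(CRP^{-(n-\ell)/d}\). Lemma~\ref{lem.prodY-W1} gives depth and nonzero-parameter count \(C\log\delta^{-1}\le C\log P\) and width \(O(1)\), and the output is bounded by \(M_{\rm loc}\le C\). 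For the weights, the scalings \(3K\) and \(K\) contribute factors of \(P^{1/d}\), and the multiplication weights \(\lesssim\delta^{-1}\) contribute \(P^{(n-\ell+1)/d}\). Some care is needed to avoid multiplying these bounds together, for instance by placing the \(K\)-scaling in the first layer and applying the product networks only to bounded quantities. With that arrangement the weight bound is \(\kappa_{\rm loc}\le CP^{(n+d-\ell)/d}\), as stated. Finally, the networks \(f_j\) depend only on the grid and \(\valpha\), not on \(f\); only the coefficients \(a_j\) depend on \(f\).
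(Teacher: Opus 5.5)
Your proposal is correct and follows the paper's own argument: the trapezoid partition of unity, averaged Taylor polynomials with Bramble--Hilbert, zero-preserving approximate products from Lemma~\ref{lem.prodY-W1}, and locality to get the overlap bounds. The differences are bookkeeping only. You use normalized monomials $(K(\vx-\vx_{\bm m}))^{\bm\alpha}$ and $\delta\asymp K^{-(n-\ell)-1}$ with a local error sum, while the paper uses $(\vx-\vx_{\bm m})^{\bm\alpha}$ and $\delta=N^{-(n-\ell)-d}$ summed over all $N^d$ terms. Your explicit tracking of the chain-rule factor $K$ in the $W^{1,\infty}$ product error is in fact more careful than the paper, which states a per-product error of $C\delta$.
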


\begin{proof}
We prove the result for \(\ell\in\{0,1\}\). The construction follows the local
average polynomial approximation argument in \cite{guhring2020error}.

Let \(N\in\mathbb N\) be chosen later and set
\(\mathcal I_N=\{0,1,\ldots,N\}^d\). For
\(\bm m=(m_1,\ldots,m_d)\in\mathcal I_N\), define
\(\vx_{\bm m}:=-{\bf 1}+2\bm m/N\in[-1,1]^d\). Let
\[
\psi(t)
=
\begin{cases}
1, & |t|<1,\\
2-|t|, & 1\le |t|\le 2,\\
0, & |t|>2,
\end{cases}
\]
and define the local cutoff function
\[
\phi_{\bm m}(\vx)
=
\prod_{r=1}^{d}
\psi\left(\frac{3N}{2}(x_r-x_{\bm m,r})\right).
\]
Then \(\phi_{\bm m}\) is supported in a cube of side length \(O(N^{-1})\), and
the family \(\{\phi_{\bm m}\}_{\bm m\in\mathcal I_N}\) has uniformly bounded
overlap. Moreover,
\[
\|\phi_{\bm m}\|_{L^\infty(\Omega)}\le C,
\qquad
\|\nabla\phi_{\bm m}\|_{L^\infty(\Omega)}\le C N .
\]

Using local average polynomials \cite{brenner2008mathematical} of order
\(n-1\), define
\[
f_N(\vx)
=
\sum_{\bm m\in\mathcal I_N}
\sum_{|\bm\alpha|<n}
a_{\bm m,\bm\alpha}
\phi_{\bm m}(\vx)
(\vx-\vx_{\bm m})^{\bm\alpha},
\]
where \(a_{\bm m,\bm\alpha}\) are the corresponding average polynomial
coefficients of \(f\) at \(\vx_{\bm m}\). Since
\(\|f\|_{W^{n,\infty}(\Omega)}\le R\), we have
\(|a_{\bm m,\bm\alpha}|\le C(d,n)R\). The standard local average polynomial
estimate, namely the Bramble--Hilbert lemma
\cite[Lemma~4.3.8]{brenner2008mathematical}, gives
\[
\|f-f_N\|_{W^{\ell,\infty}(\Omega)}
\le
C R N^{-(n-\ell)},
\qquad
\ell=0,1.
\]

For each pair \((\bm m,\bm\alpha)\), set
\(q_{\bm m,\bm\alpha}(\vx):=\phi_{\bm m}(\vx)(\vx-\vx_{\bm m})^{\bm\alpha}\).
The function \(q_{\bm m,\bm\alpha}\) is a product of at most \(d+n\) bounded
piecewise-linear or affine factors. The cutoff \(\psi\) is exactly
representable by a fixed-size ReLU network, and the affine factors are exactly
representable by affine layers. Applying Lemma~\ref{lem.prodY-W1} repeatedly,
for any \(0<\delta<1\), there exists a ReLU network
\(\widetilde q_{\bm m,\bm\alpha}\), independent of \(f\), such that
\[
\|q_{\bm m,\bm\alpha}
-
\widetilde q_{\bm m,\bm\alpha}\|_{W^{\ell,\infty}(\Omega)}
\le C\delta .
\]
Moreover,
\(\widetilde q_{\bm m,\bm\alpha}
\in
\mathcal F_{\mathrm{NN}}
(d,1,L_{\rm loc},p_{\rm loc},K_{\rm loc},
\kappa_{\rm loc},M_{\rm loc})\), where
\[
L_{\rm loc}\le C\log(\delta^{-1}),\qquad
p_{\rm loc}\le C,\qquad
K_{\rm loc}\le C\log(\delta^{-1}),
\]
and \(\kappa_{\rm loc}\le C\max\{N,\delta^{-1}\}\), \(M_{\rm loc}\le C\). Here
the factor \(N\) comes from the affine rescaling in the local cutoffs.

We now enumerate all pairs
\((\bm m,\bm\alpha)\in
\mathcal I_N\times\{\bm\alpha:|\bm\alpha|<n\}\) by
\(j=1,\ldots,P_N\). Since the number of multi-indices \(\bm\alpha\) with
\(|\bm\alpha|<n\) depends only on \(d\) and \(n\), we have \(P_N\le C N^d\).
Write \(a_j:=a_{\bm m,\bm\alpha}\) and
\(f_j:=\widetilde q_{\bm m,\bm\alpha}\). The networks \(f_j\) are constructed
only from the grid points, the cutoff function, and the monomial basis. Hence
they are independent of the target function \(f\), and the dependence on \(f\)
enters only through the coefficients \(a_j\).

Using \(|a_j|\le CR\) and \(P_N\le C N^d\), we obtain
\[
\left\|
f_N-\sum_{j=1}^{P_N}a_j f_j
\right\|_{W^{\ell,\infty}(\Omega)}
\le
C R N^d\delta .
\]
Choose \(\delta=N^{-(n-\ell)-d}\). Then
\[
\left\|
f-\sum_{j=1}^{P_N}a_j f_j
\right\|_{W^{\ell,\infty}(\Omega)}
\le
C R N^{-(n-\ell)}.
\]
With this choice of \(\delta\), each \(f_j\) satisfies
\(L_{\rm loc}\le C\log N\), \(p_{\rm loc}\le C\),
\(K_{\rm loc}\le C\log N\), \(\kappa_{\rm loc}\le C N^{n+d-\ell}\), and
\(M_{\rm loc}\le C\).

It remains to record the local-overlap bounds for the constructed basis. Since
\(\phi_{\bm m}\) is supported in a cube of side length \(O(N^{-1})\), the
support of \(q_{\bm m,\bm\alpha}\) is contained in the same local cube. The
multiplication networks in Lemma~\ref{lem.prodY-W1} may be chosen to preserve
the zero property when one factor is zero. Therefore,
\(\widetilde q_{\bm m,\bm\alpha}\), and hence \(f_j\), has the same local
support property, up to a fixed enlargement independent of \(N\).

For every \(\vx\in\Omega\), only \(C(d)\) cutoffs \(\phi_{\bm m}\) are nonzero.
Since the number of multi-indices \(\bm\alpha\) with \(|\bm\alpha|<n\) depends
only on \(d\) and \(n\), at each \(\vx\) only \(C(d,n)\) basis functions \(f_j\)
can be nonzero. Furthermore, \(\|f_j\|_{L^\infty(\Omega)}\le C\). Hence
\[
\sup_{\vx\in\Omega}
\sum_{j=1}^{P_N}|f_j(\vx)|
\le C .
\]

When \(\ell=1\), we also estimate the derivatives. Since
\(\|\nabla\phi_{\bm m}\|_{L^\infty(\Omega)}\le C N\), and the monomial factors
\((\vx-\vx_{\bm m})^{\bm\alpha}\) and their first derivatives are uniformly
bounded on \(\Omega\), with constants depending only on \(d\) and \(n\), we have
\[
\|\nabla q_{\bm m,\bm\alpha}\|_{L^\infty(\Omega)}
\le C N .
\]
Since
\(\|q_{\bm m,\bm\alpha}-\widetilde q_{\bm m,\bm\alpha}\|_{W^{1,\infty}(\Omega)}
\le C\delta\) and \(\delta<1\), after increasing the constant we also have
\(\|\nabla \widetilde q_{\bm m,\bm\alpha}\|_{L^\infty(\Omega)}\le C N\).
Using again the uniformly bounded overlap of the local supports, we obtain
\[
\sup_{\vx\in\Omega}
\sum_{j=1}^{P_N}
\sum_{r=1}^{d}
|\partial_{x_r}f_j(\vx)|
\le C N .
\]

Finally, choose \(N=\lfloor cP^{1/d}\rfloor\), where \(c=c(d,n)>0\) is
sufficiently small so that \(P_N\le P\). If \(P_N<P\), add \(P-P_N\) zero
networks and zero coefficients. This does not affect the approximation error or
the local-overlap bounds. Since \(N^{-(n-\ell)}\le C P^{-(n-\ell)/d}\),
\(\log N\le C\log P\), and \(N^{n+d-\ell}\le C P^{(n+d-\ell)/d}\), the claimed
approximation and network-parameter bounds follow. Moreover,
\[
\sup_{\vx\in\Omega}
\sum_{j=1}^{P}|f_j(\vx)|
\le C,
\]
and, when \(\ell=1\),
\[
\sup_{\vx\in\Omega}
\sum_{j=1}^{P}
\sum_{r=1}^{d}
|\partial_{x_r}f_j(\vx)|
\le
C N
\le
C P^{1/d}.
\]
This completes the proof.
\end{proof}

\subsection{Proofs in Step 1}
\label{proof:step1}
\begin{proof}[Proof of Proposition \ref{prop:step1-approximation}]

We use
Assumption~\ref{asspde} with \(k_i=0\) and \(\alpha_i=0\) together with the pseudo-spectral projection estimate
in Proposition~\ref{prop:pseudo-spectral}. Let
\[
f_{\bar m}:=\mathcal{P}_{\bar m}\mathcal{D}_{\bar m}f,
\qquad
g_m:=\mathcal{P}_{m}\mathcal{D}_{m}g,
\]
where \(m,\bar m\in\mathbb{N}_+\). Here, \(f_{\bar m}\) depends only on the
\(\bar m_*:=(1+\bar m)^d\) sampled values of \(f\), while \(g_m\) depends only on
the \(m_*:=(1+m)^d\) sampled values of \(g\).
By the separate Lipschitz continuity of \(\mathcal G\) with respect to each
input, we have
\begin{align}
&\|\mathcal G(f,g)-\mathcal G(f_{\bar m},g_m)\|_{W^{\ell,\infty}(\Omega)}
\le
\|\mathcal G(f,g)-\mathcal G(f,g_m)\|_{W^{\ell,\infty}(\Omega)}
+
\|\mathcal G(f,g_m)-\mathcal G(f_{\bar m},g_m)\|_{W^{\ell,\infty}(\Omega)}
\notag\\
\le&
L_2\|g-g_m\|_{L^\infty(\Omega)}
+
L_1\|f-f_{\bar m}\|_{L^\infty(\Omega)} .\notag
\end{align}
Applying Proposition~\ref{prop:pseudo-spectral} with \(k=0\), together with
Assumptions~\ref{asspde} and~\ref{assump:X-bound}, we obtain$\footnote{For simplicity of notation, the constant \(C\) may vary from line to line.}$
\begin{align}
&\|\mathcal G(f,g)-\mathcal G(f_{\bar m},g_m)\|_{W^{\ell,\infty}(\Omega)}
\notag\\
\le&
C U
\left[
L_1\,
\|f-f_{\bar m}\|_{L^\infty(\Omega)}
\left(1+\|g_m\|_{W^{k_2,\infty}(\Omega)}\right)^{\alpha_1}
+
L_2\,
\|g-g_m\|_{L^\infty(\Omega)}
\left(1+\|f\|_{W^{k_1,\infty}(\Omega)}\right)^{\alpha_2}
\right]
\notag\\
\le&
CU
\left[
L_1\,\bar m^{-n_1}(\log \bar m)^d
\left(1+\|g_m\|_{W^{k_2,\infty}(\Omega)}\right)^{\alpha_1}
+
L_2\,m^{-n_2}(\log m)^d
\right].\notag
\end{align}
Here \(U>0\) denotes a uniform bound for the Sobolev norms of the input classes,
and \(C>0\) depends only on \(d,n_1,n_2,k_1,k_2\).

It remains to control the Sobolev norm of \(g_m\). By Lemma~\ref{lem:lebesgue},
\begin{align}
\|g_m\|_{W^{k_2,\infty}(\Omega)}
=
\|\mathcal P_m\mathcal D_m g\|_{W^{k_2,\infty}(\Omega)}
\le
C S m^{2k_2}(\log m)^d .
\end{align}
Therefore,
\begin{align}
\|\mathcal G(f,g)-\mathcal G(f_{\bar m},g_m)\|_{W^{\ell,\infty}(\Omega)}
\le
C U
\left[
L_1\,\bar m^{-n_1}(\log \bar m)^d
\left(1+S m^{2k_2}(\log m)^d\right)^{\alpha_1}
+
L_2\,m^{-n_2}(\log m)^d
\right].
\end{align}
\end{proof}
\subsection{Proofs in Step 2}
\label{proof:step2}
Next, we estimate the error arising from the approximation of the
\(g\)-dependent coefficient. Fix \(f_{\bar m}\) and \(\vx\in\Omega\). Define
\[
F_m^{f_{\bar m},\vx}(\vz)
:=
\fG(f_{\bar m},\fP_m\vz)(\vx),
\qquad
\vz\in[-S,S]^{m_*}.
\]
Then \(F_m^{f_{\bar m},\vx}\) is a function of the finite-dimensional variable
\(\vz=\fD_m g\in\mathbb R^{m_*}\). The following lemma shows that this
function is Lipschitz uniformly in \(f_{\bar m}\) and \(\vx\).

\begin{lemma}[Sobolev regularity of the \(g\)-dependent coefficient map]
\label{lem:Lip-coefficient-g}
Assume that Assumption~\ref{asspde} holds with the
\(W^{\ell,\infty}(\Omega)\)-norm on the left-hand side, where \(\ell\in\mathbb N\). For fixed
\(f_{\bar m}\) and \(\vx\in\Omega\), define
\[
F_m^{f_{\bar m},\vx}(\vz)
:=
\mathcal G(f_{\bar m},\mathcal P_m\vz)(\vx),
\qquad
\vz\in[-S,S]^{m_*}.
\]
Then, for every multi-index \(\gamma\) with \(|\gamma|\le \ell\), the function
\[
F_{m,\gamma}^{f_{\bar m},\vx}(\vz)
:=
\partial_{\vx}^\gamma
\mathcal G(f_{\bar m},\mathcal P_m\vz)(\vx)
\]
belongs to \(W^{1,\infty}([-S,S]^{m_*})\). More precisely,
\[
\operatorname{Lip}\!\left(F_{m,\gamma}^{f_{\bar m},\vx}\right)
\le
C L_1
m^{2k_2}(\log m)^d
\left(
1+
S\bar m^{2k_1}(\log \bar m)^d
\right)^{\alpha_2},
\]
uniformly for all \(|\gamma|\le \ell\), and
\begin{align}
\|F_{m,\gamma}^{f_{\bar m},\vx}\|_{L^\infty([-S,S]^{m_*})}
\le
C\Bigl[
1
&+
L_2 S\bar m^{2k_1}(\log \bar m)^d
\left(
1+
S m^{2k_2}(\log m)^d
\right)^{\alpha_1}+
L_1 S m^{2k_2}(\log m)^d
\Bigr],
\end{align}
where \(C>0\) is independent of \(m,\bar m,f_{\bar m},\vx\), and \(\gamma\).
\end{lemma}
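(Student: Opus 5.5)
The plan is to derive both bounds directly from Assumption~\ref{asspde}, combined with the stability estimate for the Chebyshev reconstruction (Lemma~\ref{lem:lebesgue} and \eqref{eq:PN-Lipschitz} in Proposition~\ref{prop:pseudo-spectral}). The first observation is that pointwise evaluation of a derivative is dominated by the Sobolev norm. For any $u\in W^{\ell,\infty}(\Omega)$ with a continuous representative and any $|\gamma|\le\ell$, we have $|\partial^\gamma u(\vx)|\le\|u\|_{W^{\ell,\infty}(\Omega)}$. Hence every estimate on $\mathcal G$ in the $W^{\ell,\infty}$ norm transfers to $F_{m,\gamma}^{f_{\bar m},\vx}$ uniformly in $\vx$ and $\gamma$.

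\textbf{Lipschitz bound.} Take $\vz,\vz'\in[-S,S]^{m_*}$. Apply Assumption~\ref{asspde} with $i_*=2$, $g_2=\mathcal P_m\vz$, $h_2=\mathcal P_m\vz'$, and first argument $f_{\bar m}$. This gives
\[
|F_{m,\gamma}^{f_{\bar m},\vx}(\vz)-F_{m,\gamma}^{f_{\bar m},\vx}(\vz')|\le L\,\|\mathcal P_m(\vz-\vz')\|_{W^{k_2,\infty}}\bigl(1+\|f_{\bar m}\|_{W^{k_1,\infty}}\bigr)^{\alpha_2}.
\]
Since $\mathcal P_m$ is linear, \eqref{eq:PN-Lipschitz} bounds the first norm by $Cm^{2k_2}(\log m)^d\|\vz-\vz'\|_\infty$. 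Lemma~\ref{lem:lebesgue} together with Assumption~\ref{assump:X-bound} gives $\|f_{\bar m}\|_{W^{k_1,\infty}}\le CS\bar m^{2k_1}(\log\bar m)^d$. These combine to the stated Lipschitz constant. (The labels $L_1$ and $L_2$ are interchanged relative to the assumption, but that only affects constants.) A Lipschitz function on a cube lies in $W^{1,\infty}$ by Rademacher's theorem, which gives the membership claim. If one wants the Lipschitz constant with respect to the Euclidean norm, one would either accept the extra $\sqrt{m_*}$ or keep the $\ell^\infty$ metric; I would state the result with $\ell^\infty$.

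\textbf{Sup bound.} I would compare against the fixed reference value $\mathcal G(0,0)$ and telescope:
\[
\mathcal G(f_{\bar m},\mathcal P_m\vz)-\mathcal G(0,0)=[\mathcal G(f_{\bar m},\mathcal P_m\vz)-\mathcal G(0,\mathcal P_m\vz)]+[\mathcal G(0,\mathcal P_m\vz)-\mathcal G(0,0)].
\]
For the first bracket, use Assumption~\ref{asspde} in the first slot with background $\mathcal P_m\vz$. This yields $L\|f_{\bar m}\|_{W^{k_1,\infty}}(1+\|\mathcal P_m\vz\|_{W^{k_2,\infty}})^{\alpha_1}$, which is at most $CLS\bar m^{2k_1}(\log\bar m)^d(1+Sm^{2k_2}(\log m)^d)^{\alpha_1}$, using \eqref{eq:PN-Lipschitz} with $\|\vz\|_\infty\le S$. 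For the second bracket, the background is $0$, so the weight factor equals $1$, and the bound is $LSm^{2k_2}(\log m)^d$. Finally, $\|\mathcal G(0,0)\|_{W^{\ell,\infty}}\le\|\mathcal G(0,0)\|_{W^{n_3,\infty}}\le C$ by Assumption~\ref{assump:G-output-regularity}, since $n_3\ge\ell$. This accounts for the constant $1$ in the bound.

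\textbf{Main obstacle.} The delicate point is justifying the evaluation of $\partial^\gamma\mathcal G(\cdot)(\vx)$ at a single point. For $\ell=1$ this requires the derivative to be continuous, or at least defined everywhere, not merely an element of $L^\infty$. I would handle this with Assumption~\ref{assump:G-output-regularity}: since $n_3>\ell$, the output lies in $W^{n_3,\infty}\subset C^{\ell}$ after choosing a continuous representative. Then the pointwise bound $|\partial^\gamma u(\vx)|\le\|u\|_{W^{\ell,\infty}}$ holds for all $\vx$, because the essential supremum of a continuous function is its true supremum.
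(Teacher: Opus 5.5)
Your proposal is correct and follows essentially the same route as the paper. The Lipschitz bound comes from Assumption~\ref{asspde} applied in the second slot, combined with the Chebyshev stability estimates \eqref{eq:PN-Lipschitz} and Lemma~\ref{lem:lebesgue}, and the sup bound comes from the same telescoping through \(\mathcal G(0,\mathcal P_m\vz)\) and \(\mathcal G(0,0)\). Your extra remarks are points the paper passes over silently and only make the argument more careful: Rademacher's theorem for \(W^{1,\infty}\) membership, continuity of \(\partial^\gamma\) via \(n_3>\ell\) to justify pointwise evaluation, the swapped \(L_1/L_2\) labels, and the choice of the \(\ell^\infty\) metric.
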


\begin{proof}
Fix a multi-index \(\gamma\) with \(|\gamma|\le \ell\). For any
\(\vz_1,\vz_2\in[-S,S]^{m_*}\), by the strengthened version of
Assumption~\ref{asspde} applied to the second input, we have
\begin{align*}
&\left|
F_{m,\gamma}^{f_{\bar m},\vx}(\vz_1)
-
F_{m,\gamma}^{f_{\bar m},\vx}(\vz_2)
\right|
\le
\left\|
\mathcal G(f_{\bar m},\mathcal P_m\vz_1)
-
\mathcal G(f_{\bar m},\mathcal P_m\vz_2)
\right\|_{W^{\ell,\infty}(\Omega)}
\\
\le&
L_1
\|\mathcal P_m(\vz_1-\vz_2)\|_{W^{k_2,\infty}(\Omega)}
\left(
1+\|f_{\bar m}\|_{W^{k_1,\infty}(\Omega)}
\right)^{\alpha_2}.
\end{align*}
By the Sobolev stability of the reconstruction operator,
\[
\|\mathcal P_m(\vz_1-\vz_2)\|_{W^{k_2,\infty}(\Omega)}
\le
C m^{2k_2}(\log m)^d
\|\vz_1-\vz_2\|_{\ell^\infty},
\]
and
\[
\|f_{\bar m}\|_{W^{k_1,\infty}(\Omega)}
\le
C\bar m^{2k_1}(\log \bar m)^d
\|\mathcal D_{\bar m}f\|_{\ell^\infty}
\le
C S\bar m^{2k_1}(\log \bar m)^d.
\]
Therefore,
\[
\operatorname{Lip}\!\left(F_{m,\gamma}^{f_{\bar m},\vx}\right)
\le
C L_1
m^{2k_2}(\log m)^d
\left(
1+
S\bar m^{2k_1}(\log \bar m)^d
\right)^{\alpha_2}.
\]

It remains to prove the \(L^\infty\)-bound. For any
\(\vz\in[-S,S]^{m_*}\), we write
\begin{align*}
\left|
F_{m,\gamma}^{f_{\bar m},\vx}(\vz)
\right|
&\le
\left\|
\mathcal G(f_{\bar m},\mathcal P_m\vz)
-
\mathcal G(0,\mathcal P_m\vz)
\right\|_{W^{\ell,\infty}(\Omega)}
\\
&\quad+
\left\|
\mathcal G(0,\mathcal P_m\vz)
-
\mathcal G(0,0)
\right\|_{W^{\ell,\infty}(\Omega)}
+
\left\|
\mathcal G(0,0)
\right\|_{W^{\ell,\infty}(\Omega)} .
\end{align*}
Applying the strengthened Assumption~\ref{asspde} to the first input gives
\[
\left\|
\mathcal G(f_{\bar m},\mathcal P_m\vz)
-
\mathcal G(0,\mathcal P_m\vz)
\right\|_{W^{\ell,\infty}(\Omega)}
\le
L_2
\|f_{\bar m}\|_{W^{k_1,\infty}(\Omega)}
\left(
1+\|\mathcal P_m\vz\|_{W^{k_2,\infty}(\Omega)}
\right)^{\alpha_1}.
\]
Similarly, applying it to the second input gives
\[
\left\|
\mathcal G(0,\mathcal P_m\vz)-\mathcal G(0,0)
\right\|_{W^{\ell,\infty}(\Omega)}
\le
L_1
\|\mathcal P_m\vz\|_{W^{k_2,\infty}(\Omega)}.
\]
Using
\[
\|f_{\bar m}\|_{W^{k_1,\infty}(\Omega)}
\le
C S\bar m^{2k_1}(\log \bar m)^d
\]
and
\[
\|\mathcal P_m\vz\|_{W^{k_2,\infty}(\Omega)}
\le
C S m^{2k_2}(\log m)^d,
\]
we obtain
\begin{align*}
\|F_{m,\gamma}^{f_{\bar m},\vx}\|_{L^\infty([-S,S]^{m_*})}
\le
C\Bigl[
1
&+
L_2 S\bar m^{2k_1}(\log \bar m)^d
\left(
1+
S m^{2k_2}(\log m)^d
\right)^{\alpha_1}
\\
&+
L_1 S m^{2k_2}(\log m)^d
\Bigr],
\end{align*}
where the term
\(\|\mathcal G(0,0)\|_{W^{\ell,\infty}(\Omega)}\) is absorbed into the
constant. This completes the proof.
\end{proof}

We next approximate this Lipschitz coefficient map by a finite sum of neural
network basis functions.

\begin{lemma}[Neural network approximation of Lipschitz coefficient maps]
\label{lem:NN-approx-coefficient-g}
Let \(F:[-S,S]^{m_*}\to\mathbb R\) satisfy
\[
\|F\|_{L^\infty([-S,S]^{m_*})}\le B,
\qquad
\operatorname{Lip}(F)\le L_F .
\]
Then, for any \(0\le\varepsilon\le1\), if
\[
N=
\left\lceil
C\sqrt{m_*}\,L_F S\,\varepsilon^{-1}
\right\rceil+1,
\]
there exist points \(\vz_1,\ldots,\vz_{N^{m_*}}\in[-S,S]^{m_*}\) and ReLU
networks
\[
q_s\in
\mathcal F_{\mathrm{NN}}(m_*,1,L_q,p_q,K_q,\kappa_q,1),
\qquad s=1,\ldots,N^{m_*},
\]
such that
\[
\left\|
F-\sum_{s=1}^{N^{m_*}}F(\vz_s)q_s
\right\|_{L^\infty([-S,S]^{m_*})}
\le \varepsilon,\qquad \left\|
\sum_{s=1}^{N^{m_*}}q_s
\right\|_{L^\infty([-S,S]^{m_*})}\le 2,\qquad q_s\ge0.
\]
Moreover, one may choose \(p_q=\mathcal O(1)\),
\[
L_q=\mathcal O\!\left(m_*^2\log m_*+m_*^2\log(\varepsilon^{-1})\right),
\qquad
K_q=\mathcal O\!\left(m_*^2\log m_*+m_*^2\log(\varepsilon^{-1})\right),
\]
and
\[
\kappa_q
=
\mathcal O\!\left(
m_*B\,\varepsilon^{-1}
\left(\sqrt{m_*}L_FS\varepsilon^{-1}\right)^{m_*}
\right).
\]
The implicit constants are independent of \(m_*\), \(F\), and \(\varepsilon\).
\end{lemma}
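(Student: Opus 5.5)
We plan to prove Lemma~\ref{lem:NN-approx-coefficient-g} by the classical partition-of-unity construction of \cite{yarotsky2017error}, combined with the approximate multiplication network of Lemma~\ref{lem.prodY-W1} and a final ReLU clipping that enforces nonnegativity. First we rescale $[-S,S]^{m_*}$ to $[0,1]^{m_*}$ and place a uniform grid of spacing $1/(N-1)$. The grid points $\vz_s$, $s=1,\ldots,N^{m_*}$, are indexed by $\boldsymbol\nu\in\{0,\ldots,N-1\}^{m_*}$. To each node we attach the tensor-product hat function
\[
\phi_{\boldsymbol\nu}(\vz)=\prod_{r=1}^{m_*}\psi\bigl((N-1)(z_r/(2S)+1/2)-\nu_r\bigr),\qquad \psi(t)=\max\{0,1-|t|\}.
\]
Each one-dimensional factor $\psi$ is an exact ReLU network with three neurons, and the family satisfies $\sum_{\boldsymbol\nu}\phi_{\boldsymbol\nu}\equiv1$ and $\phi_{\boldsymbol\nu}\ge0$. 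Since $\operatorname{supp}\phi_{\boldsymbol\nu}$ lies in the cube of $\ell^\infty$-radius $2S/(N-1)$ around $\vz_{\boldsymbol\nu}$, we get
\[
\Bigl|F(\vz)-\sum_{\boldsymbol\nu}F(\vz_{\boldsymbol\nu})\phi_{\boldsymbol\nu}(\vz)\Bigr|\le \operatorname{Lip}(F)\sqrt{m_*}\,2S/(N-1).
\]
Here the Lipschitz constant is measured in the Euclidean norm, which accounts for the factor $\sqrt{m_*}$. With $N-1\ge C\sqrt{m_*}L_FS\varepsilon^{-1}$ this term is at most $\varepsilon/2$.

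Second, we realize each product of $m_*$ factors by a chain of $m_*-1$ approximate multiplications $\widetilde\times_{1,\delta}$ from Lemma~\ref{lem.prodY-W1}. The chain is sequential, so we get depth $\mathcal O(m_*\log\delta^{-1})$ and width $\mathcal O(1)$. The inputs $\psi(\cdot)$ are carried along the depth by identity channels $\sigma(t)$, which is allowed because $\psi\ge0$. Computing all $m_*$ hat factors in a serial fashion adds depth $\mathcal O(m_*)$. Since the factors lie in $[0,1]$ and each multiplication has error $\delta$, the accumulated error is $\mathcal O(m_*\delta)$. To keep the intermediate values inside $[-1,1]$ we clip after every multiplication with $\min\{\max\{\cdot,0\},1\}$, which is again a ReLU operation. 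The zero-preserving property of $\widetilde\times$ ensures that $\tilde q_{\boldsymbol\nu}$ vanishes outside the support of $\phi_{\boldsymbol\nu}$. Hence at any $\vz$ at most $2^{m_*}$ of the $\tilde q_{\boldsymbol\nu}$ are nonzero, and
\[
\Bigl|\sum_{\boldsymbol\nu}F(\vz_{\boldsymbol\nu})(\phi_{\boldsymbol\nu}-\tilde q_{\boldsymbol\nu})\Bigr|\le 2^{m_*}Bm_*\delta.
\]
We would choose $\delta=\varepsilon/(2^{m_*+2}Bm_*)$, so that $\log\delta^{-1}=\mathcal O(m_*+\log\varepsilon^{-1}+\log B)$. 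The depth is then $\mathcal O(m_*^2+m_*\log\varepsilon^{-1})$, which fits inside the stated $\mathcal O(m_*^2\log m_*+m_*^2\log\varepsilon^{-1})$. The same support count gives $\sum_{\boldsymbol\nu}\tilde q_{\boldsymbol\nu}\le1+2^{m_*}m_*\delta\le2$, and the final clipping gives $\tilde q_{\boldsymbol\nu}\ge0$ and $\|\tilde q_{\boldsymbol\nu}\|_\infty\le1$. We set $q_s:=\tilde q_{\boldsymbol\nu}$.

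Third, we account for the parameter magnitudes. The hat factors have weights of order $N/S$, and the multiplications have weights of order $\delta^{-1}\lesssim 2^{m_*}Bm_*\varepsilon^{-1}$. The bound stated for $\kappa_q$ is very generous, since it contains $N^{m_*}$. We would remark that the weights are in fact much smaller than this, so the stated bound holds a fortiori; if the paper's accounting instead places a factor $N^{m_*}$ into individual weights, the bound still holds.

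The main obstacle is the bookkeeping in the second step. The support-counting argument only works if the approximate product $\tilde q_{\boldsymbol\nu}$ vanishes exactly wherever some factor vanishes. This requires the zero-preservation of Lemma~\ref{lem.prodY-W1} to survive the clipping and the chaining, which it does because clipping maps $0$ to $0$. Without this exact vanishing, the errors from all $N^{m_*}$ terms would add up, and $\delta$ would have to be exponentially smaller in $N^{m_*}$. That would spoil the $\log$-depth estimate, so the exact vanishing is the step we would verify most carefully.
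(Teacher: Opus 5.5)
Your construction is the one the paper relies on: a tensor-product hat partition of unity on an $N^{m_*}$ grid, with the products realized by approximate multiplication. The paper simply cites \cite[Theorem~5]{liu2024neural} for it. Your approximation estimate, the zero-preservation and support-counting argument, the clipping for $q_s\ge0$, and the bound $\sum_s q_s\le2$ are all sound.

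The step that fails is the architecture claim. You say the sequential chain has width $\mathcal O(1)$ while the factors not yet multiplied are carried along by identity channels (or the raw coordinates, if the hats are computed serially). At the $k$-th multiplication, $m_*-k-1$ such channels are still alive, so the width is of order $m_*$. Each carried channel also costs one nonzero weight per layer over a total depth of order $m_*\log\delta^{-1}$. Hence $K_q$ is of order $m_*^2\log\delta^{-1}=\mathcal O(m_*^3+m_*^2\log\varepsilon^{-1})$, which exceeds the stated $K_q$.

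No cleverer chaining repairs the width. The first hidden layer of a width-$p_q$ network sees $\vz$ only through $p_q$ affine functionals. So for $m_*>p_q$, every $q_s$ is constant along some nonzero $\boldsymbol v$ in the kernel of the first weight matrix. But $\phi_{\boldsymbol\nu}$ equals $1$ at an interior node $\vz_{\boldsymbol\nu}$ (such a node exists once $N\ge3$) and equals $0$ at $\vz_{\boldsymbol\nu}+t\boldsymbol v$ when $\|t\boldsymbol v\|_\infty=2S/(N-1)$. Any such $q_s$ therefore has sup-error at least $1/2$ against $\phi_{\boldsymbol\nu}$, far above the accuracy $m_*\delta$ you need.

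The claim $p_q=\mathcal O(1)$ is thus not delivered by this construction at all. The paper asserts it only through the citation, so you are not missing an idea the paper has. What the construction honestly gives is $p_q=\mathcal O(m_*)$. This is harmless downstream, since it adds only polynomial factors in $m_*$ to the pseudo-dimension bound.

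To stay within the stated $L_q$ and $K_q$ at width $\mathcal O(m_*)$, compute all one-dimensional hats in parallel in the first layers. Then multiply them in a balanced binary tree, padding with factors equal to $1$. This gives:
\begin{itemize}
\item depth $\mathcal O(\log m_*\,\log\delta^{-1})$ and $K_q=\mathcal O(m_*\log\delta^{-1})$;
\item an error that telescopes to $\mathcal O(m_*\delta)$ exactly as in your chain;
\item exact zeros that still propagate to the root, because $\widetilde{\times}$ and the clipping both map $0$ to $0$.
\end{itemize}

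Finally, the obstacle you single out is not one for the lemma as stated. The paper does no support counting: it takes $\delta=\varepsilon/(2m_*N^{m_*}B)$ and lets the errors of all $N^{m_*}$ terms add up. This makes $\delta$ smaller only by the factor $N^{m_*}$, so $\log\delta^{-1}=\mathcal O(m_*\log N+\log(m_*B\varepsilon^{-1}))$. That is exactly where the terms $m_*^2\log m_*+m_*^2\log\varepsilon^{-1}$ in $L_q,K_q$ come from. It is also the source of the factor $(\sqrt{m_*}L_FS\varepsilon^{-1})^{m_*}$ in $\kappa_q$, via weights of order $\delta^{-1}+N$. Your exact-vanishing argument is correct and buys a sharper $\delta\asymp\varepsilon/(2^{m_*}m_*B)$, with weights of order $2^{m_*}$ rather than $N^{m_*}$ and a smaller depth. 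It is a refinement, not a necessity.
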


\begin{proof}
The result follows from the Lipschitz-function approximation construction in
\cite[Theorem~5]{liu2024neural}. We recall the parameter dependence for
completeness. Partition \([-S,S]^{m_*}\) into \(N^{m_*}\) cubes with side length
of order \(S/N\), and choose one point \(\vz_s\) in each cube. Let \(q_s\) be a
ReLU realization of the corresponding localized hat basis function. Since
\(\operatorname{Lip}(F)\le L_F\), choosing
\(N\ge C\sqrt{m_*}L_FS\varepsilon^{-1}\) ensures that the oscillation of \(F\)
on each cube is at most \(\varepsilon\). Therefore,
\[
\left\|
F-\sum_{s=1}^{N^{m_*}}F(\vz_s)q_s
\right\|_{L^\infty([-S,S]^{m_*})}
\le \varepsilon .
\]Taking \(F\equiv 1\), we have \(F(\vz_s)=1\) for all \(s\). Hence, since
\(\varepsilon\le 1\),
\[
\left\|
\sum_{s=1}^{N^{m_*}} q_s
\right\|_{L^\infty([-S,S]^{m_*})}
=
\left\|
\sum_{s=1}^{N^{m_*}} F(\vz_s) q_s
\right\|_{L^\infty([-S,S]^{m_*})}
\le 1+\varepsilon \le 2 .
\]

It remains to record the network parameters. By the construction in
\cite[Theorem~5]{liu2024neural}, each \(q_s\) can be represented by a ReLU
network with \(M=1\), \(p_q=\mathcal O(1)\), and
\[
L_q=\mathcal O(m_*\log(\delta^{-1})),
\qquad
K_q=\mathcal O(m_*\log(\delta^{-1})),
\]
where \(\delta=\varepsilon/(2m_*N^{m_*}B)\). Hence
\[
\delta^{-1}
=
\mathcal O(m_*N^{m_*}B\varepsilon^{-1})
=
\mathcal O\!\left(
m_*B
\left(\sqrt{m_*}L_FS\varepsilon^{-1}\right)^{m_*}
\varepsilon^{-1}
\right).
\]
The weights and biases are bounded by \(\mathcal O(\delta^{-1}+N)\), which gives
the stated bound for \(\kappa_q\). Substituting this estimate into
\(L_q\) and \(K_q\) gives the displayed bounds.
\end{proof}

\begin{proof}[Proof of Proposition~\ref{prop:step2-g-coefficient}]
By Lemma~\ref{lem:Lip-coefficient-g}, for every multi-index \(\gamma\) with
\(|\gamma|\le \ell\), the function
\(F_{m,\gamma}^{f_{\bar m},\vx}(\vz):=
\partial_{\vx}^\gamma
\mathcal G(f_{\bar m},\mathcal P_m\vz)(\vx)\) satisfies
\(\operatorname{Lip}(F_{m,\gamma}^{f_{\bar m},\vx})
\le CA_{m,\bar m}^{(g)}\) and
\(\|F_{m,\gamma}^{f_{\bar m},\vx}\|_{L^\infty([-S,S]^{m_*})}
\le CB_{m,\bar m}^{(g)}\), uniformly in \(\vx\in\Omega\) and
\(|\gamma|\le \ell\). Here $C$ is independent of $m,\bar m$.

Apply Lemma~\ref{lem:NN-approx-coefficient-g} with
\(L_F=CA_{m,\bar m}^{(g)}\) and \(B=CB_{m,\bar m}^{(g)}\). Let
\(\varepsilon_B=C S A_{m,\bar m}^{(g)}\sqrt{m_*}\,J^{-1/m_*}\), and choose
\(J\) large enough so that \(\varepsilon_B\le1\). Equivalently, we choose
\(N\asymp J^{1/m_*}\), so that \(N^{m_*}\asymp J\). Since the points
\(\vz_s\) and the networks \(q_s\) in Lemma~\ref{lem:NN-approx-coefficient-g}
depend only on the grid and the parameters \(L_F,B,\varepsilon_B\), but not on
the particular function \(F\), the same points and the same networks can be used
simultaneously for all derivative coefficient maps
\(F_{m,\gamma}^{f_{\bar m},\vx}\), \(|\gamma|\le \ell\).

Thus, for every \(|\gamma|\le \ell\),
\[
\left\|
F_{m,\gamma}^{f_{\bar m},\vx}
-
\sum_{s=1}^{J}
F_{m,\gamma}^{f_{\bar m},\vx}(\vz_s)
\mathcal B_s
\right\|_{L^\infty([-S,S]^{m_*})}
\le
\varepsilon_B,
\]
and the branch networks satisfy
\(\|\sum_{s=1}^{J}\mathcal B_s\|_{L^\infty([-S,S]^{m_*})}\le2\) and
\(\mathcal B_s\ge0\).

Define
\[
\mathfrak d_s(F_m^{f_{\bar m}},\vx)
:=
F_m^{f_{\bar m},\vx}(\vz_s)
=
\mathcal G(f_{\bar m},\mathcal P_m\vz_s)(\vx).
\]
Then, for every \(|\gamma|\le \ell\),
\(\partial_{\vx}^\gamma\mathfrak d_s(F_m^{f_{\bar m}},\vx)
=
F_{m,\gamma}^{f_{\bar m},\vx}(\vz_s)\). Taking
\(\vz=\mathcal D_m g\) and using \(g_m=\mathcal P_m\mathcal D_m g\), we obtain
\begin{align*}
&\left|
\partial_{\vx}^\gamma
\left[
\mathcal G(f_{\bar m},g_m)(\vx)
-
\sum_{s=1}^{J}
\mathfrak d_s(F_m^{f_{\bar m}},\vx)
\mathcal B_s(\mathcal D_m g)
\right]
\right|
\\
&\qquad=
\left|
F_{m,\gamma}^{f_{\bar m},\vx}(\mathcal D_m g)
-
\sum_{s=1}^{J}
F_{m,\gamma}^{f_{\bar m},\vx}(\vz_s)
\mathcal B_s(\mathcal D_m g)
\right|
\le
\varepsilon_B .
\end{align*}
Taking the supremum over \(\vx\in\Omega\) and all \(|\gamma|\le \ell\) gives
\[
\left\|
\mathcal G(f_{\bar m},g_m)
-
\sum_{s=1}^{J}
\mathfrak d_s(F_m^{f_{\bar m}},\cdot)
\mathcal B_s(\mathcal D_m g)
\right\|_{W^{\ell,\infty}(\Omega)}
\le
C\varepsilon_B.
\]
This proves \eqref{eq:step2-Well-bound}.

It remains to record the network parameters. By
Lemma~\ref{lem:NN-approx-coefficient-g}, we may choose \(p_B=\mathcal O(1)\) and
\[
L_B=\mathcal O(m_*\log(\delta_B^{-1})),
\qquad
K_B=\mathcal O(m_*\log(\delta_B^{-1})),
\]
where
\[
\delta_B
=
\frac{\varepsilon_B}{2Cm_*J B_{m,\bar m}^{(g)}}.
\]
Since
\(\varepsilon_B
=
C S A_{m,\bar m}^{(g)}\sqrt{m_*}J^{-1/m_*}\), we have
\[
\delta_B^{-1}
=
\mathcal O\!\left(
m_*B_{m,\bar m}^{(g)}
\left(S A_{m,\bar m}^{(g)}\sqrt{m_*}\right)^{-1}
J^{1+\frac1{m_*}}
\right).
\]
Thus
\[
L_B
=
\mathcal O\!\left(m_*^2\log m_*+m_*\log J\right),
\qquad
K_B
=
\mathcal O\!\left(m_*^2\log m_*+m_*\log J\right),
\]
where logarithmic factors from \(A_{m,\bar m}^{(g)}\) and
\(B_{m,\bar m}^{(g)}\) are absorbed into the \(m_*^2\log m_*\) term.

Finally, by Lemma~\ref{lem:NN-approx-coefficient-g}, the weights and biases
satisfy
\[
\kappa_B
=
\mathcal O\!\left(
\sqrt{m_*}B_{m,\bar m}^{(g)}
J^{1+\frac1{m_*}}
\right).
\]
This completes the proof.
\end{proof}

Here, we need the following lemma, which will be used in the final
trunk-approximation step.

\begin{lemma}[Spatial regularity of the coefficient functions]
\label{lem:spatial-Wninf-coeff-local}
Fix \(f_{\bar m}\). For \(\vx\in\Omega\), define
\[
F_m^{f_{\bar m},\vx}(\vz)
:=
\mathcal G(f_{\bar m},\mathcal P_m\vz)(\vx),
\qquad \vz\in[-S,S]^{m_*}.
\]
Let \(c_s^{f_{\bar m}}(\vx):=\mathfrak d_s(F_m^{f_{\bar m},\vx})\), where
\(\mathfrak d_s(F):=F(\vz_s)\) for some fixed
\(\vz_s\in[-S,S]^{m_*}\). Assume that
\[
K_m^{(n_3)}(f_{\bar m})
:=
\sup_{\vz\in[-S,S]^{m_*}}
\|\mathcal G(f_{\bar m},\mathcal P_m\vz)\|_{W^{n_3,\infty}(\Omega)}
<\infty .
\]
Then \(c_s^{f_{\bar m}}\in W^{n_3,\infty}(\Omega)\), and
\[
\|c_s^{f_{\bar m}}\|_{W^{n_3,\infty}(\Omega)}
\le
K_m^{(n_3)}(f_{\bar m}) .
\]
\end{lemma}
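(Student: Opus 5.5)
The statement is essentially an unwinding of definitions, so the plan is to identify \(c_s^{f_{\bar m}}\) with a single output function of \(\mathcal G\) and read off its regularity. By definition, \(c_s^{f_{\bar m}}(\vx)=\mathfrak d_s(F_m^{f_{\bar m},\vx})=F_m^{f_{\bar m},\vx}(\vz_s)=\mathcal G(f_{\bar m},\mathcal P_m\vz_s)(\vx)\) for every \(\vx\in\Omega\). The point \(\vz_s\in[-S,S]^{m_*}\) is fixed and independent of \(\vx\), so the map \(\vx\mapsto c_s^{f_{\bar m}}(\vx)\) coincides pointwise on \(\Omega\) with the single function \(u_s:=\mathcal G(f_{\bar m},\mathcal P_m\vz_s)\), i.e.\ the output of \(\mathcal G\) at the fixed input pair \((f_{\bar m},\mathcal P_m\vz_s)\). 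Here \(f_{\bar m}\) and \(\mathcal P_m\vz_s\) are polynomials, hence lie in every \(W^{k,\infty}(\Omega)\), so \(\mathcal G\) is defined at this pair.

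Next I would use that \(\mathcal G\) maps into \(W^{n_3,\infty}(\Omega)\) (the ambient codomain in the problem setting), so \(u_s\in W^{n_3,\infty}(\Omega)\). Since \(c_s^{f_{\bar m}}=u_s\) as functions on \(\Omega\), the weak derivatives coincide: \(\partial^{\valpha}c_s^{f_{\bar m}}=\partial^{\valpha}u_s\) for all \(|\valpha|\le n_3\). Therefore
\[
\|c_s^{f_{\bar m}}\|_{W^{n_3,\infty}(\Omega)}
=\|\mathcal G(f_{\bar m},\mathcal P_m\vz_s)\|_{W^{n_3,\infty}(\Omega)}
\le \sup_{\vz\in[-S,S]^{m_*}}\|\mathcal G(f_{\bar m},\mathcal P_m\vz)\|_{W^{n_3,\infty}(\Omega)}
=K_m^{(n_3)}(f_{\bar m}),
\]
using only \(\vz_s\in[-S,S]^{m_*}\). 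The finiteness assumption on \(K_m^{(n_3)}(f_{\bar m})\) makes the right-hand side meaningful.

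There is no real obstacle. The only point needing care is that the coefficient is a genuine function of \(\vx\) whose derivatives commute with the evaluation \(\mathfrak d_s\). This holds because \(\mathfrak d_s\) is evaluation at a fixed \(\vz_s\) that does not depend on \(\vx\), so differentiation in \(\vx\) acts only on the output function \(u_s\). This is the same observation used in the proof of Proposition~\ref{prop:step2-g-coefficient}, where \(\partial_{\vx}^\gamma\mathfrak d_s=F_{m,\gamma}^{f_{\bar m},\vx}(\vz_s)\). In a later application, one would bound \(K_m^{(n_3)}(f_{\bar m})\) via Assumption~\ref{assump:G-output-regularity} and Lemma~\ref{lem:lebesgue}, giving \(R^{(n_3)}_{\bar m,m}\), but that is not needed here.
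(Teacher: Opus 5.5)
Your proposal is correct and follows essentially the same route as the paper. Both arguments observe that, because \(\vz_s\) does not depend on \(\vx\), the coefficient \(c_s^{f_{\bar m}}\) coincides with the single output function \(\mathcal G(f_{\bar m},\mathcal P_m\vz_s)\), so their weak derivatives agree. Both then bound the \(W^{n_3,\infty}\)-norm by the supremum defining \(K_m^{(n_3)}(f_{\bar m})\), using only \(\vz_s\in[-S,S]^{m_*}\).
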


\begin{proof}
Since \(\mathfrak d_s\) is point evaluation at \(\vz_s\), we have
\[
c_s^{f_{\bar m}}(\vx)
=
F_m^{f_{\bar m},\vx}(\vz_s)
=
\mathcal G(f_{\bar m},\mathcal P_m\vz_s)(\vx).
\]
Thus, for every multi-index \(\beta\) with \(|\beta|\le n_3\),
\[
D_{\vx}^{\beta}c_s^{f_{\bar m}}(\vx)
=
D_{\vx}^{\beta}
\mathcal G(f_{\bar m},\mathcal P_m\vz_s)(\vx).
\]
Taking the \(L^\infty(\Omega)\)-norm and then the maximum over
\(|\beta|\le n_3\), we obtain
\[
\|c_s^{f_{\bar m}}\|_{W^{n_3,\infty}(\Omega)}
\le
\sup_{\vz\in[-S,S]^{m_*}}
\|\mathcal G(f_{\bar m},\mathcal P_m\vz)\|_{W^{n_3,\infty}(\Omega)}
=
K_m^{(n_3)}(f_{\bar m}).
\]
\end{proof}

\subsection{Proofs in Step 3}
\label{proof:step3}

The proof of Proposition~\ref{prop:step3-f-coefficient} is similar to that of
Proposition~\ref{prop:step2-g-coefficient}, and is therefore omitted. It remains
to record the spatial regularity of the coefficient functions \(e_{h,s}\), which
will be used in the trunk approximation step.

\begin{lemma}[Spatial regularity of \(e_{h,s}\)]
\label{lem:spatial-regularity-ehs}
Assume that
\[
K_{\bar m,m}^{(n_3)}
:=
\sup_{\substack{\vw\in[-S,S]^{\bar m_*}\\
\vz\in[-S,S]^{m_*}}}
\left\|
\mathcal G(\mathcal P_{\bar m}\vw,\mathcal P_m\vz)
\right\|_{W^{n_3,\infty}(\Omega)}
<\infty .
\]
Then, for every \(h=1,\ldots,H\) and \(s=1,\ldots,J\), we have
\(e_{h,s}\in W^{n_3,\infty}(\Omega)\), and
\[
\|e_{h,s}\|_{W^{n_3,\infty}(\Omega)}
\le
K_{\bar m,m}^{(n_3)} .
\]
\end{lemma}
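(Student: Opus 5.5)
The plan is to reduce everything to point evaluation. By the construction in Proposition~\ref{prop:step3-f-coefficient}, the coefficient is given explicitly as
\[
e_{h,s}(\vx)=\bar F_{\bar m,s}^{\vx}(\vw_h)=\mathfrak d_s\bigl(F_m^{\mathcal P_{\bar m}\vw_h,\vx}\bigr)=\mathcal G(\mathcal P_{\bar m}\vw_h,\mathcal P_m\vz_s)(\vx),
\]
where $\vw_h\in[-S,S]^{\bar m_*}$ and $\vz_s\in[-S,S]^{m_*}$ are the fixed points chosen in Steps~2 and~3. In particular, for fixed $(h,s)$, the function $e_{h,s}$ is exactly the output function of $\mathcal G$ at one specific admissible pair of inputs. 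It carries no further dependence on $f$, $g$, or the branch networks.

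The first step is to verify this identity carefully from the definitions. The map $\mathfrak d_s$ is evaluation at $\vz_s$, and $\bar F_{\bar m,s}^{\vx}$ is evaluation of $c_s^{\mathcal P_{\bar m}\vw}(\vx)$ at $\vw=\vw_h$. The inputs $\mathcal P_{\bar m}\vw_h$ and $\mathcal P_m\vz_s$ are polynomials, so they lie in every $W^{\beta_i,\infty}$. This means $\mathcal G$ is defined on them, in the larger product space on which the paper insists that $\mathcal G$ be defined.

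Once that is established, the argument repeats the proof of Lemma~\ref{lem:spatial-Wninf-coeff-local}. For each multi-index $\beta$ with $|\beta|\le n_3$, we have
\[
D^\beta_{\vx}e_{h,s}=D^\beta_{\vx}\,\mathcal G(\mathcal P_{\bar m}\vw_h,\mathcal P_m\vz_s).
\]
Taking the $L^\infty(\Omega)$ norm and then the maximum over $|\beta|\le n_3$ gives
\[
\|e_{h,s}\|_{W^{n_3,\infty}(\Omega)}=\|\mathcal G(\mathcal P_{\bar m}\vw_h,\mathcal P_m\vz_s)\|_{W^{n_3,\infty}(\Omega)}.
\]
Since $(\vw_h,\vz_s)\in[-S,S]^{\bar m_*}\times[-S,S]^{m_*}$, this quantity is at most the supremum $K_{\bar m,m}^{(n_3)}$. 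That supremum is finite by hypothesis, which also gives membership $e_{h,s}\in W^{n_3,\infty}(\Omega)$.

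There is no real obstacle here. The only point needing care is that the points $\vw_h$ and $\vz_s$ lie in the cubes over which the supremum is taken, and the constructions in Lemma~\ref{lem:NN-approx-coefficient-g} guarantee this.

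It is also worth remarking, for use in Step~4, that $K_{\bar m,m}^{(n_3)}\lesssim R^{(n_3)}_{\bar m,m}$. This follows from Assumption~\ref{assump:G-output-regularity} combined with the stability bound $\|\mathcal P_N\vw\|_{W^{\beta,\infty}}\lesssim S N^{2\beta}(\log N)^d$ from Proposition~\ref{prop:pseudo-spectral}.
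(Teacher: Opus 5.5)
Your proposal is correct and follows the paper's proof. Both arguments use the point-evaluation structure of the Step~2 and Step~3 functionals to identify \(e_{h,s}\) with the single output function \(\mathcal G(\mathcal P_{\bar m}\vw_h,\mathcal P_m\vz_s)\), differentiate in \(\vx\) up to order \(n_3\), and then bound the norm by \(K_{\bar m,m}^{(n_3)}\), since the sampling points lie in the cubes over which the supremum is taken.
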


\begin{proof}
By construction, both \(\mathfrak d_s\) and \(\mathfrak e_{h,s}\) are
point-evaluation functionals. Hence there exist points
\(\vz_s\in[-S,S]^{m_*}\) and \(\vw_{h,s}\in[-S,S]^{\bar m_*}\) such that
\(\mathfrak d_s(F)=F(\vz_s)\) and
\(\mathfrak e_{h,s}(\bar F)=\bar F(\vw_{h,s})\). Therefore,
\[
e_{h,s}(\vx)
=
\bar F_{\bar m,s}^{\vx}(\vw_{h,s})
=
F_m^{\mathcal P_{\bar m}\vw_{h,s},\vx}(\vz_s)
=
\mathcal G(\mathcal P_{\bar m}\vw_{h,s},\mathcal P_m\vz_s)(\vx).
\]
Thus, for every multi-index \(\beta\) with \(|\beta|\le n_3\),
\[
D_{\vx}^{\beta}e_{h,s}(\vx)
=
D_{\vx}^{\beta}
\mathcal G(\mathcal P_{\bar m}\vw_{h,s},\mathcal P_m\vz_s)(\vx).
\]
Taking the \(L^\infty(\Omega)\)-norm and then the maximum over
\(|\beta|\le n_3\), we obtain
\[
\|e_{h,s}\|_{W^{n_3,\infty}(\Omega)}
\le
K_{\bar m,m}^{(n_3)} .
\]
This completes the proof.
\end{proof}

\subsection{Proofs in Step 4}
\label{proof:step4}
For the last term, we approximate the remaining spatial coefficient functions
\(e_{h,s}(\vx)\) by trunk networks. The trunk basis is chosen with uniformly
bounded local overlap, which will be used to control the sum over the branch
indices.

\begin{proof}
For the reconstructed inputs, define
\[
K_{\bar m,m}^{(n_3)}
:=
\sup_{\substack{\vw\in[-S,S]^{\bar m_*}\\
\vz\in[-S,S]^{m_*}}}
\left\|
\mathcal G(\mathcal P_{\bar m}\vw,\mathcal P_m\vz)
\right\|_{W^{n_3,\infty}(\Omega)} .
\]
By the Sobolev stability of the reconstruction operators, for
\(\vw\in[-S,S]^{\bar m_*}\) and \(\vz\in[-S,S]^{m_*}\), we have
\[
\|\mathcal P_{\bar m}\vw\|_{W^{\beta_1,\infty}(\Omega)}
\le
C S\bar m^{2\beta_1}(\log \bar m)^d,
\qquad
\|\mathcal P_m\vz\|_{W^{\beta_2,\infty}(\Omega)}
\le
C S m^{2\beta_2}(\log m)^d.
\]
Therefore, Assumption~\ref{assump:G-output-regularity} gives
\[
K_{\bar m,m}^{(n_3)}
\le
C
\left(
1+S\bar m^{2\beta_1}(\log \bar m)^d
\right)
\left(
1+S m^{2\beta_2}(\log m)^d
\right)
=
C R_{\bar m,m}^{(n_3)} .
\]

By Lemma~\ref{lem:spatial-regularity-ehs}, for every
\(h=1,\ldots,H\) and \(s=1,\ldots,J\), we have
\(e_{h,s}\in W^{n_3,\infty}(\Omega)\) and
\(\|e_{h,s}\|_{W^{n_3,\infty}(\Omega)}
\le K_{\bar m,m}^{(n_3)}\). Applying
Proposition~\ref{prop:relu-sobolev-local-basis-Well} to \(e_{h,s}\), with
\(n=n_3\), \(R=K_{\bar m,m}^{(n_3)}\), and the chosen \(\ell\in\{0,1\}\), gives
coefficients \(e_{h,p,s}\in\mathbb R\) and trunk networks
\(\mathcal T_p\in
\mathcal F_{\mathrm{NN}}(d,1,L_T,p_T,K_T,\kappa_T,M_T)\),
\(p=1,\ldots,P\), such that
\[
\left\|
e_{h,s}
-
\sum_{p=1}^{P}
e_{h,p,s}\mathcal T_p
\right\|_{W^{\ell,\infty}(\Omega)}
\le
C K_{\bar m,m}^{(n_3)}
P^{-\frac{n_3-\ell}{d}},
\]
and \(|e_{h,p,s}|\le C K_{\bar m,m}^{(n_3)}\). Since
\(K_{\bar m,m}^{(n_3)}\le C R_{\bar m,m}^{(n_3)}\), this proves
\eqref{eq:trunk-approx-Well} and the stated coefficient bound.

The same application of Proposition~\ref{prop:relu-sobolev-local-basis-Well}
also gives the local-overlap bounds
\[
\sup_{\vx\in\Omega}
\sum_{p=1}^{P}
|\mathcal T_p(\vx)|
\le C,
\]
and, when \(\ell=1\),
\[
\sup_{\vx\in\Omega}
\sum_{p=1}^{P}
\sum_{r=1}^{d}
|\partial_{x_r}\mathcal T_p(\vx)|
\le
C P^{1/d}.
\]
The bounds \(L_T\le C\log P\), \(p_T\le C\), \(K_T\le C\log P\),
\(\kappa_T\le C P^{(n_3+d-\ell)/d}\), and \(M_T\le C\) also follow directly
from Proposition~\ref{prop:relu-sobolev-local-basis-Well}. This completes the
proof.
\end{proof}

\subsection{Proof of Corollaries \ref{cor:balanced-main-final-functionL} and \ref{cor:more-balanced-main-final-functionL}}
\label{proof:corollary}

The proof of Corollary~\ref{cor:balanced-main-final-functionL} is a special
case of Corollary~\ref{cor:more-balanced-main-final-functionL}. Therefore, it
suffices to prove the latter.

\begin{proof}[Proof of Corollary~\ref{cor:more-balanced-main-final-functionL}]
Applying the general pseudo-spectral projection estimate in Sobolev norms and
following the same shared branch--trunk approximation argument as in the
two-input case gives \eqref{eq:most-general-shared}. We now consider the
special case \(\alpha_i=\beta_i=0\), \(i=1,\ldots,\lambda\). Then
\eqref{eq:most-general-shared} reduces to
\begin{align}
E_{\mathrm{total}}
\lesssim\;&
\sum_{i=1}^{\lambda}
M_i^{-\frac{n_i-2k_i}{d_i}}(\log M_i)^{d_i}
+
\sum_{i=1}^{\lambda}
S L_i
\sqrt{M_i}\,
M_i^{\frac{2k_i}{d_i}}
(\log M_i)^{d_i}
J_i^{-1/M_i}
\notag\\
&+
P^{-\frac{n_{\lambda+1}-\ell}{d_{\lambda+1}}}
\prod_{i=1}^{\lambda}
(1+S(\log M_i)^{d_i}).
\label{eq:lambda-error-alpha-beta-zero-shared}
\end{align}
Set \(A_i:=(n_i-2k_i)/d_i\), \(B_i:=2k_i/d_i\), and
\(\nu_\ell:=(n_{\lambda+1}-\ell)/d_{\lambda+1}\). Since \(n_i>2k_i\), we have
\(A_i>0\).

We first balance the \(i\)-th discretization term with the corresponding branch
approximation term:
\[
M_i^{-A_i}(\log M_i)^{d_i}
\asymp
S L_i\sqrt{M_i}\,M_i^{B_i}(\log M_i)^{d_i}J_i^{-1/M_i}.
\]
Canceling the common logarithmic factor gives
\(J_i^{1/M_i}\asymp S L_iM_i^{A_i+B_i+1/2}
=S L_iM_i^{n_i/d_i+1/2}\). Hence
\[
\log J_i
\asymp
M_i\log M_i,
\qquad
i=1,\ldots,\lambda,
\]
up to constants depending only on \(d_i,n_i,k_i,S,L_i\).

Let \(0<\varepsilon<1\) be the target accuracy. Choose \(M_i\) so that
\(M_i^{-A_i}(\log M_i)^{d_i}\asymp\varepsilon\). Equivalently, up to logarithmic
factors,
\[
M_i
\asymp
\varepsilon^{-q_i}
\left(\log\frac1\varepsilon\right)^{d_iq_i},
\qquad
q_i:=\frac1{A_i}=\frac{d_i}{n_i-2k_i}.
\]
Consequently,
\[
\log J_i
\lesssim
\varepsilon^{-q_i}
\left(\log\frac1\varepsilon\right)^{d_iq_i+1}.
\]
Define
\(Q_{\max}:=\max_{1\le i\le\lambda}q_i
=\max_{1\le i\le\lambda}d_i/(n_i-2k_i)\). Since \(\lambda\) is fixed,
\[
\sum_{i=1}^{\lambda}\log J_i
\lesssim
\varepsilon^{-Q_{\max}}
\left(\log\frac1\varepsilon\right)^C .
\]

It remains to choose \(P\). Since \(\log M_i\asymp\log(1/\varepsilon)\), we have
\[
\prod_{i=1}^{\lambda}(1+S(\log M_i)^{d_i})
\lesssim
C\left(\log\frac1\varepsilon\right)^{\sum_{i=1}^{\lambda}d_i}.
\]
Thus it is enough to impose
\[
P^{-\nu_\ell}
\left(\log\frac1\varepsilon\right)^{\sum_{i=1}^{\lambda}d_i}
\lesssim
\varepsilon.
\]
For instance, we may take
\[
P
\asymp
\varepsilon^{-1/\nu_\ell}
\left(\log\frac1\varepsilon\right)^{
\frac{1}{\nu_\ell}\sum_{i=1}^{\lambda}d_i
}.
\]
With these choices of \(M_i,J_i\), and \(P\), all terms in
\eqref{eq:lambda-error-alpha-beta-zero-shared} are bounded by \(C\varepsilon\).
Hence \(E_{\mathrm{total}}\lesssim\varepsilon\).

We now express \(\varepsilon\) in terms of \(N_{\mathrm{tot}}\). For the shared
architecture \eqref{eq:lambda-shared-architecture}, there are \(J_i\) branch
networks for the \(i\)-th input, \(P\) trunk networks, and
\(P\prod_{i=1}^{\lambda}J_i\) scalar coefficients. Hence
\[
N_{\mathrm{tot}}
\lesssim
\sum_{i=1}^{\lambda}J_iK_i
+
PK_T
+
P\prod_{i=1}^{\lambda}J_i,
\]
where \(K_i\) and \(K_T\) denote the sizes of the corresponding branch and trunk
networks. These factors grow at most polynomially in \(M_i,\log J_i\), and
\(\log P\), and therefore only contribute lower-order logarithmic factors.
Thus
\[
\log N_{\mathrm{tot}}
\lesssim
\sum_{i=1}^{\lambda}\log J_i+\log P
+
\text{lower-order logarithmic factors}
\lesssim
\varepsilon^{-Q_{\max}}
\left(\log\frac1\varepsilon\right)^C .
\]
Inverting this relation gives, up to logarithmic factors,
\[
\varepsilon
\lesssim
\left(
\frac{\log N_{\mathrm{tot}}}{\log\log N_{\mathrm{tot}}}
\right)^{-1/Q_{\max}}.
\]
Moreover, since \(\log(1/\varepsilon)\asymp\log\log N_{\mathrm{tot}}\), the
explicit logarithmic factor from the trunk term satisfies
\[
\prod_{i=1}^{\lambda}(1+S(\log M_i)^{d_i})
\lesssim
C(\log\log N_{\mathrm{tot}})^{\sum_{i=1}^{\lambda}d_i}.
\]
Therefore,
\[
E_{\mathrm{total}}
\lesssim
\left(
\frac{\log N_{\mathrm{tot}}}{\log\log N_{\mathrm{tot}}}
\right)^{-1/Q_{\max}}
(\log\log N_{\mathrm{tot}})^{\sum_{i=1}^{\lambda}d_i}.
\]
Since
\(1/Q_{\max}
=
\min_{1\le i\le\lambda}(n_i-2k_i)/d_i\), this can be written as
\[
E_{\mathrm{total}}
\lesssim
\left(
\frac{\log N_{\mathrm{tot}}}{\log\log N_{\mathrm{tot}}}
\right)^{
-\min_{1\le i\le\lambda}\frac{n_i-2k_i}{d_i}
}
(\log\log N_{\mathrm{tot}})^{\sum_{i=1}^{\lambda}d_i}.
\]
This completes the proof.
\end{proof}

\section{Proof for Generalization error}
\subsection{Proof of Lemma \ref{lem:T1-sobolev-noise}}
\label{proof:lem:T1-sobolev-noise}
\begin{proof}[Proof of Lemma \ref{lem:T1-sobolev-noise}]
Condition on the training inputs and locations. Then the empirical sample set
is fixed. Applying the same finite-net and sub-Gaussian maximal inequality
argument as in \cite[Lemma~3]{liu2024neural} gives the desired estimate with
the covering number taken with respect to the empirical metric on this fixed
sample. Since a \(d_{\mathcal S,\ell}\)-cover also controls the empirical
Sobolev norm \(\|\cdot\|_{n,\ell}\), and since
\(\mathcal N(\eta,\mathcal F_G,d_{\mathcal S,\ell})
\le
\mathcal N_{\rm emp}(\eta,\mathcal F_G)\), the same estimate holds with
\(\mathcal N_{\rm emp}(\eta,\mathcal F_G)\). Taking expectation over the
training samples completes the proof.
\end{proof}

\subsection{Proof of Lemma \ref{lem:T2-sobolev}}
\label{proof:lem:T2-sobolev}
\begin{proof}[Proof of Lemma \ref{lem:T2-sobolev}]
The proof follows the ghost-sample and finite-covering argument in
\cite[Lemma~4]{liu2024neural}. For a more detailed presentation of the same
empirical-process argument, see also \cite[Proof of Lemma~29]{liu2022deep}.
\end{proof}

\subsection{Proof of Lemma \ref{lem:pdim-derivative-class}}
\label{proof:lem:pdim-derivative-class}
\begin{proof}[Proof of Lemma \ref{lem:pdim-derivative-class}]
The proof follows the same argument as
\cite[Proposition~5]{yang2025deeponet}. We only indicate the minor difference.
In \cite[Proposition~5]{yang2025deeponet}, the pseudo-dimension estimate for
the trunk network contains a cubic dependence on the depth because the trunk
network uses the squared ReLU activation. In the present setting, the trunk
network uses the standard ReLU activation, and therefore the corresponding
depth dependence is quadratic.

Applying the same counting argument to the \(g\)-branch networks, the
\(f\)-branch networks, the trunk networks, and the scalar coefficients
\(e_{h,p,s}\), we obtain
\[
\operatorname{Pdim}(\mathcal F_G^\gamma)
\le
C JHP\left(
L_B^2p_B^2
+
L_E^2p_E^2
+
L_T^2p_T^2
\right).
\]
A more careful inspection of the proof of
\cite[Proposition~5]{yang2025deeponet} yields the sharper estimate
\[
\operatorname{Pdim}(\mathcal F_G^\gamma)
\le
C\left(
J L_B^2p_B^2
+
H L_E^2p_E^2
+
P L_T^2p_T^2
+
JHP
\right).
\]
However, this sharper estimate is not needed for the final balanced rate. The
coarser bound above already gives stochastic terms that can be dominated by the
approximation error after choosing \(H\) as a sufficiently small power of the
outer sample size. Therefore, for readability, we use the simpler bound.
\end{proof}
\end{document}